\title[Learning Graph]{On Learning Graphs with Edge-Detecting Queries}
\def\temp{dvips.def}
\def\Ginclude@graphics#1{\def\temp{#1}---image \expandafter\strip@prefix\meaning\temp---}
\newcommand{\ignore}[1]{}
\newcommand{\poly}{{\rm poly}}
\newcommand{\NO}{{\rm \mbox{``NO''}}}
\newcommand{\E}{{\bf E}}
\newtheorem{theorem}{Theorem}
\newtheorem{lem}[theorem]{lem}
\newtheorem{corollary}[theorem]{Corollary}
\newtheorem{fact}{Fact}
\declaretheorem[name=Lemma]{lem}
\begin{document}

\maketitle

\begin{abstract}
We consider the problem of learning a general graph $G=(V,E)$ using edge-detecting queries, where the number of vertices $|V|=n$ is given to the learner. The information theoretic lower bound gives $m\log n$ for the number of queries, where $m=|E|$ is the number of edges. 
In case the number of edges $m$ is also given to the learner, Angluin-Chen's Las Vegas algorithm \cite{AC08} runs in $4$ rounds and detects the edges in $O(m\log n)$ queries.
In the other harder case where the number of edges $m$ is unknown, their algorithm runs in $5$ rounds and asks $O(m\log n+\sqrt{m}\log^2 n)$ queries.
There have been two open problems: \emph{(i)} can the number of queries be reduced to $O(m\log n)$ in the second case, and, \emph{(ii)} can the number of rounds be reduced without substantially increasing the number of queries (in both cases).

For the first open problem (when $m$ is unknown) we give two algorithms. The first is an $O(1)$-round Las Vegas algorithm that asks $m\log n+\sqrt{m}(\log^{[k]}n)\log n$ queries for any constant $k$ where $\log^{[k]}n=\log \stackrel{k}{\cdots} \log n$. The second is an $O(\log^*n)$-round Las Vegas algorithm that asks $O(m\log n)$ queries. This solves the first open problem for any practical $n$, for example, $n<2^{65536}$. We also show that no deterministic algorithm can solve this problem in a constant number of rounds.

To solve the second problem we study the case when $m$ is known. We first show that any non-adaptive Monte Carlo algorithm (one-round) must ask at least $\Omega(m^2\log n)$ queries, and any two-round Las Vegas algorithm must ask at least $m^{4/3-o(1)}\log n$ queries on average. We then give two two-round Monte Carlo algorithms, the first asks $O(m^{4/3}\log n)$ queries for any $n$ and $m$, and the second asks $O(m\log n)$ queries when $n>2^m$. Finally, we give a $3$-round Monte Carlo algorithm that asks $O(m\log n)$ queries for any $n$ and $m$.
\end{abstract}

\begin{keywords}
Graphs Learning, Group Testing, Edge-Detecting Queries, Monte Carlo Algorithm, Las Vegas Algorithm.
\end{keywords}

\section{Introduction}\label{Int}
We consider the problem of learning a general graph $G=(V,E)$ using edge-detecting queries.
In edge-detecting queries the learning algorithm asks whether a subset of vertices $Q\subseteq V$ contains an edge in the graph $G$. That is, whether there are $u,v\in Q$ such that $\{u,v\}\in E$. The learner knows the number of vertices $|V|=n$. 

Graph learning is a well-studied problem. It has been studied for general graphs, \cite{AC08}  and also for specific graph families (i.e. matching, stars, cliques and other) see \cite{AA05,ABKRS04}.
This problem has also been generalized to learning a hyper-graph (where each edge consists of two or more vertices) see \cite{ABM15, ABM14, AC06}.
The motivation behind studying some graph families relevant to the problem above, was its various applications in different areas such molecular biology, chemistry and networks \cite{BGK05,AAR10}.
For example, the general graph case is motivate by problems from biology and chemistry where, given a set of molecules (chemicals), we need pairs that react with each other. In this case,
the vertices correspond to the molecules (chemicals), the edges to the reactions, and the queries to
experiments of putting a set of molecules (chemicals) together in a test tube and determining whether a reaction occurs. When multiple molecules (chemicals) are combined in one test tube, a reaction is detectable if and only if at least one pair of the molecules (chemicals) in the tube react. The task is to identify which pairs react using as few experiments as possible.
One more example of a problem encountered by molecular biologists, is the problem of finding a hidden match, when applying multiplex PCR in order to close the gaps left in a DNA strand after shotgun sequencing. See \cite{AA05,ABKRS04} and there references for more details.

For a general graph, the information theoretic lower bound gives $m\log n$ for the number of queries where $m=|E|$ is the number of edges. 
In case the number of edges $m$ is also given to the learner, Angluin-Chen's Las Vegas algorithm \cite{AC08} runs in $4$ rounds and detects the edges in $O(m\log n)$ queries.
In the other harder case where the number of edges $m$ is unknown, their algorithm runs in $5$ rounds and asks $O(m\log n+\sqrt{m}\log^2 n)$ queries.
There have been two open problems: \emph{(i)} can the number of queries be reduced to $O(m\log n)$ in the second case, and, \emph{(ii)} can the number of rounds be reduced without substantially increasing the number of queries (in both cases).

For the first open problem (when $m$ in unknown) we give two algorithms. The first is an $O(1)$-round algorithm that asks $m\log n+\sqrt{m}(\log^{[k]}n)\log n$ queries for any constant $k$. Here $\log^{[k]}n=\log \stackrel{k}{\cdots} \log n$. The second is an $O(\log^*n)$-round algorithm that asks $O(m\log n)$ queries. This solves the first open problem for any practical $n$, for example $n<2^{65536}$. We also show that no deterministic algorithm can solve this problem in a constant number of rounds.

To solve the second problem we study the problem when $m$ is known. We first show that any non-adaptive (one-round) Monte Carlo algorithm must ask at least $\Omega(m^2\log n)$ queries. We then give two two-round Monte Carlo algorithms, the first asks $O(m^{4/3}\log n)$ queries for any $n$ and $m$, and the second asks $O(m\log n)$ queries when $n>2^m$. Then we give a $3$-round Monte Carlo algorithm that asks $O(m\log n)$ queries for any $n$ and $m$. We also show that any two-round Las Vegas algorithm must ask at least $m^{4/3}$ queries on average and any deterministic two-round algorithm must ask at least $\Omega(m^2\log n)$ queries. Finally, we give a four-round deterministic algorithm that asks $m^{2+\epsilon}\log n$ queries for any constant $\epsilon$. The question whether there is an $O(1)$-round deterministic algorithm that asks $O(m\log n)$ queries remains open.

Our results are summarized in the following two tables
\subsection{Results For Unknown \texorpdfstring{$m$}{}}

\centerline{
\begin{tabular}{l|c|c|c}
\hline
& Lower Bound & Upper Bound &Poly. Time\\
\hline \hline
LV\&MC Randomized & & &\\
$O(1)$-Rounds&  &$m\log n$ &$m\log n+$ \\
&$m\log n$  &$\sqrt{m}(\log^{[k]}n)\log n$ &$\sqrt{m}(\log^{[k]}n)\log n$ \\
\hline \hline
LV\&MC Randomized & & &\\
$O(\log^* n)$-Round&$m\log n$  &$m\log n$ &$m\log n$ \\
\hline
\hline
\end{tabular}}
\newpage

\subsection{Results For Known \texorpdfstring{$m$}{}}
\centerline{
\begin{tabular}{l|c|c|c}
\hline
& Lower Bound & Upper Bound &Poly. Time\\
\hline \hline
{\bf Non-Adaptive} &Thm.\ref{LBNAMC} &$\Rightarrow$ &Thm.~\ref{UBNAMC}\\ \cline{2-4}
MC Randomized&${m^2}\log n$ &$m^2\log n$ &$m^2\log n$ \\
\hline
 {\bf Non-Adaptive} &\cite{DR82} &$\Rightarrow$ &\cite{B15}\\ \cline{2-4}
Deter. \& LV Rand.&$\frac{m^3}{\log m}\log n$ &${m^3}\log n$ &${m^3}\log n$\\
\hline \hline
{\bf Two Rounds} &&$\Rightarrow$ &Thm.\ref{MCTR}\&\ref{MCTRN}\\ \cline{3-4}
MC Randomized&OPEN &$m^{4/3}\log n$ &$m^{4/3}\log n$ \\
\cline{2-4}
\ \ \ \ \ \ \ \ \ $n\to\infty$&$m\log n$ &$m\log n$ &$m\log n$ \\
\hline
{\bf Two Rounds} & &$\Rightarrow$&Thm.~\ref{UBNAMC}\\ \cline{3-4}
LV Randomized&OPEN &$m^2\log n$ &$m^2\log n$ \\
\hline
{\bf Two Rounds} &Thm.\ref{UBDATR} &Thm.\ref{UBDTR} &\\ \cline{2-3}
Deterministic&{$\frac{m^2}{\log m}\log n$} &{$m^2\log n$}&OPEN\\
\hline \hline
{\bf Three Rounds} &I.T. &$\Rightarrow$ &Thm. \ref{MCTR2}\\ \cline{2-4}
MC Randomized&{${m}\log n$} &{${m}\log n$} & {${m}\log n$}\\
\hline
{\bf Three Rounds} &I.T. &$\Rightarrow$ &Thm.\ref{MCTR}\&\ref{LVTRN}\\  \cline{2-4}
LV Randomized&$m\log n$ &$m^{4/3}\log n$ &$m^{4/3}\log n$ \\ \cline{2-4}
\ \ \ \ \ \ \ \ \ $n\to\infty$&$m\log n$ &$m\log n$ &$m\log n$ \\
\hline
{\bf Three Rounds} & & &\\
Deterministic&OPEN &OPEN&OPEN\\
\hline
\hline
{\bf Four Rounds} &I.T. &$\Rightarrow$ &Thm. \ref{MCTR2}\\ \cline{3-4}
LV Randomized&{${m}\log n$} &{${m}\log n$} & {${m}\log n$}\\
\hline\hline
{\bf Five Rounds} & &$\Rightarrow$&Thm.\ref{DDDDD} \\ \cline{3-4}
Deterministic&OPEN &$m^{2+\epsilon}\log n$&$m^{2+\epsilon}\log n$\\
\hline \hline
\end{tabular}}
$$ $$
\section{Definitions and Preliminary Results}
Let $G=(V,E)$ be a simple (contains no loop or multiple edges) undirected graph with $|V|=n$ vertices and $|E|\le m$ edges. We call $G$ the {\it target $m$-graph}. The {\it learner} knows $n$. We study both cases where $m$ is known to the learner and when it is not. The learner can ask an {\it edge-detecting queries} (or just a {\it query}) $Q\subseteq V$ to an {\it oracle} ${\cal O}_G$. The answer to the query $Q$ is ${\cal O}_G(Q)=$``YES'' if there are $u,v\in Q$ such that $\{u,v\}\in E$ and ``NO'' otherwise.

We say that a non-simple graph $G=(V,E)$ is {\it $m$-Loop} if the graph contains at most $m$ loops. That is, the graph contains at most $m$ edges where each edge connects a vertex to itself. Learning $m$-Loop is equivalent to the problem of group testing~\cite{DH00,DH06}.

For a graph $G=(V,E)$ and a subset of vertices $V'\subseteq V$ we define the {\it neighbours of $V'$}, $\Gamma_G(V')$, as the set of all vertices $u$ in $V\backslash V'$ such that there is $v\in V'$ where $\{u,v\}\in E$. When $V'=\{v\}$ then we write $\Gamma_G(v)$ and call it the neighbours of $v$. We say that $V'$ is an independent set if there are no edges between the vertices in~$V'$. A set of vertices $I\subset V$ is called {\it independent set} in $G=(V,E)$ if $(I\times I)\cap E=\emptyset$.

We will denote by $[n]:=\{1,2,\ldots,n\}$ and $[m,n]=\{m,m+1,\ldots,n\}$.

\subsection{Preliminary Results for Randomized Algorithms}
The following lemma is from~\cite{AC08}:
\begin{restatable}{lem}{flem}\label{onequery} Let $G=(V,E)$ be the target graph with $n$ vertices and $m$ edges.
Let $Q\subseteq V$ be a random query where for each vertex $i\in V$, $i$ is included in $Q$ independently with probability~$p$. Let $\{u,v\}\not\in E$. Then
\begin{enumerate}
 \item If $|\Gamma_G(\{u,v\})|\le  r$, then with probability at least $p^2(1-rp-mp^2)$, $u,v\in Q$ and ${\cal O}_G(Q)=$``NO''.
 \item If $|\Gamma_G(\{u,v\})|\ge  r$, then with probability at most $p^2(1-p)^r$, $u,v\in Q$ and ${\cal O}_G(Q)=$``NO''.
\end{enumerate}
\end{restatable}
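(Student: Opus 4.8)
The plan is to handle both parts by first conditioning on the event $\{u,v\}\subseteq Q$, which has probability exactly $p^2$ because the inclusions of $u$ and of $v$ are independent, and then estimating the conditional probability that ${\cal O}_G(Q)=\NO$. Recall that, conditioned on $u,v\in Q$, the query answers $\NO$ precisely when no edge of $E$ has both of its endpoints in $Q$.

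For part (1) I would upper bound the conditional probability of a $\mbox{``YES''}$ answer by a union bound, splitting $E$ into two groups. The first group consists of the edges incident to $u$ or to $v$; since $\{u,v\}\notin E$, every such edge has its other endpoint in $\Gamma_G(\{u,v\})$, so (as $u,v$ are already in $Q$) it lies inside $Q$ only if that endpoint is drawn into $Q$, an event of probability $p$; since there are at most $|\Gamma_G(\{u,v\})|\le r$ such endpoints, this group contributes at most $rp$. The second group consists of the edges $\{a,b\}$ with $a,b\notin\{u,v\}$; each is contained in $Q$ with probability $p^2$, and there are at most $m$ of them in total, contributing at most $mp^2$. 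Hence, conditioned on $u,v\in Q$, the probability of a $\mbox{``YES''}$ answer is at most $rp+mp^2$, so the probability of $\NO$ is at least $1-rp-mp^2$; multiplying by $p^2$ gives the claimed bound.

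For part (2) I would replace inclusion--exclusion by a one-sided containment: if $u,v\in Q$ and some $w\in\Gamma_G(\{u,v\})$ also lies in $Q$, then $w$ is joined by an edge to $u$ or to $v$, and both endpoints of that edge are in $Q$, so ${\cal O}_G(Q)=\mbox{``YES''}$. Therefore the event ``$u,v\in Q$ and ${\cal O}_G(Q)=\NO$'' is contained in the event ``$u,v\in Q$ and $w\notin Q$ for every $w\in\Gamma_G(\{u,v\})$''. The vertices $u$, $v$, and those of $\Gamma_G(\{u,v\})$ are pairwise distinct, so by independence the latter event has probability $p^2(1-p)^{|\Gamma_G(\{u,v\})|}$, which is at most $p^2(1-p)^r$ since $|\Gamma_G(\{u,v\})|\ge r$.

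There is no real obstacle here; the only points deserving care are bookkeeping. In the union bound for part (1) one must count the \emph{vertices} of $\Gamma_G(\{u,v\})$ rather than the edges incident to $u$ or $v$ (a single neighbour may be the far endpoint of up to two such edges, giving a double count), and the two groups of edges exhaust $E$ only because $\{u,v\}\notin E$. In part (2) the containment is the crucial (and slightly asymmetric) observation, and the distinctness of the vertices $u,v$ from all of $\Gamma_G(\{u,v\})$ — again using $\{u,v\}\notin E$ — is what makes the independent-coordinate computation legitimate.
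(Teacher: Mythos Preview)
Your proof is correct and follows essentially the same route as the paper: condition on $u,v\in Q$ (probability $p^2$), then for part~(1) use a union bound over the neighbours $\Gamma_G(\{u,v\})$ and over the edges disjoint from $\{u,v\}$, and for part~(2) use the containment ``$\NO$ implies no neighbour of $\{u,v\}$ is in $Q$'' together with independence. The only cosmetic difference is that the paper keeps track of $r'=|\Gamma_G(\{u,v\})|$ and bounds the second group by $(m-r')p^2$ before relaxing to $mp^2$, whereas you go straight to $mp^2$; the final bounds are identical.
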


\begin{proof} Let $\Gamma_G(\{u,v\})=\{w_1,\ldots, w_{r'}\}$. Let $\{u_1,v_1\},\ldots,\{u_{m-r'},v_{m-r'}\}$ be all the edges of $G$ that none of their endpoints are $u$ or $v$. When $r'\le r$,
\begin{eqnarray*}
\Pr[u,v\in Q\ \wedge\ {\cal O}_G(Q)=\NO]&=& \Pr[u,v\in Q]\Pr[{\cal O}_G(Q)=\NO\ |\ u,v\in Q]\\
&=& p^2\Pr[(\forall i)w_i\not\in Q\ \wedge\ (\forall j)\{u_j,v_j\}\not\subseteq Q]\\
&=& p^2(1-\Pr[(\exists i)w_i\in Q\ \vee\  (\exists j)\{u_j,v_j\}\subseteq Q])\\
&\ge& p^2(1-r'p-(m-r')p^2)\\
&\ge& p^2(1-rp-mp^2).
\end{eqnarray*}
and when $r'\ge r$,
\begin{eqnarray*}
\Pr[u,v\in Q\ \wedge\ {\cal O}_G(Q)=\NO]&=&  p^2\Pr[(\forall i)w_i\not\in Q\ \wedge\ (\forall j)\{u_j,v_j\}\not\subset Q]\\
&\le& p^2\Pr[(\forall i)w_i\not\in Q]\\
&=& p^2(1-p)^{r'}\le p^2(1-p)^r.
\end{eqnarray*}
\end{proof}

\begin{figure}[t!]
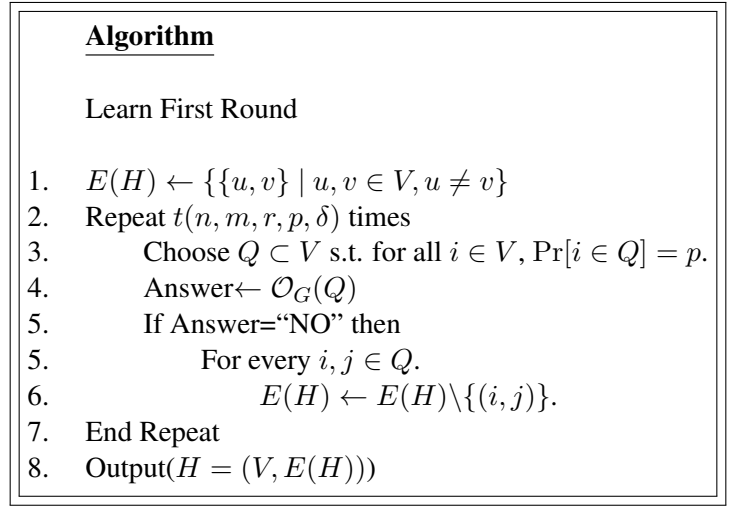

  \begin{center}
   \fbox{\fbox{\begin{minipage}{28em}
  \begin{tabbing}
  xxxx\=xxxx\=xxxx\=xxxx\= \kill
  \>\underline{{\bf Algorithm}}\\ \\
  \> Learn First Round\\ \\
  1. \>$E(H)\gets \{\{u,v\}\ |\ u,v\in V, u\not=v\}$\\
  2. \> Repeat $t(n,m,r,p,\delta)$ times\\
  3. \>\> Choose $Q\subset V$ s.t. for all $i\in V$, $\Pr[i\in Q]=p$.\\
  4. \>\> Answer$\gets{\cal O}_G(Q)$\\
  5. \>\> If Answer=``NO'' then\\
  5. \>\> \> For every $i,j\in Q$.\\
  6. \>\> \> \> $E(H)\gets E(H)\backslash \{(i,j)\}$.\\
  7. \> End Repeat\\
  8. \> Output($H=(V,E(H))$)
  \end{tabbing}\end{minipage}}}
  \end{center}
	\caption{First Round in the Algorithm.}
	\label{Alg1}
	\end{figure}

\begin{restatable}{lem}{slem}\label{main}
Let $G=(V,E)$ be the target graph with $n$ vertices and $m$ edges. Consider the algorithm in Figure~\ref{Alg1} with
$$t:=t(n,m,r,p,\delta)=\frac{2\ln n+\ln(1/\delta)}{p^2(1-rp-mp^2)}.$$ Then for $E_r=\{\{u,v\}:|\Gamma_G(\{u,v\})|>r\}$,
\begin{enumerate}
\item $E\subseteq E(H).$
\item With probability at least $1-\delta$, $E(H)\backslash (E\cup E_r)=\emptyset$, i.e., all the edges $\{u,v\}\in E(H)\backslash E$ satisfies $|\Gamma_G(\{u,v\})|>r$.
\item $\E\left[|E(H)\backslash (E\cup E_r)|\right]\le \delta$.
\end{enumerate}
\end{restatable}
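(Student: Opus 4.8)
The plan is to dispatch the three claims in order, with the second and third sharing essentially all of the probabilistic work.

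\textbf{Claim 1 (soundness).} This I would get directly from the shape of the algorithm. The set $E(H)$ is initialized to the complete graph and only ever shrinks, and a pair $\{i,j\}$ is removed (lines 5--6) only when some queried set $Q$ with $i,j\in Q$ answers \NO. But if $\{u,v\}\in E$, then by definition of the oracle every query $Q$ with $u,v\in Q$ satisfies ${\cal O}_G(Q)=$``YES'', so $\{u,v\}$ is never deleted and hence $E\subseteq E(H)$.

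\textbf{Claims 2 and 3.} Fix a pair $\{u,v\}\notin E$ with $|\Gamma_G(\{u,v\})|\le r$; these are exactly the pairs that must be removed in order for $E(H)\backslash(E\cup E_r)$ to be empty. Consider one iteration of the Repeat loop. By part~(1) of Lemma~\ref{onequery}, the random query $Q$ chosen in that iteration satisfies $\Pr[u,v\in Q\ \wedge\ {\cal O}_G(Q)=\NO]\ge p^2(1-rp-mp^2)$, and on that event the pair $\{u,v\}$ is deleted from $E(H)$. Thus $\{u,v\}$ survives a single iteration with probability at most $1-p^2(1-rp-mp^2)$. Since the $t$ iterations draw their queries independently, $\{u,v\}$ survives all $t$ iterations (equivalently, $\{u,v\}\in E(H)$ at the end) with probability at most
$$\left(1-p^2(1-rp-mp^2)\right)^{t}\ \le\ e^{-t\,p^2(1-rp-mp^2)}\ =\ e^{-(2\ln n+\ln(1/\delta))}\ =\ \frac{\delta}{n^2},$$
using $1-x\le e^{-x}$ and the prescribed value $t=\frac{2\ln n+\ln(1/\delta)}{p^2(1-rp-mp^2)}$. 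Now $E(H)\backslash(E\cup E_r)$ is precisely the set of such pairs that survive, and there are at most $\binom{n}{2}<n^2$ of them. A union bound gives $\Pr[E(H)\backslash(E\cup E_r)\neq\emptyset]\le \binom{n}{2}\cdot \delta/n^2\le\delta$, which is Claim~2, and linearity of expectation applied to the same survival indicators gives $\E[|E(H)\backslash(E\cup E_r)|]\le \binom{n}{2}\cdot\delta/n^2\le\delta$, which is Claim~3.

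\textbf{Expected main obstacle.} There is no serious obstacle: the lemma is a Chernoff-style amplification wrapped around Lemma~\ref{onequery}(1). The only points that require a little care are (i) combining the per-iteration deletion events through independence of the queries, rather than through a (hopelessly weak) union bound over the $t$ iterations, and (ii) checking that the constant $2$ in the $2\ln n$ term of $t$ makes the per-pair survival probability $\delta/n^2$ small enough to absorb the $\binom{n}{2}$-fold union bound with room to spare. Everything else is the elementary inequality $1-x\le e^{-x}$ and linearity of expectation.
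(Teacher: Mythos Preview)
Your proof is correct and follows essentially the same approach as the paper's: both argue Claim~1 from the structure of the algorithm, then use Lemma~\ref{onequery}(1) to bound the per-iteration survival probability of a low-neighbourhood non-edge, raise to the $t$th power by independence, and finish with a union bound over at most $n^2$ pairs (respectively, linearity of expectation). The only cosmetic difference is that you spell out the $1-x\le e^{-x}$ step explicitly to verify $n^2(1-p^2(1-rp-mp^2))^t\le\delta$, which the paper asserts directly.
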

\begin{proof} Consider the algorithm in Figure~\ref{Alg1}. An edge $\{i,j\}$ is removed from the graph $H$ if and only if $i,j\in Q$ and ${\cal O}_G(Q)=$``NO''. Therefore, if $\{i,j\}\in E$ then $\{i,j\}\in E(H)$. This proves {\it 1.}

We now prove {\it 2.} Let $\{u,v\}$ be such that $|\Gamma(\{u,v\})|\le r$. The probability that $\{u,v\}\in E(H)\backslash E$ is the probability that for all the queries $Q^{(i)}$, $i=1,\ldots,t$ in the algorithm, either $\{u,v\}\not\subseteq Q^{(i)}$ or ${\cal O}_G(Q^{(i)})=$``YES''. By Lemma~\ref{onequery}, this probability is at most $(1-p^2(1-rp-mp^2))^t$. Therefore, the probability that there is $\{u,v\}$ such that $|\Gamma(\{u,v\})|\le r$ and $\{u,v\}\in E(H)\backslash E$ is at most
$$n^2(1-p^2(1-rp-mp^2))^t\le \delta.$$

The expected number of edges in $E(H)\backslash (E\cup E_r)$ is also at most $n^2(1-p^2(1-rp-mp^2))^t\le \delta.$
\end{proof}

\section{Non-Adaptive Algorithms when \texorpdfstring{$m$}{} is Known}
In this section we study non-adaptive learning algorithms when $m$ is known to the algorithm. We first give a polynomial time Monte Carlo algorithm that asks $O(m^2\log n)$ queries and then prove the lower bound $\Omega(m^2\log n)$ for the number of queries. For the non-adaptive deterministic algorithm, there is a polynomial time algorithm that asks $O(m^3\log n)$ queries and it is known that the lower bound for the number of queries is $\Omega(m^3\log n/\log m)$~\cite{ABM15,DH06}. For Las Vegas algorithm, a non-adaptive algorithm must be deterministic because the success probability is $1$. Therefore, the bounds for the number of queries of deterministic algorithms apply also for Las Vegas algorithms.

\subsection{Upper Bound for Randomized Non-Adaptive Algorithm}
We first state the following theorem: 
\begin{restatable}{theorem}{fthm}\label{UBNAMC} There is a non-adaptive Monte Carlo randomized learning algorithm with $1/poly(n)$ error probability for $m$-Graph that asks $O(m^2\log n)$ queries.

There is a two-round Las Vegas randomized learning algorithm for $m$-Graph that asks $O(m^2\log n)$ expected number of queries.
\end{restatable}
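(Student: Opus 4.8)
The plan is to read both statements directly out of Lemma~\ref{main}, applied to the algorithm of Figure~\ref{Alg1} with the neighbourhood threshold set to $r=m$. The one combinatorial observation needed is that in any $m$-graph every pair of vertices has a small neighbourhood: for any $\{u,v\}$, each $w\in\Gamma_G(\{u,v\})$ can be charged to an edge incident to $u$ or $v$ (namely $\{u,w\}$ or $\{v,w\}$), and distinct vertices $w$ receive distinct edges, so $|\Gamma_G(\{u,v\})|\le|E|\le m$. Consequently the set $E_r=\{\{u,v\}:|\Gamma_G(\{u,v\})|>r\}$ is \emph{empty} as soon as $r=m$.

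For the non-adaptive Monte Carlo algorithm I would run Figure~\ref{Alg1} with $r=m$, $p=1/(4m)$ and $\delta=1/n^{c}$. With these choices $rp=\tfrac14$ and $mp^2=\tfrac{1}{16m}$, so $p^2(1-rp-mp^2)\ge \tfrac{1}{16m^2}\cdot\tfrac{11}{16}=\Theta(1/m^2)$ and hence the number of queries is $t=O(m^2(\log n+\log(1/\delta)))=O(m^2\log n)$; all of them are fixed in advance, so this is a single round. By Lemma~\ref{main}(1) we always have $E\subseteq E(H)$, and by Lemma~\ref{main}(2), with probability at least $1-\delta$ we have $E(H)\setminus(E\cup E_m)=\emptyset$, which, since $E_m=\emptyset$, forces $E(H)=E$. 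Thus the algorithm outputs $G$ with error $1/n^{c}=1/\poly(n)$, and it plainly runs in polynomial time (the work is $O(t\cdot n^2)$).

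For the two-round Las Vegas algorithm, use the above procedure (say with $\delta=1/n^{2}$) as the first round, producing $H$ with $E\subseteq E(H)$ and, by Lemma~\ref{main}(3) with $E_m=\emptyset$, $\E[|E(H)\setminus E|]\le\delta$. In the second round, ask in parallel the $|E(H)|$ queries $\{u,v\}$ over all $\{u,v\}\in E(H)$ (the answer to $\{u,v\}$ is ``YES'' iff $\{u,v\}\in E$), and output precisely those pairs answered ``YES''. Since $E\subseteq E(H)$, this output equals $E(H)\cap E=E$ with certainty, so the algorithm has zero error. It uses $O(m^2\log n)$ queries in round~1 and $|E(H)|$ in round~2, with $\E[|E(H)|]\le m+\delta\le m+1$, so the expected total is $O(m^2\log n)$.

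I do not expect a genuine obstacle here: all the probabilistic content is already packaged in Lemma~\ref{main}, and the only care needed is (a) choosing the constant in $p=\Theta(1/m)$ so that $1-mp-mp^2$ stays bounded away from $0$ (and handling $m\le 1$ trivially), and (b) noting that the round-two verification queries do not depend on one another, so the Las Vegas procedure really runs in two rounds. The conceptual crux is just the remark that capping the neighbourhood size at $m$ makes the ``bad'' family $E_r$ vanish, after which Lemma~\ref{main} yields everything.
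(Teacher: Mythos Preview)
Your proposal is correct and follows essentially the same approach as the paper: run Figure~\ref{Alg1} with $r=m$ and $p=\Theta(1/m)$ (the paper uses $p=1/(2m)$, you use $p=1/(4m)$), observe that $|\Gamma_G(\{u,v\})|\le m$ forces $E_m=\emptyset$, and invoke Lemma~\ref{main} for both the Monte Carlo guarantee and the expectation bound used in the Las Vegas verification round. The only differences are cosmetic constants and your slightly more explicit justification of the neighbourhood bound.
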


\begin{proof} Consider the algorithm in Figure~\ref{Alg1} with $r=m$, $p=1/(2m)$ and $\delta=1/poly(n)$. Then by Lemma~\ref{main}, the number of queries is $t(n,m,r,p,\delta)=$ $O(m^2\log n)$. Again by Lemma~\ref{main}, $E\subseteq E(H)$ and with probability at least $1-\delta$ all the edges $\{u,v\}\in E(H)\backslash E$ satisfies $|\Gamma(\{u,v\})|>m+1$. Since $G$ has at most $m$ edges we must have $|\Gamma(\{u,v\})|\le m+1$ for any $u,v\in V$ and therefore with probability at least $1-\delta$, $E(H)= E$.

For the Las Vegas algorithm we add another round that asks a query for each edge in $E(H)$. The number of expected edges is less than $m+1$ and therefore the expected number of queries in the second round is less than $m+1$.
\end{proof}

\subsection{Lower Bound for Randomized Non-Adaptive Algorithm}

\begin{restatable}{lem}{tlem}\label{RNLL} Any non-adaptive Monte Carlo randomized learning algorithm with error probability at most $1/2$ for $m$-Loop must ask at least $\Omega(m\log n)$ queries.
\end{restatable}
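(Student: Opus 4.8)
The plan is to reduce the question to an information-theoretic / counting argument over the $n$ possible single-loop graphs, using the observation that $m$-Loop learning is exactly group testing. First I would reduce to the case $m=1$: an algorithm that learns $m$-Loop certainly learns $1$-Loop (a single loop), so a lower bound of $\Omega(\log n)$ for $m=1$, together with the disjoint-copies trick — partition $[n]$ into $m$ blocks of size $n/m$ and plant one loop in each block, with the queries restricted to a block forced to distinguish its $n/m$ candidates — yields $\Omega(m\log(n/m)) = \Omega(m\log n)$ for $m \le n^{1-\Omega(1)}$, which is the regime where the bound is meaningful. So the heart of the matter is: any non-adaptive randomized algorithm with error $\le 1/2$ that identifies a single planted loop among $n$ candidates must ask $\Omega(\log n)$ queries.

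For that core claim I would argue as follows. Fix the distribution over the randomized algorithm's query sets; by Yao's principle it suffices to lower-bound the number of (deterministic) queries needed to succeed with probability $\ge 1/2$ against the uniform distribution on the $n$ hypotheses $G_1,\dots,G_n$, where $G_i$ has the single loop $\{i,i\}$. A non-adaptive deterministic algorithm is a fixed list of queries $Q_1,\dots,Q_t \subseteq V$; the answer vector on input $G_i$ is the indicator vector $(\mathbf{1}[i\in Q_1],\dots,\mathbf{1}[i\in Q_t])$, since a query $Q$ detects the loop at $i$ iff $i\in Q$. The algorithm can identify $i$ only when this length-$t$ binary vector is distinct from the vectors of all other hypotheses it must separate. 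With $t$ queries there are only $2^t$ possible answer vectors, so at most $2^t$ of the $n$ hypotheses get a unique signature; the remaining $\ge n-2^t$ hypotheses are each confused with at least one other, and on a uniformly random such hypothesis the algorithm errs with probability $\ge 1/2$ conditioned on landing in a confused pair. A short calculation shows that to keep the overall error $\le 1/2$ one needs $2^t \ge \Omega(n)$, i.e. $t = \Omega(\log n)$; then Yao's principle transfers this to the randomized setting.

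The main obstacle I anticipate is handling the randomized (Monte Carlo) model cleanly rather than the deterministic one: one has to be careful that "error probability $\le 1/2$" is over both the algorithm's coins and — in the minimax formulation — a chosen prior, and that the non-adaptivity is genuinely exploited (the query sets cannot depend on earlier answers, so the whole query collection is a single random object independent of the target). Once that is set up, the counting bound $2^t$ on the number of distinguishable hypotheses does the work, and the block-decomposition step to recover the factor $m$ is routine, modulo stating the regime of $n$ and $m$ for which $\Omega(m\log n)$ is the honest bound. I would also double-check the edge case where $2^t$ is comparable to $n$, to make sure the constant in the error bound is chosen so that the conclusion $t=\Omega(\log n)$ survives.
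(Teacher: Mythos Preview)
Your core mechanism --- Yao's principle plus the counting bound ``$t$ binary answers distinguish at most $2^t$ hypotheses, so $2^t\ge \Omega(n)$'' --- is exactly the information-theoretic lower bound the paper invokes; the paper's own proof is the one-liner ``there are $\binom{n}{m}$ $m$-Loops, apply the information-theoretic bound (and Lemma~2 of~\cite{ABM14} for the Monte Carlo case).''

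The detour through $m=1$ and a block decomposition is unnecessary and is also where your write-up is loosest. The phrase ``queries restricted to a block forced to distinguish its $n/m$ candidates'' does not literally hold: a single query may span several blocks and the oracle returns the OR across all of them, so there is no per-block subproblem the algorithm is actually solving, and a naive direct sum of $m$ copies of the $\Omega(\log(n/m))$ bound does not go through as stated. What rescues the step is precisely your own global counting argument applied directly to the $(n/m)^m$ planted instances (or, better, to all $\binom{n}{m}$ instances), giving $t\ge \log\binom{n}{m}-O(1)=\Omega(m\log(n/m))$ in one line without any reduction to $m=1$. So the argument is salvageable and ultimately the same idea, but the cleanest route --- and the paper's --- drops the block trick entirely and counts $\binom{n}{m}$ from the start.
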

\begin{proof} The number of $m$-loops with $n$ vertices is $\binom{n}{m}$. Therefore, by the information theoretic lower bound and Lemma~2 in~\cite{ABM14} the result follows.
\end{proof}
\begin{theorem}\label{LBNAMC} Any non-adaptive Monte Carlo randomized learning algorithm with error probability at most $1/4$ for $m$-Graph must ask at least $\Omega(m^2\log n)$ queries.
\end{theorem}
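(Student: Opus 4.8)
The plan is to exhibit a large family of $m$-graphs that any non-adaptive algorithm must distinguish, and then invoke a standard information-theoretic/Yao-type argument. Concretely, I would embed a hard instance of $m$-Loop (group testing) into $m$-Graph so that the $\Omega(m\log n)$ bound of Lemma~\ref{RNLL} gets amplified to $\Omega(m^2\log n)$. The key combinatorial idea: split the $n$ vertices into two halves, say $A$ and $B$ with $|A|=|B|=n/2$. On the $A$-side place a bipartite-style or clique-like ``gadget'' on roughly $m$ vertices that forces the queries to behave restrictively, and on the $B$-side hide a group-testing instance. The point is that a single edge-detecting query $Q$ is only informative about the hidden structure if $Q$ is ``clean'' on the gadget side — and the gadget will be chosen so that this happens for only a $1/m$ fraction of queries (or forces a factor-$m$ blowup), yielding the extra $m$ factor on top of Lemma~\ref{RNLL}.

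The steps, in order. \emph{(1)} Fix a construction: let $A=\{a_1,\dots,a_m\}$ (think of these as ``hub'' vertices) and on $B$ hide an unknown set $S\subseteq B$ of size $\le m$; for each $a_i$ we either do or do not connect it to a distinguished vertex, encoding $m$ independent group-testing-style subproblems, one ``anchored'' at each $a_i$. The total edge count stays $O(m)$. \emph{(2)} Show that to learn the sub-instance anchored at $a_i$, the algorithm effectively only gets signal from queries $Q$ with $a_i\in Q$ and $a_j\notin Q$ for all $j\ne i$ (otherwise the answer is forced to ``YES'' by an edge among the hubs, carrying no information about $S$). \emph{(3)} Observe that the $m$ events ``$Q$ is usable for sub-instance $i$'' are disjoint over $i$, so a set of $t$ queries contains on average only $t/m$ queries usable for any fixed $i$; by Lemma~\ref{RNLL} (applied to the $B$-side group-testing instance of ``size'' $m$, needing $\Omega(m\log n)$ usable queries) and a union/averaging argument over the $m$ sub-instances, we need $t/m \ge \Omega(m\log n)$, i.e. $t=\Omega(m^2\log n)$. \emph{(4)} Wrap this in Yao's principle: put the uniform distribution on $S$ (and on the hub pattern), note the non-adaptive algorithm's queries are fixed in advance, and conclude that error $\le 1/4$ forces $\Omega(m^2\log n)$ queries; since a randomized non-adaptive algorithm is a distribution over deterministic ones, the same bound holds for it.

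The main obstacle is step \emph{(2)}–\emph{(3)}: getting the gadget exactly right so that (a) the ``usability'' events for distinct sub-instances are genuinely disjoint (or at least $O(1)$-overlapping), (b) a usable query really does reduce to a single group-testing query on $B$ of the required hardness, and (c) the whole construction stays within $m$ edges while still containing $\Omega(m)$ independent $\Omega(\log n)$-bit subproblems. A clean way to enforce disjointness is to make the hubs pairwise adjacent (a clique on $A$ costs $\binom{m}{2}$ edges, too many) — so instead I expect to use a sparser device, e.g. a single auxiliary ``poison'' vertex $z$ adjacent to all $a_i$ ($m$ edges), so that any query touching two hubs is ``YES''-forced via $z$ only if $z\in Q$; this needs a bit more care and may only give the bound up to constants, which is all that is claimed. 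An alternative route, if the gadget bookkeeping gets messy, is a direct counting bound: lower-bound the number of distinct $m$-graphs of this form that are pairwise non-separated by any fixed family of $o(m^2\log n)$ queries, using a dimension/packing argument on the query matrix restricted to hub-clean rows. I would present whichever of the two yields the cleaner constants.
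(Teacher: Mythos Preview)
Your overall plan---reduce to $m$-Loop via hub vertices, observe that a query is ``usable'' only when it meets exactly one hub, and then average over hubs to gain a factor $m$---is exactly the paper's strategy. But the construction you sketch has a real gap, and you have put your finger on it yourself in the ``main obstacle'' paragraph without resolving it.

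The problem is the edge budget. To conclude $t/m \ge \Omega(m\log n)$ you need a single $\Theta(m)$-Loop instance sitting behind the hubs, but if that instance is attached to all $m$ hubs simultaneously you spend $\Theta(m^2)$ edges; if instead each hub gets its own private sub-instance you can only afford a $\Theta(1)$-Loop per hub and the bound collapses to $\Omega(m\log n)$. Your ``poison vertex'' patch does not force a YES answer when two hubs lie in $Q$ but the poison vertex does not, so step~(2) fails for it. The clique on the hubs does force the answer but, as you note, costs $\binom{m}{2}$ edges.

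The paper's resolution is a single clean idea you are missing: attach the hidden $(m/2)$-Loop to \emph{one} hub $i$, chosen uniformly at random from $[m/2]$, and make $i$ adjacent to every other hub---a star, costing only $m/2-1$ edges. Now every edge of the target is incident to $i$, so if $i\notin Q$ the answer is automatically NO, and if $i\in Q$ together with another hub the answer is automatically YES; the only informative queries are those with $Q\cap[m/2]=\{i\}$. Since the algorithm does not know $i$, its query set splits into disjoint pieces $A_{s,i}=\{Q:Q\cap[m/2]=\{i\}\}$, and Markov over the random $i$ gives $|A_{s,i}|\le 8t/m$ with probability at least $3/4$. Simulating the graph learner on this family yields a non-adaptive learner for $(m/2)$-Loop on $\{m/2+1,\dots,n\}$ using $O(t/m)$ queries with error at most $1/2$, contradicting Lemma~\ref{RNLL} unless $t=\Omega(m^2\log n)$. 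Randomising the centre is precisely what lets one $(m/2)$-Loop instance do the work of your $m$ simultaneous sub-instances while keeping the total edge count at $m-1$.
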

\begin{proof} Let ${\cal A}(s,{\cal O}_G)$ be a Monte Carlo randomized non-adaptive learning algorithm that learns $m$-Graph over the vertices $[n]$ with error probability at most $1/4$, where $s\in \{0,1\}^*$ is a random seed and ${\cal O}_G$ is the query oracle to the target graph $G$. Let $A_s$ be the set of queries that are asked when the random seed is $s$. Suppose, on the contrary, that for all $s$, $|A_s|\le cm^2\log n$, for some $c=o(1)$. Let $A_{s,i}$, $i=1,\ldots,m/2$ be the set of queries $Q$ in $A_s$ that contains the vertex $i$ and does not contain any of the vertices in $[m/2]\backslash \{i\}$. That is $Q\cap [m/2]=\{i\}$. Then, over the uniform distribution, $(m/2)\E_{i\in [m/2]}[|A_{s,i}|]\le |A_s|<cm^2\log n$. Therefore, $\E_{i\in [m/2]}[|A_{s,i}|]\le 2cm\log n$. Thus, by Markov bound, we get $\Pr_{i\in [m/2]}[|A_{s,i}|\ge 8cm\log n]< 1/4$ and $\Pr_{i\in [m/2]}[|A_{s,i}|< 8cm\log n]> 3/4$.

Now for any $i\in [m/2]$ and $J=\{\{j_1\},\ldots,\{j_{m/2}\}\}$, $m/2+1\le j_1<j_2<\cdots<j_{m/2}\le n$ we define the set of graphs $G_{J,i}(V,E_{J,i})$ where $E_{J,i}:=\{ \{i\}\times([m/2]\backslash \{i\})\cup\{\{i,j_1\},\ldots,\{i,j_{m/2}\}\}\ | \ m/2+1\le j_1<j_2<\cdots<j_{m/2}\le n\}.$ Notice that any query $Q$ in $A_s\backslash A_{s,i}$ gives no information about the vertices $j_1,\ldots,j_{m/2}$. That is because, either $i\not\in Q$ and then the answer is ``NO'' or $i\in Q$ and $Q\cap ([m/2]\backslash\{i\}) \not=\emptyset$ and then the answer is ``YES''.

We now give an algorithm ${\cal B}$ that learns $m/2$-Loop over $n-m/2$ vertices that are labeled with $\{m/2,\ldots,n\}$ with success probability at least $1/2$ using at most $4cm\log n$ queries. This gives a contradiction to the result of Lemma~\ref{RNLL} and then the result follows.

Algorithm ${\cal B}$ chooses a random uniform $i\in [m/2]$ and runs algorithm ${\cal A}$. Suppose the set of $m/2$ loops of the target is $J=\{\{j_1\},\ldots,\{j_{m/2}\}\}$. The goal of algorithm ${\cal B}$ is to provide ${\cal A}$ answers to its queries as if the target is $G_{J,i}$. Therefore, for each query $Q$ that ${\cal A}$ asks, if  $i\not\in Q$ then algorithm ${\cal B}$ returns the answer ``NO'' to ${\cal A}$ and if $i\in Q$ and $Q\cap ([m/2]\backslash\{i\}) \not=\emptyset$ then it returns the answer ``YES''. If $Q\cap [m/2]=\{i\}$ then algorithm ${\cal B}$  asks the query $Q\cup\{i\}$ and returns the answer to ${\cal A}$. Algorithm ${\cal B}$ halts if the number of queries is more than $8cm\log n$ or ${\cal A}$ outputs a graph $H(V,E')$. If $E'=\{i\}\times([m/2]\backslash \{i\})\cup\{\{i,j'_1\},\ldots,\{i,j'_{m/2}\}\}$ then algorithm ${\cal B}$ outputs $G(V,\{\{j'_1\},\ldots,\{j'_{m/2}\}\})$ otherwise it returns an empty graph.

Algorithm ${\cal B}$ fails if the number of queries is greater than $8cm\log n$ or algorithm ${\cal A}$ fails. The former happens with probability at most $1/4$ and the latter with probability at most $1/4$. Therefore the error of the algorithm is at most $1/2$.
\end{proof}

\section{Two and Three Round Randomized Learning - \texorpdfstring{$m$}{} is Known}
In this section we study two-round randomized algorithms.

We first prove that there is a two-round Monte Carlo randomized learning algorithm (with $1/poly(n)$ error probability) for $m$-Graph that asks $O(m^{4/3}\log n)$ queries. Then we show that, for $n>m^m$, there is a two-round randomized Monte Carlo learning algorithm for $m$-Graph that asks $O(m\log n)$ queries.

For Las Vegas algorithm we prove the above query complexities for three-round algorithms. We then show that any two-round Las Vegas algorithm must ask at least $\Omega((m^{4/3}\log^{1/3}n)/(\log^{1/3} m))$ queries. For $m>(\log n)^{\omega(1)}$ this lower bound is $\Omega(m^{4/3-o(1)}\log n)$.

\subsection{Learning the Neighbours in an Independent Set}

\begin{figure}[t!]
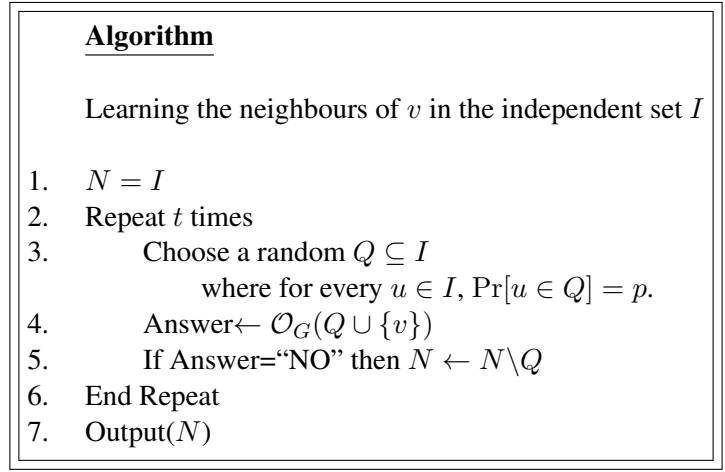

  \begin{center}
   \fbox{\fbox{\begin{minipage}{28em}
  \begin{tabbing}
  xxxx\=xxxx\=xxxx\=xxxx\=xxxx\= \kill
  \>\underline{{\bf Algorithm}}\\ \\
  \> Learning the neighbours of $v$ in the independent set $I$\\ \\
  1. \>$N=I$\\
  2. \> Repeat $t$ times\\
  3. \>\> Choose a random $Q\subseteq I$\\
  \>\>\>where for every $u\in I$, $\Pr[u\in Q]=p$.\\
  4. \>\> Answer$\gets{\cal O}_G(Q\cup \{v\})$\\
  5. \>\> If Answer=``NO'' then $N\gets N\backslash Q$\\
  6. \> End Repeat\\
  7. \> Output($N$)
  \end{tabbing}\end{minipage}}}
  \end{center}
	\caption{An algorithm that given an independent set $I$ in $V$, finds the vertices in $I$ that are neighbours of $v$.}
	\label{AlgN}
	\end{figure}

\begin{restatable}{lem}{ilem}\label{FinfN} Consider the Algorithm in Figure~\ref{AlgN}. Let $I\subset V$ be an independent set in $G$. Let $v\not\in I$ be a vertex in $G$. For $p=1/m$ and $t=4m(\ln n+\ln(1/\delta))$ with probability at least $1-\delta$ the output $N$ of the algorithm satisfies $N=\Gamma(v)\cap I$. That is, $N$ contains only the neighbours of $v$ in $I$.
\end{restatable}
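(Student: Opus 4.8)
The plan is to analyze the algorithm in Figure~\ref{AlgN} directly, mirroring the argument of Lemma~\ref{main} but in the simpler setting where we query $Q\cup\{v\}$ with $Q\subseteq I$ and $I$ is independent. First I would observe the ``one-sided'' guarantee: for any $u\in\Gamma(v)\cap I$, the vertex $u$ is never removed from $N$, because $u$ can only be removed when some query answers ``NO'', and any query $Q$ with $u\in Q$ satisfies $\{u,v\}\in E$ with $u,v\in Q\cup\{v\}$, hence ${\cal O}_G(Q\cup\{v\})=$``YES''. Thus $\Gamma(v)\cap I\subseteq N$ always, and it remains to show that with probability at least $1-\delta$ no ``false'' vertex $u\in I\setminus\Gamma(v)$ survives.

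Next I would bound, for a fixed $u\in I\setminus\Gamma(v)$, the probability that $u$ is never removed over the $t$ iterations. In a single iteration, $u$ is removed precisely when $u\in Q$ and ${\cal O}_G(Q\cup\{v\})=$``NO''. Since $I$ is independent, the only edges that can be ``hit'' by $Q\cup\{v\}$ are edges incident to $v$; so the event ${\cal O}_G(Q\cup\{v\})=$``NO'' given $u\in Q$ is exactly the event that $Q$ avoids all of $\Gamma(v)\cap I$ (note $u\notin\Gamma(v)$, so this is consistent with $u\in Q$). There are at most $m$ such neighbours, and each is excluded from $Q$ independently with probability $1-p$, so $\Pr[u\in Q\wedge{\cal O}_G(Q\cup\{v\})=$``NO''$]\ge p(1-p)^{|\Gamma(v)\cap I|}\ge p(1-p)^m$. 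With $p=1/m$ this is at least $(1/m)(1-1/m)^m\ge 1/(em)$ (for $m\ge 2$; the small cases are handled separately or the constant absorbed). Hence $u$ survives all $t$ rounds with probability at most $(1-1/(em))^t\le e^{-t/(em)}$.

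Then I would take a union bound over the at most $|I|\le n$ candidate false vertices $u$: the probability that some $u\in I\setminus\Gamma(v)$ survives is at most $n\,e^{-t/(em)}$. Choosing $t = em(\ln n+\ln(1/\delta))$ makes this at most $\delta$, which gives $N=\Gamma(v)\cap I$ with probability at least $1-\delta$. The value $t=4m(\ln n+\ln(1/\delta))$ stated in the lemma is a clean upper bound on $em(\ln n+\ln(1/\delta))$ for all relevant $m$ (and one can just verify $4\ge e$), so the stated bound suffices; alternatively the authors may use the cruder estimate $(1-1/m)^m\ge 1/4$ for $m\ge 2$ to get exactly the constant $4$.

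The main obstacle, such as it is, is making the single-iteration success probability estimate fully rigorous across all ranges of $m$ and $|\Gamma(v)\cap I|$: one must be careful that $(1-p)^{|\Gamma(v)\cap I|}$ is bounded below by a universal constant times $1/m$, handle the degenerate cases $m=1$ or $\Gamma(v)\cap I=\emptyset$ (where the claim is trivial or the bound is even easier), and confirm the constant in $t$ is large enough after the union bound. Everything else is a routine repetition of the template already established in Lemma~\ref{onequery} and Lemma~\ref{main}, specialized to the independent-set query $Q\cup\{v\}$ where no ``$mp^2$'' term appears because $I$ contributes no edges.
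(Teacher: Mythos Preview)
Your proposal is correct and follows essentially the same approach as the paper: bound the single-iteration removal probability for a non-neighbour $u$ by $p(1-p)^m$, plug in $p=1/m$, and take a union bound over at most $n$ vertices. The paper's proof is terser and uses precisely the cruder estimate $(1-1/m)^m\ge 1/4$ you anticipated, writing directly $n(1-p(1-p)^m)^t\le n(1-1/(4m))^t\le\delta$.
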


\begin{proof} The output $N$ is not the set of neighbors of $v$ if and only if for some $u\not\in \Gamma(v)$ each query $Q$ in the algorithm satisfies: $u\not\in Q$ or $\Gamma(v)\cap Q\not=\emptyset$. Therefore, the probability that output $N$ is not the set of neighbors of $v$ is less than
$$n(1-p(1-p)^m)^t\le n\left(1-\frac{1}{4m}\right)^t\le \delta.$$
\end{proof}

\subsection{Upper Bound for Randomized Two-Round Algorithm}\label{FactS}
Consider the algorithm in Figure~\ref{Alg1} with $p<1/(8\sqrt{m})$ and $r=1/(2p)$. By Lemma~\ref{main}, $E\subseteq E(H)$ and for $t=O((1/p)^2(\log n+\log(1/\delta)))$, with probability at least $1-\delta$, every edge $\{u,v\}\in E(H)\backslash E$ satisfies $\deg_G(u)+\deg_G(v)>r+1$. Assume for now that this is true with probability $1$.
\begin{fact}~\label{F1} Every edge $\{u,v\}\in E(H)\backslash E$ satisfies $\deg_G(u)+\deg_G(v)>r+1$.
\end{fact}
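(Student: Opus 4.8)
The plan is to obtain Fact~\ref{F1} as an immediate consequence of Lemma~\ref{main} plus one elementary graph inequality, under the conditioning announced by the sentence ``Assume for now that this is true with probability $1$.'' Concretely, I would instantiate the algorithm of Figure~\ref{Alg1} exactly as in the paragraph above it: each vertex is put in $Q$ independently with probability $p<1/(8\sqrt m)$, and I would apply Lemma~\ref{main} with its neighbourhood threshold set to $r+1$, where $r=1/(2p)$. Then $(r+1)p=\tfrac12+p$ and $mp^2<m/(64m)=1/64$, so $1-(r+1)p-mp^2$ is bounded below by a positive constant; hence the number of rounds demanded by Lemma~\ref{main}, namely $t=(2\ln n+\ln(1/\delta))/(p^2(1-(r+1)p-mp^2))$, is $O((1/p)^2(\log n+\log(1/\delta)))$, matching the stated bound. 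Part~1 of Lemma~\ref{main} already gives $E\subseteq E(H)$ (with probability $1$), and part~2 gives: with probability at least $1-\delta$, every $\{u,v\}\in E(H)\setminus E$ satisfies $|\Gamma_G(\{u,v\})|>r+1$.

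The second step converts this neighbourhood bound into the claimed degree-sum bound. Because $\{u,v\}\notin E$ and $G$ is simple, neither $u$ nor $v$ lies in $\Gamma_G(u)\cup\Gamma_G(v)$, so $\Gamma_G(\{u,v\})=\Gamma_G(u)\cup\Gamma_G(v)$ and therefore $|\Gamma_G(\{u,v\})|\le|\Gamma_G(u)|+|\Gamma_G(v)|=\deg_G(u)+\deg_G(v)$. Chaining this with the previous step, with probability at least $1-\delta$ every edge $\{u,v\}\in E(H)\setminus E$ satisfies $\deg_G(u)+\deg_G(v)\ge|\Gamma_G(\{u,v\})|>r+1$. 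The label ``Fact'' then simply records this event: one conditions on it (as the text does), propagates it through the rest of Section~\ref{FactS}, and absorbs the additive $\delta$ into the overall failure probability at the end of the analysis; no separate argument is needed.

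There is no real obstacle here --- the statement is a corollary. The only two places worth a line of care are: \emph{(i)} the identity $\Gamma_G(\{u,v\})=\Gamma_G(u)\cup\Gamma_G(v)$, which genuinely uses $\{u,v\}\notin E$ (otherwise $v$ would be excluded from $\Gamma_G(u)$ and the counting would need re-examining); and \emph{(ii)} matching the off-by-one, i.e.\ running Lemma~\ref{main} with threshold $r+1$ rather than $r$ so that the conclusion reads ``$>r+1$'' while the denominator of $t$ stays bounded away from $0$. Both are settled by the parameter choice above, and everything else is the bookkeeping already carried out inside Lemma~\ref{main}.
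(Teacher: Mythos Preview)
Your proposal is correct and follows essentially the same route as the paper: Fact~\ref{F1} is not given a separate proof there either---it is simply the conditioning assumption from the preceding paragraph, obtained from Lemma~\ref{main} together with the trivial bound $|\Gamma_G(\{u,v\})|\le\deg_G(u)+\deg_G(v)$ for non-edges $\{u,v\}$. Your extra care in instantiating Lemma~\ref{main} with threshold $r+1$ (rather than $r$) is a clean way to justify the ``$>r+1$'' that the paper asserts directly, and your verification that $1-(r+1)p-mp^2$ stays bounded away from $0$ confirms the claimed $t=O((1/p)^2(\log n+\log(1/\delta)))$.
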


Figure~\ref{Proof1} (in the appendix section) help you to follow the proof.
Let $V_G:=\{v| \{u,v\}\in E\}$, $V_{H}:=\{v| \{u,v\}\in E(H)\}$.
We partition the set of edges $E(H)\backslash E$ to three disjoint set $E_0\cup E_1\cup E_2$ where $E_{i}=\{\{u,v\}\in E(H)\backslash E\ :\ |\{u,v\}\cap V_G|=i\}$. Fact~\ref{F1} immediately implies that
\begin{fact}~\label{F11} $E_0=\emptyset$. That is, every edge in $E(H)\backslash E$, at least one of its endpoints is in $V_G$.
\end{fact}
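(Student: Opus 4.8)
The final statement is Fact~\ref{F11}, which asserts $E_0 = \emptyset$: every edge in $E(H)\setminus E$ has at least one endpoint in $V_G$ (the set of vertices incident to a true edge).

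My plan is to derive this directly from Fact~\ref{F1} together with the definition of $V_G$. Suppose for contradiction that $\{u,v\} \in E_0$, meaning $\{u,v\} \in E(H)\setminus E$ with $u,v \notin V_G$. By the definition of $V_G = \{w \mid \exists x,\ \{x,w\}\in E\}$, a vertex lies outside $V_G$ precisely when it is incident to no edge of the target graph $G$, i.e.\ $\deg_G(u) = 0$ and $\deg_G(v) = 0$. Hence $\deg_G(u) + \deg_G(v) = 0$. But Fact~\ref{F1} guarantees that every edge $\{u,v\} \in E(H)\setminus E$ satisfies $\deg_G(u) + \deg_G(v) > r+1 \geq 1$ (using $r = 1/(2p) > 0$). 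This is a contradiction, so no such edge exists and $E_0 = \emptyset$.

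There is essentially no obstacle here; the only thing to be careful about is the caveat made just before Fact~\ref{F1} that the degree condition holds "with probability $1$" only under the stated assumption (the actual guarantee from Lemma~\ref{main} is "with probability at least $1-\delta$"). So strictly the conclusion $E_0 = \emptyset$ is also conditioned on that same high-probability event; within the scope of the running assumption it holds deterministically. I would simply note that Fact~\ref{F11} is immediate from Fact~\ref{F1}, exactly as the text already states ("Fact~\ref{F1} immediately implies that"), and the one-line argument above is the justification.
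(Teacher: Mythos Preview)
Your proposal is correct and matches the paper's own reasoning: the paper itself just says ``Fact~\ref{F1} immediately implies that'' without spelling out the one-line contradiction, and what you have written is precisely that implicit argument (vertices outside $V_G$ have $\deg_G=0$, contradicting the degree bound from Fact~\ref{F1}). Your remark about the high-probability caveat is also apt and consistent with the paper's standing assumption.
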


Let $u\in V_{H}\backslash V_G$. Then $\deg_G(u)=0$. Therefore, by Fact~\ref{F1},

\begin{fact}~\label{star} For any edge $\{u,v\}\in E_1$ where $u\in V_{H}\backslash V_G$ we have $\deg_G(v)>r+1$.
\end{fact}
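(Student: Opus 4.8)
The plan is to derive Fact~\ref{star} directly from Fact~\ref{F1} together with the definition of $V_G$. First I would unpack what membership $u\in V_{H}\backslash V_G$ means: by definition $V_G=\{v\mid \{u,v\}\in E\}$ is exactly the set of vertices incident to some edge of the target graph $G$, so a vertex outside $V_G$ is incident to no edge of $G$. Hence $\deg_G(u)=0$ for any $u\in V_{H}\backslash V_G$.

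Next I would invoke Fact~\ref{F1}, which applies to the edge $\{u,v\}$ because $E_1\subseteq E(H)\backslash E$ by the definition of the partition $E(H)\backslash E=E_0\cup E_1\cup E_2$. Fact~\ref{F1} gives $\deg_G(u)+\deg_G(v)>r+1$. Substituting $\deg_G(u)=0$ from the previous step yields $\deg_G(v)>r+1$, which is exactly the claim.

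Since both ingredients are already in place, there is essentially no obstacle here; the only thing to be careful about is that Fact~\ref{F1} is stated under the event (holding with probability $1-\delta$, and assumed to hold with probability $1$ in this subsection) that every edge of $E(H)\backslash E$ has endpoint-degree sum exceeding $r+1$, so Fact~\ref{star} inherits the same conditioning. Thus the proof is just the two-line chain: $u\notin V_G\Rightarrow\deg_G(u)=0$, then Fact~\ref{F1} applied to $\{u,v\}\in E_1$ forces $\deg_G(v)>r+1$.
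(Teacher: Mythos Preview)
Your proposal is correct and matches the paper's argument essentially verbatim: the paper simply notes that $u\in V_H\backslash V_G$ gives $\deg_G(u)=0$ and then invokes Fact~\ref{F1}. Your added remarks about $E_1\subseteq E(H)\backslash E$ and about inheriting the conditioning from Fact~\ref{F1} are accurate and make explicit what the paper leaves implicit.
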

Since the number of vertices in $G$ that have degree greater than $r+1$ is less than $2m/(r+2)\le r/8$, the degree of each $u\in V_H\backslash V_G$ in the graph $H$ is at most $r/8$.
\begin{fact}~\label{F2} If $u\in V_H\backslash V_G$ then $\deg_H(u)\le r/8$. In particular, all the vertices of degree greater than $r/8$ are in $V_G$.
\end{fact}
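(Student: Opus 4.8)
The plan is to combine Fact~\ref{F11} ($E_0=\emptyset$) and Fact~\ref{star} with a handshake count on $G$. Fix a vertex $u\in V_H\backslash V_G$ and let $v$ be an arbitrary neighbour of $u$ in $H$. Since $u\notin V_G$, no edge of $E$ is incident to $u$, so $\{u,v\}\notin E$ and hence $\{u,v\}\in E(H)\backslash E$. By Fact~\ref{F11} at least one endpoint of $\{u,v\}$ lies in $V_G$; as $u\notin V_G$, that endpoint must be $v$, so $\{u,v\}\in E_1$ with $u$ as its non-$V_G$ endpoint. Fact~\ref{star} then gives $\deg_G(v)>r+1$.

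Consequently every $H$-neighbour of $u$ belongs to $S:=\{w\in V:\deg_G(w)>r+1\}$, so $\deg_H(u)\le |S|$. From the handshake identity $\sum_{w\in V}\deg_G(w)=2|E|\le 2m$ we get $|S|<2m/(r+2)$, and it remains to check $2m/(r+2)\le r/8$. Since $r=1/(2p)$ and $p<1/(8\sqrt m)$ we have $r>4\sqrt m$, hence $r^2>16m$ and $2m/(r+2)<r^2/(8(r+2))<r/8$. Thus $\deg_H(u)<r/8$, proving the first assertion. For the ``in particular'' clause, if $\deg_H(u)>r/8>0$ then $u$ is incident to an edge of $H$, so $u\in V_H$; and $u\notin V_G$ would force $\deg_H(u)<r/8$ by the first part, a contradiction, so $u\in V_G$.

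The whole argument is a short chain of implications, and the only non-formal step is the numeric inequality $2m/(r+2)\le r/8$ --- exactly where the standing hypotheses $r=1/(2p)$ and $p<1/(8\sqrt m)$ of this subsection enter. One should also keep in mind that this subsection works under the assumption (stated just before Fact~\ref{F1}) that the favourable event of Lemma~\ref{main} occurs with probability~$1$, which is what makes Facts~\ref{F1},~\ref{F11} and~\ref{star} available; beyond that there is no genuine obstacle, only the need to keep every inequality pointing in the right direction.
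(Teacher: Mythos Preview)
Your argument is correct and follows exactly the paper's line: use Fact~\ref{star} to see that every $H$-neighbour of $u\in V_H\setminus V_G$ has $G$-degree exceeding $r+1$, bound the number of such high-degree vertices by $2m/(r+2)$ via the handshake count, and then verify $2m/(r+2)\le r/8$ from $r=1/(2p)>4\sqrt{m}$. The paper states this in one sentence just before Fact~\ref{F2}; you have simply unpacked it, including the explicit appeal to Fact~\ref{F11} to place $\{u,v\}$ in $E_1$ and the derivation of the numeric inequality, which the paper leaves implicit.
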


Now take any edge $\{u',v'\}\in E(H)\backslash E$. By Fact~\ref{F1}, $\deg_G(u')+\deg_G(v')>r+1$ and therefore either $\deg_G(u')> r/2$ or $\deg_G(v')>r/2$. If $\deg_G(v')>r/2$ then $\deg_H(v')>r/2$ and by Fact~\ref{F2}, $v'\in V_G$. This with Fact~\ref{star} shows that
\begin{fact}~\label{F3}
Every edge $\{u',v'\}\in E_2$, one of its endpoints is in $V_G$ and has degree at least $r/2$ in $G$ and therefore also in $H$.

Every edge $\{u',v'\}\in E_1$ has one endpoint in $V_G$, and a degree greater than $r+1$ in $G$ and therefore also in $H$.
\end{fact}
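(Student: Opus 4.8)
The plan is to read both assertions off the facts already in hand, the only genuinely new ingredient being the monotonicity $E\subseteq E(H)$ from Lemma~\ref{main}: since every edge of $G$ incident to a vertex $w$ is also an edge of $H$, we have $\deg_H(w)\ge \deg_G(w)$ for all $w\in V$. Hence it suffices to establish each claim with $\deg_G$ in place of $\deg_H$ and then transfer the bound to $H$ for free. Recall also that by Fact~\ref{F11} we are only dealing with $E(H)\backslash E=E_1\cup E_2$.

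For the $E_2$ claim I would take an arbitrary edge $\{u',v'\}\in E_2$. By Fact~\ref{F1}, $\deg_G(u')+\deg_G(v')>r+1$, so at least one endpoint — relabel it $v'$ — satisfies $\deg_G(v')>r/2$ (if both were $\le r/2$ the sum would be $\le r<r+1$). By definition of $E_2$ both endpoints already lie in $V_G$, so $v'\in V_G$, and the degree transfer gives $\deg_H(v')\ge\deg_G(v')>r/2$; this is the first assertion. (If one prefers not to use the definition of $E_2$ directly, one can instead note $\deg_H(v')>r/2>r/8$ and invoke Fact~\ref{F2} to conclude $v'\in V_G$, matching the lead-in paragraph.) For the $E_1$ claim I would simply invoke Fact~\ref{star}: an edge $\{u',v'\}\in E_1$ has exactly one endpoint, say $u'$, in $V_H\backslash V_G$, so $\deg_G(u')=0$ and the other endpoint $v'$ lies in $V_G$; Fact~\ref{star} then yields $\deg_G(v')>r+1$, which the degree transfer upgrades to $\deg_H(v')>r+1$.

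I do not anticipate a real obstacle here — every step is a one-line consequence of $E\subseteq E(H)$ together with Facts~\ref{F1},~\ref{star}, and~\ref{F2}. The only point requiring a moment's care is the bookkeeping of the case split: for an $E_2$ edge the bound ``$>r/2$'' may attach to either endpoint, which is harmless since both lie in $V_G$, whereas for an $E_1$ edge one must apply Fact~\ref{star} with $u'$ chosen as the endpoint \emph{outside} $V_G$, which is legitimate precisely because $\deg_G(u')=0$ means $u'$ cannot be the endpoint carrying the large $G$-degree guaranteed by Fact~\ref{F1}.
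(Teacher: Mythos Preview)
Your proposal is correct and follows essentially the same line as the paper: the paper's lead-in argues, for an arbitrary $\{u',v'\}\in E(H)\backslash E$, that Fact~\ref{F1} forces one endpoint to have $\deg_G>r/2$, then uses $E\subseteq E(H)$ and Fact~\ref{F2} to place that endpoint in $V_G$, and finally cites Fact~\ref{star} for the $E_1$ part. Your only deviation is that for the $E_2$ case you read $v'\in V_G$ directly off the definition of $E_2$ rather than via Fact~\ref{F2}, a shortcut you yourself flag as equivalent to the paper's route.
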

Denote by $W$ the set of all vertices of degree greater than $r/2$ in $H$.
Then
\begin{fact}~\label{F4} The number of vertices of degree more than $r/2$ in $H$ is at most $8m/r$. That is, $|W|\le 8m/r$.
\end{fact}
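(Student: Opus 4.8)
The plan is to prove the stronger fact that \emph{every} vertex $v\in W$ already satisfies $\deg_G(v)>r/4$; the desired bound then follows at once, since $(r/4)\,|W|<\sum_{v\in W}\deg_G(v)\le\sum_{v\in V}\deg_G(v)=2|E|\le 2m$, whence $|W|<8m/r$. Two elementary estimates are used throughout: from $p<1/(8\sqrt{m})$ and $r=1/(2p)$ one gets $r>4\sqrt{m}$, hence $4m/r<r/4$; and from $\sum_w\deg_G(w)\le 2m$ at most $4m/r$ vertices can have $G$-degree at least $r/2$.

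So I would fix $v\in W$, note $\deg_H(v)>r/2>r/8$, and deduce $v\in V_G$ by Fact~\ref{F2}; by Fact~\ref{F11} the edges of $H$ incident to $v$ split into $\deg_G(v)$ edges of $E$ together with edges of $E_1$ and of $E_2$. Then I would split into two cases. If some edge $\{v,u\}\in E_1$ is incident to $v$, then, because exactly one endpoint of an $E_1$-edge lies in $V_G$ and $v\in V_G$, necessarily $u\in V_H\setminus V_G$, and Fact~\ref{star} gives $\deg_G(v)>r+1>r/4$. Otherwise $v$ has no incident $E_1$-edge, so $\deg_{E_2}(v)=\deg_H(v)-\deg_G(v)$; assuming toward a contradiction that $\deg_G(v)\le r/4$ yields $\deg_{E_2}(v)>r/2-r/4=r/4$. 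By Fact~\ref{F3} each such $E_2$-edge has an endpoint of $G$-degree at least $r/2$, which (since $\deg_G(v)\le r/4<r/2$) must be the endpoint other than $v$; as $H$ is simple these endpoints are distinct, so $v$ has more than $r/4$ distinct $H$-neighbours of $G$-degree at least $r/2$, contradicting the estimate $4m/r<r/4$. Hence $\deg_G(v)>r/4$ in this case too, which is all that is needed.

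I expect the delicate point to be precisely this second case: one must verify that the heavy endpoint supplied by Fact~\ref{F3} for each $E_2$-edge at $v$ is genuinely $v$'s partner (which is exactly where $\deg_G(v)<r/2$ is used) and that these partners are pairwise distinct (which uses that $H$ has no multiple edges), so that the count really produces more than $r/4$ distinct vertices of $G$-degree at least $r/2$ and collides with $4m/r<r/4$. Everything else is a routine combination of Facts~\ref{F11}, \ref{star}, \ref{F2} and \ref{F3} with the two degree-sum estimates above.
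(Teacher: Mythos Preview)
Your proof is correct and essentially matches the paper's: both show that every $v\in W$ has $\deg_G(v)>r/4$ by invoking Fact~\ref{F2} to get $v\in V_G$, Facts~\ref{star}/\ref{F3} to handle incident $E_1$- and $E_2$-edges, and the estimate $4m/r<r/4$ on the number of $G$-vertices of degree at least $r/2$. The only difference is cosmetic---you argue directly from $v\in W$ with a case split on whether $v$ carries an $E_1$-edge, while the paper phrases the same reasoning as the contrapositive $\deg_G(u)<r/4\Rightarrow\deg_H(u)<r/2$.
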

\begin{proof}  Let $u$ be a vertex in $G$ of degree less than $r/4$. Then all its edges in $H$ are in $E_2\cup E$. This is because if it has an edge in $E_1$ then by Fact~\ref{F3}, its degree in $G$ is more than $r+1$. If $\{u,v\}\in E_2$ then by Fact~\ref{F3} the degree of $v$ in $G$ is at least $r/2$. Since the number of vertices in $G$ of degree at least $r/2$ is at most $2m/(r/2)< r/4$ the degree of $u$ in $H$ is less than $r/4+r/4=r/2$. This shows that a vertex in $H$ of degree at least $r/2$ is of degree at least $r/4$ in $G$. Therefore the number of such vertices is at most $2m/(r/4)= 8m/r$.
\end{proof}

We now prove
\begin{fact}~\label{F5} $$|E_2|\le \frac{8m^2}{r}.$$
\end{fact}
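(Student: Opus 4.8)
The plan is to bound $|E_2|$ by a double-counting argument anchored at the (few) vertices of $G$ of large degree. By Fact~\ref{F3}, every edge $\{u',v'\}\in E_2$ has an endpoint lying in $V_G$ whose degree in $G$ is larger than $r/2$. So let $W_2=\{v\in V_G:\deg_G(v)>r/2\}$; the content of Fact~\ref{F3} is precisely that every edge of $E_2$ is incident to at least one vertex of $W_2$.

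First I would bound $|W_2|$ by a handshake estimate: since $\sum_{v\in V}\deg_G(v)=2|E|\le 2m$ and each vertex of $W_2$ contributes more than $r/2$ to this sum, we get $(r/2)\,|W_2|<2m$, i.e.\ $|W_2|<4m/r$. Next I would bound, for a fixed $v\in W_2$, the number of edges of $E_2$ incident to $v$. By the very definition of $E_2$ every such edge has both endpoints in $V_G$, so an $E_2$-edge at $v$ must go to another vertex of $V_G$; hence $v$ is incident to at most $|V_G|-1$ edges of $E_2$, and since $G$ has at most $m$ edges we have $|V_G|\le 2m$, so this count is less than $2m$.

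Combining the two bounds, and using that every edge of $E_2$ is incident to some vertex of $W_2$,
$$|E_2|\ \le\ \sum_{v\in W_2}\bigl|\{e\in E_2:\ v\in e\}\bigr|\ <\ |W_2|\cdot 2m\ <\ \frac{4m}{r}\cdot 2m\ =\ \frac{8m^2}{r},$$
which is the claimed inequality.

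There is essentially no hard obstacle here; the one point that must be gotten right is to run the count over $W_2$ — the vertices of large degree \emph{in $G$}, of which there are at most $4m/r$ — rather than over the set $W$ of Fact~\ref{F4}, whose bound $8m/r$ would lose a factor of two. This is exactly what Fact~\ref{F3} provides, so everything else reduces to the two one-line estimates above.
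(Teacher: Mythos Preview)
Your proof is correct and is essentially the paper's own argument: anchor every $E_2$-edge at a $V_G$-endpoint of $G$-degree at least $r/2$ (Fact~\ref{F3}), bound the number of such vertices by $4m/r$ via the handshake inequality, and bound the $E_2$-degree of each such vertex by $|V_G|\le 2m$. The paper presents exactly this count in two lines, so there is nothing to add.
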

\begin{proof} By Fact~\ref{F3}, one of its endpoint vertices of an edges in $E_2$ is in $V_G$ and has degree at least $r/2$ in $G$. There are at most $4m/r$ such vertices and each one can have at most $|V_G|\le 2m$ edges.
\end{proof}

Now define for every vertex $w\in W$ the set $I_w$ that contains all the neighbours $u\in \Gamma_H(w)$ where $\deg_H(u)\le r/8$ and $\Gamma_H(u)\cap \Gamma_H(w)$ contains only vertices of degree more than $r+1$ in $H$.
\begin{fact}~\label{F6} $I_w$ is an independent set.
\end{fact}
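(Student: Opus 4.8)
The plan is a short proof by contradiction, directly from the definition of $I_w$. Recall that $I_w$ consists of those $u\in\Gamma_H(w)$ such that $\deg_H(u)\le r/8$ and every vertex of $\Gamma_H(u)\cap\Gamma_H(w)$ has $H$-degree greater than $r+1$. Suppose, toward a contradiction, that $u_1,u_2\in I_w$ are distinct with $\{u_1,u_2\}\in E$. I will contradict one of these two defining conditions.

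First I would do the membership bookkeeping. Since $E\subseteq E(H)$ and $u_1\ne u_2$, we have $u_2\in\Gamma_H(u_1)$; and since $u_2\in I_w\subseteq\Gamma_H(w)$, we have $u_2\in\Gamma_H(w)$ (in particular $u_2\ne w$, as $\Gamma_H(w)\subseteq V\setminus\{w\}$). Hence $u_2\in\Gamma_H(u_1)\cap\Gamma_H(w)$.

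Now the second defining condition of $I_w$, applied to $u_1$, forces $\deg_H(u_2)>r+1$; but the first defining condition, applied to $u_2\in I_w$, gives $\deg_H(u_2)\le r/8$. Since $r=1/(2p)>0$ we have $r/8<r+1$, so these bounds are incompatible --- contradiction. Thus $(I_w\times I_w)\cap E=\emptyset$, i.e.\ $I_w$ is an independent set in $G$.

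I do not anticipate a genuine obstacle: this is essentially just unwinding the definition of $I_w$, and it uses none of the probabilistic content of Lemma~\ref{main} nor Facts~\ref{F1}--\ref{F5}. The only points needing a little care are keeping straight which of the two defining conditions is invoked on which of $u_1,u_2$, and checking the distinctness facts ($u_1\ne u_2$ because $\{u_1,u_2\}$ is an edge, $u_2\ne w$ by definition of the neighbourhood) so that the references to $\Gamma_H(\cdot)$ are legitimate. As a remark, the same idea shows that an edge of $E(H)\setminus E$ between two members of $I_w$ is impossible as well --- by Fact~\ref{F1} it would force $\deg_G(u_1)+\deg_G(u_2)>r+1$, contradicting $\deg_G(u_i)\le\deg_H(u_i)\le r/8$ --- but this stronger statement is not needed here.
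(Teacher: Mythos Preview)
Your proof is correct and follows the same idea as the paper's: pick two adjacent members of $I_w$, observe that one lies in $\Gamma_H(\cdot)\cap\Gamma_H(w)$ of the other, and derive the incompatible degree bounds $\deg_H>r+1$ versus $\deg_H\le r/8$. The only cosmetic difference is that the paper assumes $\{u,v\}\in E(H)$ directly (so it actually shows $I_w$ is independent in $H$, the slightly stronger fact you relegate to a remark), whereas you start from $\{u_1,u_2\}\in E$ and pass through $E\subseteq E(H)$; your detour via Fact~\ref{F1} in the remark is unnecessary, since your main argument already works verbatim for any $\{u_1,u_2\}\in E(H)$.
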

\begin{proof} If $I_w$ contains $\{u,v\}\in E(H)$ then $\deg_H(u),\deg_H(v)<r/8$. Since $\Gamma_H(u)\cap \Gamma_H(w)$ contains $v$, we get $\deg_H(v)>r+1$. This gives a contradiction.
\end{proof}

We now show
\begin{fact}~\label{F7} We have
$$E_1\subseteq E_W:=\bigcup_{w\in W} \{\{w,u\}\ |\ u\in I_w\}$$
\end{fact}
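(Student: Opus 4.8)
The plan is to take an arbitrary edge $e=\{u,v\}\in E_1$ and show it belongs to $E_W$. By the definition of $E_1$ exactly one endpoint of $e$ lies in $V_G$; say $v\in V_G$ and $u\in V_H\setminus V_G$. I would first argue that $v$ is eligible to play the role of ``$w$'' in the definition of $E_W$, i.e.\ that $v\in W$, and then verify that $u$ satisfies the three defining conditions of $I_v$, so that $e=\{v,u\}$ with $u\in I_v$ is one of the edges collected in $E_W$.

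For the first part, since $e\in E_1$ and $u\in V_H\setminus V_G$, Fact~\ref{star} gives $\deg_G(v)>r+1$, and because $E\subseteq E(H)$ we get $\deg_H(v)\ge\deg_G(v)>r+1>r/2$, so $v\in W$ by the definition of $W$. For the second part I need: (a) $u\in\Gamma_H(v)$, which is immediate since $\{u,v\}\in E(H)$; (b) $\deg_H(u)\le r/8$, which is exactly Fact~\ref{F2} applied to $u\in V_H\setminus V_G$; and (c) every vertex of $\Gamma_H(u)\cap\Gamma_H(v)$ has $H$-degree greater than $r+1$. The key observation for (c) is that \emph{every} $H$-neighbour of $u$ already has this property: if $z\in\Gamma_H(u)$ then $\{u,z\}\in E(H)$, and since $u\notin V_G$ this edge cannot be in $E$, hence it lies in $E(H)\setminus E=E_0\cup E_1\cup E_2$; as $u\notin V_G$ we have $|\{u,z\}\cap V_G|\le 1$, so by Fact~\ref{F11} ($E_0=\emptyset$) the edge must be in $E_1$ and $z\in V_G$. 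Applying Fact~\ref{star} to $\{u,z\}\in E_1$ with $u\in V_H\setminus V_G$ gives $\deg_G(z)>r+1$, hence $\deg_H(z)\ge\deg_G(z)>r+1$. Thus $\Gamma_H(u)\subseteq\{z:\deg_H(z)>r+1\}$, and in particular $\Gamma_H(u)\cap\Gamma_H(v)$ contains only such vertices, establishing (c). Combining (a)--(c) yields $u\in I_v$, and therefore $e\in E_W$.

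The argument is essentially bookkeeping over the structural facts already established, so there is no single hard step; the point most worth being careful about is the reduction in (c) --- that because $u$ has no $G$-edges and $E_0$ is empty, \emph{all} of $u$'s $H$-edges are ``$E_1$-type'' edges whose other endpoint has large $G$-degree (and hence large $H$-degree), which is exactly what forces $u$ into $I_v$ rather than merely into $\Gamma_H(v)$.
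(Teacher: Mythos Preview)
Your proof is correct and follows essentially the same approach as the paper's: both identify $v\in W$ via Fact~\ref{star}, check $\deg_H(u)\le r/8$ via Fact~\ref{F2}, and then argue that every common $H$-neighbour of $u$ and $v$ has $H$-degree exceeding $r+1$ by combining $E_0=\emptyset$ with Fact~\ref{star}. The only difference is cosmetic: the paper phrases part~(c) as a proof by contradiction (assume some $z\in\Gamma_H(u)\cap\Gamma_H(v)$ has $\deg_H(z)\le r+1$ and derive a contradiction), whereas you argue directly and in fact prove the slightly stronger statement that \emph{every} $H$-neighbour of $u$ has $H$-degree $>r+1$, not just those also adjacent to $v$.
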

\begin{proof} If $\{w,u\}\in E_1$ then, by the definition of $E_1$ and Fact~\ref{F3} and~\ref{F2}, one of the vertices, say $w$, is in $V_G$ and is in $W$ and the other vertex, $u$, is in $V_H\backslash V_G$ and has degree at most $r/8$ in $H$. Now we show that $u\in I_w$. If $u\not\in I_w$ then $\Gamma_H(u)\cap\Gamma_H(w)$ contains a vertex $v$ of degree less or equal to $r+1$. If $v\in V_H\backslash V_G$ then $\{u,v\}$ is an edge in $E_0$ and we get a contradiction to Fact~\ref{F11}. Therefore $v\in V_G$ then since $\{v,u\}$ is an edge in $E_1$ and $\deg_H(u)\le r/8$ we must have $\deg_H(v)\ge r+1$ and again we get a contradiction.
\end{proof}

Since $U:=E(H)\backslash E_W\subseteq E\cup E_2$, by Fact~\ref{F5}, we get
\begin{fact}~\label{F8}
$$|U|=|E(H)\backslash E_W|\le m+\frac{8m^2}{r}.$$
\end{fact}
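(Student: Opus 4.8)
The plan is to read off Fact~\ref{F8} directly from the structural decomposition built in Facts~\ref{F1}--\ref{F7}. Recall that $E(H)\setminus E$ was partitioned into the three pairwise disjoint sets $E_0,E_1,E_2$, so that $E(H)=E\cup E_0\cup E_1\cup E_2$ and $E_0,E_1,E_2$ are all disjoint from $E$. Given this, the statement is essentially bookkeeping, so I would not expect a genuine obstacle.

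First I would establish the containment $U=E(H)\setminus E_W\subseteq E\cup E_2$. Fix an edge $e\in U$. If $e\in E$ there is nothing to prove, so assume $e\in E(H)\setminus E=E_0\cup E_1\cup E_2$. By Fact~\ref{F11} we have $E_0=\emptyset$, hence $e\in E_1\cup E_2$. By Fact~\ref{F7}, $E_1\subseteq E_W$; since $e\notin E_W$ by the definition of $U$, we get $e\notin E_1$, and therefore $e\in E_2$. This proves $U\subseteq E\cup E_2$, or equivalently $E(H)\setminus(E\cup E_2)=E_1\subseteq E_W$.

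It then remains to bound cardinalities. Since $G$ is an $m$-graph we have $|E|\le m$, and Fact~\ref{F5} gives $|E_2|\le 8m^2/r$; hence $|U|\le|E|+|E_2|\le m+8m^2/r$, which is the claimed bound.

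The only point worth a moment's care is that Fact~\ref{F7} asserts the one-sided containment $E_1\subseteq E_W$ rather than an equality. But that is exactly what is needed: deleting $E_W$ is guaranteed to remove every edge of $E_1$ from $E(H)$, while any additional edges of $E$ or $E_2$ that happen to lie in $E_W$ only make $U$ smaller, which can only help an upper bound on $|U|$.
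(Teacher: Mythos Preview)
Your argument is correct and matches the paper's own proof exactly: the paper also observes that $U=E(H)\setminus E_W\subseteq E\cup E_2$ (using $E_0=\emptyset$ and $E_1\subseteq E_W$) and then applies Fact~\ref{F5} together with $|E|\le m$ to conclude. You have simply written out in more detail the containment step that the paper states in one line.
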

Therefore, we first find $W$ and for each $w\in W$ learns the neighbours of $w$ in $I_w$. This eliminates all the edges of $w$ that are not in the graph. In particular, it removes all the edges in $E_1$ and some of those in $E_2$. Then for each edge in $U=E(H)\backslash E_W$ we ask a query.

We now can prove

\begin{theorem}\label{MCTR} There is a two-round Monte Carlo randomized learning algorithm with $1/poly(n)$ error probability for $m$-Graph that asks $O(m^{4/3}\log n)$ queries.

There is a three-round Las Vegas randomized learning algorithm for $m$-Graph that asks $O(m^{4/3}\log n)$ expected number of queries.
\end{theorem}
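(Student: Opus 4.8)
The plan is to instantiate the structural analysis of Facts~\ref{F1}--\ref{F8} with the balanced parameter choice $r=\Theta(m^{2/3})$ and $p=1/(2r)=\Theta(1/m^{2/3})$; for large $m$ this satisfies the standing hypothesis $p<1/(8\sqrt m)$ of that analysis, and for $m=O(1)$ the target bound $O(m^{4/3}\log n)=O(\log n)$ is achieved trivially. (The exponent $2/3$ is forced by balancing the first-round cost $O((1/p)^2\log n)=O(r^2\log n)$ against the second-round neighbour-learning cost $O(|W|\cdot m\log n)=O(m^2\log n/r)$, i.e.\ $r^3=m^2$.) I would first run the algorithm of Figure~\ref{Alg1} with these $p,r$ and $\delta=1/\poly(n)$. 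By Lemma~\ref{main} this uses $t=O(m^{4/3}\log n)$ queries, returns $H$ with $E\subseteq E(H)$, and with probability $1-\delta$ establishes Fact~\ref{F1} and hence all of Facts~\ref{F11}--\ref{F8}. From $H$ alone I would then compute: the set $W$ of $H$-vertices of degree $>r/2$, with $|W|\le 8m/r=O(m^{1/3})$ (Fact~\ref{F4}); the sets $I_w$, each independent in $H$ and therefore in $G$ since $E\subseteq E(H)$ (Fact~\ref{F6}); and $U=E(H)\setminus E_W$, with $|U|\le m+8m^2/r=O(m^{4/3})$ (Fact~\ref{F8}).

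\textbf{Second round and the Monte Carlo claim.} The second round is a single non-adaptive batch: for each $w\in W$ I would invoke the procedure of Figure~\ref{AlgN} on $I_w$ with $p=1/m$, $t=4m(\ln n+\ln(1/\delta'))$ and $\delta'=1/\poly(n)$ to recover $\Gamma_G(w)\cap I_w$, deleting from $E(H)$ every edge $\{w,u\}$, $u\in I_w$, not reported as a real neighbour; and simultaneously ask the two-vertex query $\{u,v\}$ for every $\{u,v\}\in U$, deleting it on answer ``NO''. The neighbour-learning costs $|W|\cdot O(m\log n)=O(m^{4/3}\log n)$ and the direct queries cost $|U|=O(m^{4/3})$, so both rounds together use $O(m^{4/3}\log n)$ queries. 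No real edge is ever deleted (a deletion is only triggered by a ``NO''), so $E\subseteq E(H)$ holds throughout; and since $E(H)=E_W\sqcup U$, every false edge of $E(H)$ lies either in some $\{w\}\times I_w$ -- hence removed when the corresponding neighbour-learning run succeeds (a false $\{w,u\}$ has $u\in I_w\setminus\Gamma_G(w)$) -- or in $U$ -- hence removed with certainty by its direct query; the structural Facts~\ref{F11}--\ref{F8} are exactly what keep $|W|$ and $|U|$, and thus the query count, small. A union bound over the failure of Fact~\ref{F1} (probability $\le\delta$) and of the $\le|W|$ neighbour-learning runs (probability $\le|W|\delta'$) gives $E(H)=E$ with probability $1-1/\poly(n)$, which is the Monte Carlo statement.

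\textbf{Three-round Las Vegas.} I would run the two rounds above and append a verification round: ask $\{u,v\}$ for each $\{u,v\}\in E(H)$ and delete it on ``NO''. Since $E\subseteq E(H)$ is maintained with certainty, the output after this round is exactly $E$, so the algorithm never errs. For the expected query count: the first round is deterministic with $O(m^{4/3}\log n)$ queries; conditioned on the event that Fact~\ref{F1} holds and every neighbour-learning run succeeds -- probability $1-1/\poly(n)$ -- rounds two and three use $O(m^{4/3}\log n)$ and $O(m)$ queries; and in the complementary event even the trivial fallback of querying all $\binom n2$ pairs costs only $\poly(n)$, so its contribution $(1/\poly(n))\cdot\poly(n)$ to the expectation is negligible once $\delta$ is taken to be a large enough inverse polynomial in $n$. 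Hence the expected number of queries is $O(m^{4/3}\log n)$.

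\textbf{Main obstacle.} The hard part will not be any single probabilistic estimate but the bookkeeping: I must check that \emph{all} of Facts~\ref{F1}--\ref{F8} (each of which rides on the single event of Fact~\ref{F1}) are simultaneously in force, so that $|W|$, $|U|$, and the covering of the false edges by $E_W\cup U$ are under control; that each $I_w$ really is independent in $G$, so that Lemma~\ref{FinfN} is applicable; and that the second round is genuinely non-adaptive given $H$ (the procedure of Figure~\ref{AlgN} is, since its queries do not depend on earlier answers). For the Las Vegas variant the delicate point is that the rare bad event can be polynomially costly, so $\delta$ must be chosen a sufficiently large inverse polynomial in $n$ to keep the expectation at $O(m^{4/3}\log n)$.
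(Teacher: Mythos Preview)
Your proposal is correct and follows essentially the same approach as the paper: instantiate the structural analysis (Facts~\ref{F1}--\ref{F8}) with $p=\Theta(1/m^{2/3})$ and $r=\Theta(m^{2/3})$, then in the second round run the neighbour-learning procedure of Lemma~\ref{FinfN} for each $w\in W$ and directly test every edge of $U$; your write-up simply supplies more of the correctness bookkeeping and the Las Vegas fallback argument than the paper's terse proof does.
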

\begin{proof} Consider the First round in Figure~\ref{Alg1} and the second round in Figure~\ref{Alg2}.
We choose $p=1/m^{2/3}$ and $\delta=1/poly(n)$. Then $r=m^{2/3}/2$ and $t=O(m^{4/3}\log n)$.

By Fact~\ref{F4}, $|W|\le 8m/r$. By Lemma~\ref{FinfN} finding the neighbours of each $w\in W$ takes $O(m\log n)$ queries. Therefore the total number of queries for steps 1-5 in the algorithm is $O((m^2/r)\log n)=O(m^{4/3}\log n)$. By Fact~\ref{F8} the number of queries in steps 6-8 is at most $m+8m^2/r=O(m^{4/3})$.
\end{proof}

\begin{figure}[t!]
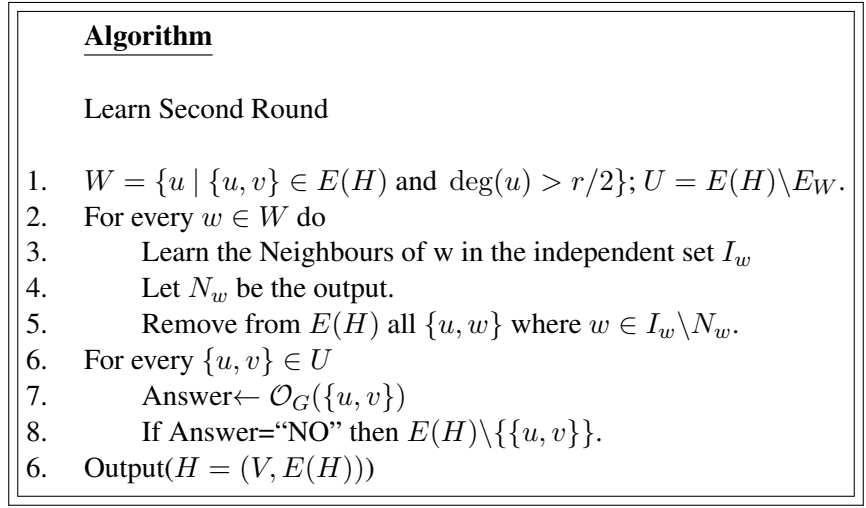

  \begin{center}
   \fbox{\fbox{\begin{minipage}{28em}
  \begin{tabbing}
  xxxx\=xxxx\=xxxx\=xxxx\=xxxx\= \kill
  \>\underline{{\bf Algorithm}}\\ \\
  \> Learn Second Round\\ \\
  1. \>$W=\{u\ |\ \{u,v\}\in E(H)\mbox{\ and\ } \deg(u)>r/2\}$; $U=E(H)\backslash E_W$.\\
  2. \> For every $w\in W$ do\\
  3. \>\> Learn the Neighbours of w in the independent set $I_w$\\
  4. \>\>Let $N_w$ be the output.\\
  5. \>\>Remove from $E(H)$ all $\{u,w\}$ where $w\in I_w\backslash N_w$.\\
  6. \> For every $\{u,v\}\in U$ \\
  7. \>\> Answer$\gets {\cal O}_G(\{u,v\})$\\
  8. \>\> If Answer=\NO\ then $E(H)\backslash \{\{u,v\}\}$.\\
  6. \> Output($H=(V,E(H))$)
  \end{tabbing}\end{minipage}}}
  \end{center}
	\caption{Second Round in the Algorithm.}
	\label{Alg2}
	\end{figure}

\subsection{Randomized Three Rounds Monte Carlo Algorithm}
In this section we show that when $m$ is known to the algorithm then there is a three-round Monte Carlo algorithm that asks $O(m\log n)$ queries.

In~\cite{AC08}, Angluin and Chen gave a three-round Monte Carlo randomized algorithm that asks $O(m\log n+\sqrt{m}\log^2n)$ queries. We give here a three-round algorithm that asks $O(m\log n +m^{1.5})$ queries. Both results imply
\begin{theorem}\label{MCTR2} There is a three-round Monte Carlo randomized learning algorithm with $1/poly(n)$ error probability for $m$-Graph that asks $O(m\log n)$ queries.

There is a four-round Las Vegas randomized learning algorithm for $m$-Graph that asks $O(m\log n)$ expected number of queries.
\end{theorem}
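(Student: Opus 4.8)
The plan is to combine two three-round Monte Carlo algorithms. Angluin and Chen~\cite{AC08} give one of query complexity $O(m\log n+\sqrt m\log^2 n)$; I will give a new one of query complexity $O(m\log n+m^{3/2})$, and the learner runs the new one when $m\le\log^2 n$ and theirs when $m>\log^2 n$. In the first case $m^{3/2}=\sqrt m\cdot m\le m\log n$, and in the second case $\sqrt m\log^2 n\le\sqrt m\cdot\sqrt m\log n\le m\log n$, so the combination always asks $O(m\log n)$ queries. The four-round Las Vegas algorithm appends one verification round to the Monte Carlo one — query every edge still in the candidate graph and delete those answered ``NO''. Since the first round (Figure~\ref{Alg1}) keeps $E\subseteq E(H)$ with probability $1$ (Lemma~\ref{main}(1)), this round removes exactly the spurious candidates, so the output is always $E$; and, exactly as in the proofs of Theorems~\ref{UBNAMC} and~\ref{MCTR}, with $\delta=1/\poly(n)$ the expected number of surviving spurious candidates is $O(1)$, so the extra round costs $O(m)$ expected queries.

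For the new algorithm, \textbf{Round 1} is Figure~\ref{Alg1} with $p=\Theta(1/\sqrt m)$ (taking $p<1/(8\sqrt m)$) and $r=1/(2p)=\Theta(\sqrt m)$; it asks $t=O((1/p^2)\log n)=O(m\log n)$ queries and, by Lemma~\ref{main}, yields $E\subseteq E(H)$ together with Fact~\ref{F1} (with probability $1-1/\poly(n)$), hence all of Facts~\ref{F11}--\ref{F8} for this $r$. In particular the hub set $W$ of vertices of $H$-degree $>r/2$ has $|W|=O(\sqrt m)$ (Fact~\ref{F4}), each $I_w$ is independent in $G$ (Fact~\ref{F6}), and every spurious edge of $E(H)$ lies either in $E_W=\bigcup_{w\in W}\{\{w,u\}:u\in I_w\}$ or in $U=E(H)\setminus E_W$, with $|U|=O(m^{3/2})$ (Facts~\ref{F7},~\ref{F8}). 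So it remains to compute $\Gamma_G(w)\cap I_w$ for each $w\in W$ (which cleans up $E_W$) and to check the $O(m^{3/2})$ edges of $U$.

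Running Figure~\ref{AlgN} once per hub (Lemma~\ref{FinfN}) costs $O(m\log n)$ each, i.e. $O(m^{3/2}\log n)$ in total — a $\log n$ factor too much, because the density $1/m$ it uses is far too small for a typical hub. \textbf{Round 2} fixes this by estimating $d_w:=|\Gamma_G(w)\cap I_w|$ up to a constant factor for each $w\in W$: since $I_w$ is independent, a query $Q\cup\{w\}$ with $Q\subseteq I_w$ (each element kept with probability $p'$) is answered ``NO'' with probability exactly $(1-p')^{d_w}$, so from $\Theta(\log n)$ queries at each of the $O(\log m)$ densities $p'=2^{-j}$ with $\sqrt m\le 2^j\le m$ — enough scales because $d_w\le m$ — one reads off an over-estimate $\widetilde d_w$ with $d_w\le\widetilde d_w=O(\sqrt m+d_w)$, with probability $1-1/\poly(n)$ after a Chernoff bound union-bounded over the $O(\sqrt m\log m)$ hub--scale pairs; this costs $O(\sqrt m\log m\log n)=O(m\log n)$ queries, using $\log m=O(\sqrt m)$. \textbf{Round 3} then runs Figure~\ref{AlgN} on each $I_w$ with density $1/\widetilde d_w\le 1/d_w$ and $\Theta(\widetilde d_w\log n)$ repetitions, which by Lemma~\ref{FinfN} (with $m$ there replaced by $\widetilde d_w$) returns $\Gamma_G(w)\cap I_w$ w.h.p.; since $\sum_{w\in W}\widetilde d_w=O(|W|\sqrt m+\sum_{w\in W}\deg_G(w))=O(m)$ this costs $O(m\log n)$, and checking the $O(m^{3/2})$ edges of $U$ in the same round adds $O(m^{3/2})$. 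Summing the three rounds gives $O(m\log n+m^{3/2})$ queries, and after Round 3 no true edge has been removed while every spurious edge has, so $E(H)=E$ with probability $1-1/\poly(n)$.

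The hard part will be Round 2: one must learn $\Gamma_G(w)\cap I_w$ at cost $O((\sqrt m+d_w)\log n)$ per hub even though the degrees $\deg_G(w)$ are unknown, highly non-uniform, and not a priori bounded by anything smaller than $m$ (a single hub can acquire $\Omega(n)$ spurious incident edges in Round 1, so $|I_w|$ and $\deg_H(w)$ give no useful handle). The point is that a constant-factor estimate of $d_w$ needs only $O(\log m\log n)$ queries per hub — because $d_w\le m$ leaves only $O(\log m)$ scales worth testing — and $\sqrt m\cdot\log m=O(m)$ keeps the estimation round at $O(m\log n)$; with the calibrated $\widetilde d_w$ in hand, and since they sum to $O(m)$, the geometric sums in Round 3 collapse.
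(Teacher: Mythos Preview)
Your proof is correct and follows essentially the same approach as the paper's: combine Angluin--Chen's $O(m\log n+\sqrt m\log^2 n)$ three-round algorithm with a new $O(m\log n+m^{3/2})$ three-round algorithm, where the new algorithm runs Figure~\ref{Alg1} with $p=\Theta(1/\sqrt m)$ in Round~1, spends Round~2 estimating the degree of each hub $w\in W$ to within a constant factor using $O(\log m\log n)$ queries per hub (the paper cites~\cite{F16} for this step, while you spell out the doubling-plus-Chernoff argument directly), and in Round~3 learns $\Gamma_G(w)\cap I_w$ via Figure~\ref{AlgN} with the calibrated density and checks the $O(m^{3/2})$ edges of $U$. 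Your accounting $\sum_{w\in W}\widetilde d_w=O(|W|\sqrt m+\sum_w\deg_G(w))=O(m)$ is exactly the computation the paper leaves implicit, and the Las Vegas extension via a verification round matches Theorems~\ref{UBNAMC} and~\ref{MCTR}.
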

\begin{proof}
If $m\ge \log^{2}n$ then $O(m\log n+\sqrt{m}\log^2n)=O(m\log n)$. Otherwise, $m< \log^2n$ and then $O(m\log n +m^{1.5})=O(m\log n)$.

Now we describe the algorithm. In the first round we run the algorithm in Figure~\ref{Alg1} with $p=1/(16\sqrt{m})$. By the facts in Section~\ref{FactS}, for $r=8\sqrt{m}$ we have
\begin{enumerate}
\item All the edges $\{u,v\}\in E(H)\backslash E$ satisfies $\deg_G(u)+\deg_G(v)> 8\sqrt{m}+1$.
\item $E_0=\emptyset$, $|E_2|\le m^{1.5}+1$, $E_1\subset E_W$, $|W|\le \sqrt{m}$ and $|E(H)\backslash E_W|\le m^{1.5}+m+1$.
\end{enumerate}
In the first round we ask $O(1/p^2)(\log n +\log (1/\delta))=O(m\log n)$ queries. Now for each $w\in W$ we need to find the neighbors $\Gamma_G(w)\cap I_w$ of $w$~in~$G$. If we do that using the previous algorithm we get $O(m^{1.5}\log n)$ queries. Instead, we will add another round that estimates the number of neighbours of each $w\in W$ in~$G$ (and in $I_w$). We then learn the neighbour with $O(deg_G(w)\log n)$. This is possible because $I_w$ is ab independent set. The estimation can be done by doubling and estimating with Chernoff bound. See~\cite{F16}. The estimation can be done with $O(\log m \log n)$ queries for each $w\in W$ and success probability $1-1/poly(n)$ and therefore with $O(|W|\log m\log n)=O(\sqrt{m}\log m \log n)$ queries. Then since $|E(H)\backslash E_W|\le 2m^{1.5}+m+1$ finding the other edges in the graph can be done in (round 2 or 3) with $O(m^{1.5})$ queries. The total number of queries is $O(m\log n+m^{1.5})$.
\end{proof}

\subsection{Two-Round Learning for Large \texorpdfstring{$n$}{}}
In this section we prove
\begin{restatable}{theorem}{sthm}\label{MCTRN} Let $w>m$. There is a two-round randomized Monte Carlo learning algorithm with $1/w^{O(1)}$
error probability for $m$-Graph that asks $O(m^2 \log w+m\log n)$.

In particular, when $w=n^{c/m}$ for any constant $c$ (and therefore $m<\log n$) the algorithm asks $O(m\log n)$ queries.
\end{restatable}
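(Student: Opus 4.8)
The plan is to combine a "global" first round that learns enough structure about high-degree vertices with a "local" second round that uses the extra budget $O(m^2 \log w)$ to test all candidate edges directly. Concretely, I would set the first-round sampling probability $p$ so that $r = 1/(2p)$ is around $m$, say $p = 1/(cm)$ for a suitable constant $c$, and run the algorithm of Figure~\ref{Alg1} with $\delta = 1/w^{O(1)}$. By Lemma~\ref{main} this costs $t = O((1/p)^2(\log n + \log(1/\delta))) = O(m^2 \log n + m^2 \log w) = O(m^2 \log n)$ queries — but this is too many. So instead I expect $p$ must be chosen coarser, around $p = 1/\sqrt{m}$ as in Section~\ref{FactS}, giving a first round of $O(m \log n)$ queries, $|W| \le O(\sqrt m)$ vertices of high degree in $H$, and the structural facts $E_0 = \emptyset$, $E_1 \subseteq E_W$, $|E(H)\setminus E_W| \le O(m^{1.5})$ from Facts~\ref{F4}--\ref{F8}.

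Then the key idea distinguishing the "large $n$" regime is that for each $w \in W$ we must learn $\Gamma_G(w) \cap I_w$, but we are no longer allowed $O(m \log n)$ queries per vertex (that would be $O(m^{1.5}\log n)$ total). Instead, using that $I_w$ is an independent set (Fact~\ref{F6}), the neighbours of $w$ inside $I_w$ form a group-testing instance on at most $|I_w| \le n$ items with at most $m$ positives, and — crucially — since $w > m$ we can afford a non-adaptive group-testing design that uses $O(m^2 \log |I_w|)$ queries but where, because the relevant "confusable" population can first be cut down, the $\log$ factor becomes $\log w$ rather than $\log n$. The mechanism I expect is: the first round's queries already restrict, for each $w$, the set of plausible neighbours to a pool whose pairwise-distinguishing requires only $O(\log w)$ worth of resolution, so a single batch of $O(m^2 \log w)$ non-adaptive queries (shared or summed over the $O(\sqrt m)$ vertices in $W$, or rather $O(m) \cdot O(m \log w)$) suffices; adding the direct edge-tests on $E(H)\setminus E_W$ costs a further $O(m^{1.5})$. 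Summing, the second round is $O(m^2 \log w)$ and the total is $O(m^2 \log w + m \log n)$.

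For the "in particular" clause: setting $w = n^{c/m}$ forces $m < c\log n$ (since $w > m$ requires $n^{c/m} > m$, which for $m \ge c \log n$ fails), so $\log w = (c/m)\log n$ and $m^2 \log w = cm \log n$, whence the bound collapses to $O(m\log n)$, and the error probability $1/w^{O(1)} = 1/n^{\Omega(1/m)}$ is still $1/\poly(n)$ when $m = O(\log n/\log\log n)$ — more care is needed for $m$ close to $\log n$, where one boosts by repetition.

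The main obstacle I anticipate is making precise why the second-round group-testing step only needs $\log w$ rather than $\log n$: the naive non-adaptive cover-free design on $n$ items with $m$ positives costs $\Theta(m^2 \log n)$. The trick must be that after the first round one can argue each high-degree vertex $w$ has only $w^{O(1)}$ — or more precisely $2^{O(m)}$, consistent with $w > m$ and the $n > 2^m$ hypothesis from the theorem's statement — genuinely ambiguous neighbour candidates, so the per-vertex cost is $O(m^2 \log w)$; quantifying this ambiguity (likely via a counting/union-bound argument over the $\binom{n}{\le m}$ possible edge sets consistent with the first-round answers, combined with the structure of $I_w$) is where the real work lies.
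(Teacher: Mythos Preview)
Your proposal has a genuine gap, and it is exactly the obstacle you flag at the end: there is no mechanism in your outline that shrinks the ``ambiguity'' for each $w\in W$ from $\log n$ down to $\log w$. After the first round of Figure~\ref{Alg1} with $p\approx 1/\sqrt{m}$, the independent sets $I_w$ can still have size $\Theta(n)$, and nothing in the Facts of Section~\ref{FactS} bounds the number of plausible neighbour candidates of $w$ by $w^{O(1)}$. Your suggested counting argument over $\binom{n}{\le m}$ consistent edge sets does not give this either, since the first-round answers only cut down pairs $\{u,v\}$ with small $|\Gamma_G(\{u,v\})|$, not the neighbour pool of a high-degree vertex. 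So the second round as you describe it would still cost $\Omega(m\log n)$ per vertex in $W$, i.e.\ $\Omega(m^{3/2}\log n)$ total, and the $m^2\log w$ term never materialises.

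The paper's argument is completely different and much simpler; it does not use the $W$/$I_w$ machinery at all. The trick is to \emph{reduce $n$ to $w^{O(1)}$ before doing anything else}: randomly hash the $n$ vertices into $u=\poly(w)$ buckets $V_1,\ldots,V_u$. Since the target has at most $2m<2w$ vertices of nonzero degree, with probability $1-1/w^{O(1)}$ each bucket contains at most one such vertex. Conditioned on this, treating each $V_i$ as a single ``super-vertex'' gives an $m$-Graph on $u$ vertices, which the non-adaptive Monte Carlo algorithm of Theorem~\ref{UBNAMC} learns in one round with $O(m^2\log u)=O(m^2\log w)$ queries. This recovers the set of $\le m$ ``set edges'' $\{V_i,V_j\}$. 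In the second round, for each such set edge one locates the actual endpoint inside $V_j$ by treating $V_i$ as a single vertex and running the $1$-Loop (binary-search) algorithm on $V_j$, costing $O(\log n)$ per endpoint, hence $O(m\log n)$ total. Summing gives $O(m^2\log w+m\log n)$. The ``in particular'' clause then follows by the arithmetic you wrote.
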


\begin{proof} We first partition the set of vertices into $u=poly(w)$ sets $V_1,\ldots,V_u$. The probability that each set contains at most one vertex of degree not equal zero is at least
\begin{eqnarray}\label{ran034}
\left(1-\frac{1}{u}\right)\left(1-\frac{2}{u}\right)\cdots \left(1-\frac{2m-1}{u}\right)
\ge 1-\frac{m(2m-1)}{u}\ge 1-\frac{1}{w^{O(1)}}.
\end{eqnarray}
Assuming the vertices that have degree greater than zero are in different sets, we learn the graph over the $u$ sets in one round with probability at least $1-1/w^{O(1)}$ and
$O(m^2\log u)=O(m^2\log w)$ queries using Theorem~\ref{UBNAMC}. That is, we assume that each set $V_j$ is one vertex and we learn the graph over the sets $V_1,\ldots,V_u$. Then, for each query $Q\subseteq [u]$ the algorithm asks the query $\cup_{w\in Q}V_w$. When the algorithm discover an edge $\{V_i,V_j\}$ then it knows that there is an edge between one of the vertices in $V_i$ with one of the vertices in $V_j$. We will call the edge $\{V_i,V_j\}$ a {\it set edge} and $V_i$ a {\it set vertex}.

Now, suppose there is a set of edges $e$ with the set of endpoints vertices $V_i$ and $V_j$. We can learn the endpoints vertices of $e$ deterministically with $O(\log n)$ queries. To learn the endpoint in $V_j$ the algorithm considers $V_i$ as one vertex and runs the algorithm that learn $1$-Loop in $V_j$. That is, for each query $Q\subseteq V_j$ the algorithm asks the query $V_i\cup Q$. Therefore, in the second round, we can deterministically learn the endpoints of all the edges in $O(m\log n)$ queries.
\end{proof}

We now convert the above algorithm to a three-round Las Vegas algorithm.

We first give one definition and a lemma. For a query $Q$ and a vertex $u$ we define $[u\in Q]=1$ if $u\in Q$ and $[u\in Q]=0$ if $u\not\in Q$. We now prove
\begin{lem}\label{oneor} There is a non-adaptive deterministic algorithm that asks $O(\log n)$ queries and if the target is $0$-Loop or $1$-Loop then it learns the target and if
it is $k$-Loop, $k>1$, then it returns an ``{\rm ERROR}''.
\end{lem}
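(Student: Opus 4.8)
The plan is to use the standard binary-encoding group-testing test set, augmented with complementary queries, so that a single loop can be decoded while the presence of two or more loops is detected as an inconsistency.

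First I would fix an injective labelling of the $n$ vertices by binary strings of length $\ell=\lceil\log_2 n\rceil$. For each bit position $j\in[\ell]$ I define the query $Q_j=\{u\in V : \text{the $j$-th bit of the label of $u$ is }1\}$ together with its complement $\overline{Q}_j=V\setminus Q_j$. This is a fixed collection of $2\ell=O(\log n)$ queries, so the resulting algorithm is non-adaptive and deterministic. Writing $S\subseteq V$ for the (unknown) set of looped vertices, a query $Q$ returns ``YES'' exactly when $Q\cap S\neq\emptyset$ and ``NO'' otherwise.

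Next I would analyse the pair of answers obtained on $(Q_j,\overline{Q}_j)$. Since $Q_j\cup\overline{Q}_j=V$, the pair evaluates to (``NO'', ``NO'') precisely when $S=\emptyset$; it evaluates to (``YES'', ``YES'') precisely when $S$ contains a vertex whose $j$-th label bit is $1$ and another whose $j$-th label bit is $0$, which forces $|S|\ge 2$; and in every remaining situation it evaluates to (``YES'', ``NO'') or (``NO'', ``YES''), which identifies the $j$-th label bit of the unique looped vertex when $|S|=1$. The decoding rule is then: if some pair is (``YES'', ``YES''), output ``ERROR''; otherwise, if some pair is (``NO'', ``NO''), output $S=\emptyset$; otherwise read the $\ell$ bits off the (``YES'', ``NO'')/(``NO'', ``YES'') pairs to obtain a vertex $u$ and output $S=\{u\}$.

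Finally I would verify correctness by a short case analysis on $|S|$. If $|S|=0$ every pair is (``NO'', ``NO''), so we output the empty target. If $S=\{u\}$, then for each $j$ exactly the side containing $u$ answers ``YES'', so no pair is (``YES'', ``YES'') nor (``NO'', ``NO''), and the decoded bit string is exactly the label of $u$; hence we output $\{u\}$ correctly. If $|S|\ge 2$, choose distinct $u,v\in S$; their labels differ in some position $j$, so the pair $(Q_j,\overline{Q}_j)$ is (``YES'', ``YES'') and we output ``ERROR''. The only point needing slight care --- and the closest thing to an obstacle --- is confirming that the ``consistent'' answer pattern (every pair a ``YES''/``NO'' split) cannot occur when $|S|\ge 2$; this is exactly the observation that two distinct labels must disagree on some bit and thereby create a (``YES'', ``YES'') pair. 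Everything else is immediate.
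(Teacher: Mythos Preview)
Your proof is correct. Both your argument and the paper's rest on the same idea---assign to each vertex a distinct binary characteristic vector over the query set, then use the fact that the answer vector is the bitwise OR of the characteristic vectors of the looped vertices---but the specific encodings differ. You take the plain $\ell$-bit binary label together with complementary queries $Q_j,\overline{Q}_j$, and detect $|S|\ge 2$ by observing a $(\text{YES},\text{YES})$ pair; the paper instead assigns each vertex a constant-weight codeword (exactly $t$ ones out of $2t$ positions, with $\binom{2t}{t}\ge n$) and detects $|S|\ge 2$ by counting more than $t$ ones in the answer vector. Your construction is a special case of theirs, since each vertex lies in exactly one query of every complementary pair and hence in exactly $\ell$ of the $2\ell$ queries; however you use only $2^\ell$ of the $\binom{2\ell}{\ell}$ available weight-$\ell$ patterns, which costs roughly a factor of two in the number of queries ($\approx 2\log_2 n$ versus $\approx \log_2 n$). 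Both are $O(\log n)$, so either suffices for the lemma; your version is arguably simpler to state, while the paper's is slightly tighter.
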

\begin{proof} We define the set of $2t$ queries $\{Q_1,\ldots,Q_{2t}\}$ as follows. Each vertex $i$ appears in
exactly $t$ queries and no two vertices appears in the same set of queries. We must have ${\binom{2t}{t}}\ge n$ and therefore it is enough to take $2t=\log n+ 2\log\log n=O(\log n)$.

Now if the target is $0$-Loop then the vector of answers is $([i\in Q_1],\ldots,[i\in Q_{2t}])$ is the zero vector.
If the target is $1$-Loop, $\{i\}$, then the vector of answers is $([i\in Q_1],\ldots,[i\in Q_{2t}])$ which uniquely determines $i$. This vector contains $t$ ones and $t$ zeros. When the target is $k$-Loop, $L=\{\{i_1\},\ldots,\{i_k\}\}$, $k>1$, then the vector of answers is $\vee_{j=1}^k([i_j\in Q_1],\ldots, [i_j\in Q_{2t}])$ (bitwise or) which contains at least $t+1$ ones.
This indicates that the target is a $k$-Loop for some $k>1$.
\end{proof}

\begin{restatable}{theorem}{tthrm}\label{LVTRN} There is a three-round randomized Las Vegas learning algorithm for $m$-Graph that asks $O(m^2 \log m+m\log n)$ expected number of queries.

In particular, when $n\ge m^m$, the algorithm asks $O(m\log n)$ queries.
\end{restatable}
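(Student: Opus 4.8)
The plan is to convert the two-round Monte Carlo algorithm of Theorem~\ref{MCTRN} into a three-round Las Vegas algorithm by making every source of randomized failure \emph{self-detecting}, and -- only on the rare detected failure -- falling back to the non-adaptive deterministic $m$-Graph learner that asks $O(m^3\log n)$ queries (\cite{B15}). Since that fallback is non-adaptive it costs only one extra round, and since it is invoked with probability $1/m^{O(1)}$ its contribution to the \emph{expected} query count is negligible.

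\textbf{Round 1 (non-adaptive).} As in the proof of Theorem~\ref{MCTRN}, partition $V$ uniformly into $u=\poly(m)$ parts $V_1,\dots,V_u$, where the constant in $u$ is chosen (using the estimate~\eqref{ran034}) so that every part contains at most one vertex of positive degree with probability at least $1-1/m^{c}$ for a sufficiently large constant $c$. Simulate Figure~\ref{Alg1} (equivalently Theorem~\ref{UBNAMC}) on the $u$ super-vertices -- each query $Q\subseteq[u]$ realized as $\bigcup_{j\in Q}V_j$ -- with $\delta=1/m^{c}$; this uses $O(m^2\log u)=O(m^2\log m)$ queries and returns a set-graph $H_{\rm set}$ on $[u]$ that, by Lemma~\ref{main}\,(1), \emph{always} satisfies $E_{\rm set}\subseteq E(H_{\rm set})$, where $E_{\rm set}$ is the set of super-pairs joined by a true cross-edge of $G$. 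Flag \textsf{ERROR} if $|E(H_{\rm set})|>2m$ or if some super-vertex has $H_{\rm set}$-degree larger than $m$.

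\textbf{Round 2.} If \textsf{ERROR} was flagged, run the non-adaptive deterministic $O(m^3\log n)$-query learner and output its answer. Otherwise, for each $\{V_i,V_j\}\in E(H_{\rm set})$ run, in both directions, the $O(\log n)$-query routine of Lemma~\ref{oneor}: treating $V_i$ as a single vertex and applying it to $V_j$ either identifies the unique vertex $x_{ij}\in V_j$ that has a neighbour in $V_i$, or reports ``$0$-Loop'' (no such vertex), or reports \textsf{ERROR} (two or more such vertices); symmetrically for $V_i$. This costs $O(|E(H_{\rm set})|\log n)=O(m\log n)$ queries. If no \textsf{ERROR} is returned, let $\hat G$ contain the edge $\{x_{ij},x_{ji}\}$ for every $\{V_i,V_j\}\in E(H_{\rm set})$ whose two routines returned vertices, and output $\hat G$; otherwise proceed to Round~3 and run the $O(m^3\log n)$-query deterministic learner. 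Thus at most three rounds are used.

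\textbf{Correctness and accounting -- the main obstacle.} The crux is to show that \emph{whenever no} \textsf{ERROR} \emph{is ever flagged, $\hat G=G$, regardless of whether the random partition was good}. First, an internal edge inside some $V_j$ would leave $\{V_j,V_i\}\in E(H_{\rm set})$ for every $i$ (every such query is answered ``YES'', so Figure~\ref{Alg1} never removes it), forcing $H_{\rm set}$-degree $u-1>m$ and hence \textsf{ERROR}; so under the no-\textsf{ERROR} hypothesis $G$ has only cross-edges, and moreover in the Lemma~\ref{oneor} phase every $V_i$ and every subset of $V_j$ is edge-free, so the oracle answers exactly ``$\exists$ cross-edge between $V_i$ and the queried subset''. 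Second, no \textsf{ERROR} from Lemma~\ref{oneor} means that for every $\{V_i,V_j\}\in E(H_{\rm set})$ at most one vertex of $V_j$ has a neighbour in $V_i$ and vice versa; since any cross-edge between $V_i$ and $V_j$ must use those two distinguished vertices, $\{V_i,V_j\}$ carries at most one cross-edge, it is exactly the edge recorded by $\hat G$, and $E_{\rm set}\subseteq E(H_{\rm set})$ guarantees no cross-edge is missed. Hence $\hat G=G$, so the algorithm never outputs a wrong graph and is Las Vegas. For the query count: Round~1 always costs $O(m^2\log m)$; conditioned on no Round-1 \textsf{ERROR}, $|E(H_{\rm set})|\le 2m$ and Round~2 costs $O(m\log n)$; any \textsf{ERROR} entails a bad partition or the $\delta$-failure of the set-graph learner, an event of probability $O(1/m^{c})$, so the expected cost of the fallback is $O(m^3\log n)/m^{c}=o(m\log n)$ for $c\ge 3$. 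Therefore the expected number of queries is $O(m^2\log m+m\log n)$; and when $n\ge m^m$ we have $m\log m\le\log n$, whence $m^2\log m\le m\log n$ and the bound simplifies to $O(m\log n)$.
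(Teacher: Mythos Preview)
Your proposal is correct and follows essentially the same approach as the paper: randomly partition $V$ into $\poly(m)$ parts, learn the set-graph non-adaptively in Round~1, use the Lemma~\ref{oneor} routine in Round~2 to locate endpoints, and fall back to the deterministic $O(m^3\log n)$ learner upon any detected failure. Your correctness argument is in fact slightly tighter than the paper's: the paper treats the ``two positive-degree vertices in one part with neighbours in distinct parts'' subcase as an error requiring fallback (detected by inconsistent returns from Lemma~\ref{oneor}), whereas you correctly observe that if neither direction of Lemma~\ref{oneor} raises \textsf{ERROR} then the recovered edge is necessarily correct regardless of whether the partition was good, so no fallback is needed there.
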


\begin{proof} We run the algorithm in the proof of Theorem~\ref{MCTRN} but in the second round
we use the algorithm in Lemma~\ref{oneor} for learning the endpoints vertices.
If the algorithm fails at some round we run the deterministic algorithm that asks $O(m^3\log n)$ queries in the third round. We now give more details.

We first partition the set of vertices into $u=2m^4(2m-1)$ sets $V_1,\ldots,V_u$. By (\ref{ran034}), the probability that each set contains at most one vertex of degree not equal zero is at least $1-1/m^3$. Assuming success, by the proof of Theorem~\ref{UBNAMC}, the first round finds all the edges and an expected of $1/poly(u)$ more edges that are not in the target.
The edges that are not eliminated are found by the deterministic algorithm in the second round.

Suppose the first round fails to distribute the vertices of degree not equal zero in different sets. We show how to discover that in the second round. We distinguish between two cases: The first case is when there is an edge $\{u,v\}$ where $\{u,v\}\subseteq V_i$ for some $i$. The second case is when there is no edge between two vertices in the same set but there is at least two nodes $u$ and $v$ of degrees not equal to zero in the same set $V_i$. One of those two cases happens with probability less than $1/m^3$.

If the first case happens, the algorithm in the first round will not be able to eliminate any of the set edges $\{V_i,V_j\}$ for all $V_j$. This is because when the set vertex $V_i$ is in the query the answer is always ``YES''. Therefore, at the end of the first stage, there will be at least $u>m$ set edges that are not eliminated. That is, all the set edges $\{V_i,V_j\}$ for all $V_j$. Then the algorithm knows that the first case happens and it runs the deterministic algorithm that asks $m^3\log n$ queries in the second round.

Now suppose the second case happens. Suppose $V_j$ contains two vertices $v_1$ and $v_2$ of non-zero degree (with no edge between them). Let $\{u_1,v_1\}$ and $\{u_2,v_2\}$ be two edges in $G$. We here again distinguish between two subcases. The first subcase is when $u_1,u_2\in V_k$, $k\not=j$. The second subcase is when $u_1\in V_{k_1}$ and $u_2\in V_{k_2}$, and $k_1,k_2,j$ are distinct. In the first subcase, when the algorithm consider the set $V_k$ as one vertex and runs the algorithm in Lemma~\ref{oneor}, the algorithm output ``ERROR'' and then it knows that this subcase happens. In the second subcase, when the algorithm consider the set $V_{k_1}$ as one vertex and runs the algorithm in Lemma~\ref{oneor}, it learns $v_1$ and when the algorithm consider the set $V_{k_2}$ as one vertex it learns $v_2\not= v_1$. Then the algorithm knows that the second subcase happens. When the second case happens the algorithm runs the deterministic algorithm that asks $m^3\log n$ queries in the third round.

The expected number of queries is
$$\left(1-\frac{1}{m^3}\right) (m^2\log u+m\log n)+\frac{1}{m^3}m^3\log n=
O(m^2 \log m+m\log n).$$
\end{proof}

\subsection{Lower Bound for Las Vegas Randomized Algorithm}
In this section we prove
\begin{theorem}\label{LVLBTR} Let $\log n<m=o(\sqrt{n})$. Any two-round Las Vegas learning algorithm for $m$-Graph must ask at least $$\Omega\left(\frac{m^{4/3}\log^{1/3} n}{\log^{1/3}m}\right)$$ queries on average.
\end{theorem}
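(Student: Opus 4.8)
The plan is to run an adversary/reduction argument against any purported two-round Las Vegas algorithm, showing that cheap first rounds force the second round to do the work of a non-adaptive learner on a large residual class of graphs, whose size forces many queries by the information-theoretic bound. First I would fix a two-round Las Vegas algorithm $\mathcal{A}$ and let $q_1$ (resp.\ $q_2$) denote its worst-case number of first- (resp.\ second-) round queries on the relevant instances; by averaging it suffices to bound $q_1+q_2$ from below. I would choose a hard distribution over $m$-graphs: take a ground set of $\Theta(m^{2/3})$ ``hub'' vertices, and for a parameter $d=\Theta(m^{1/3})$ plant, for each hub $h$, a set $S_h$ of $d$ edges from $h$ to a random $d$-subset of a pool of $\Theta(n)$ ``leaf'' vertices. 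The number of edges is $\Theta(m^{2/3}\cdot m^{1/3})=\Theta(m)$, so these are legitimate $m$-graphs, and the class has $\left(\binom{\Theta(n)}{d}\right)^{\Theta(m^{2/3})}=2^{\Theta(m^{2/3}\cdot m^{1/3}\log(n/d))}=2^{\Theta(m\log n)}$ members.

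The core of the argument is to show the first round gives the learner almost no information about the leaf-sets $S_h$ unless it spends many queries. Fix a hub $h$ and consider first-round queries $Q$ with $Q\cap H=\{h\}$ (a query touching two or more hubs, or no hub, is almost always answered ``YES'' resp.\ ``NO'' and is uninformative about the $S_h$'s, by the same reasoning as in the proof of Theorem~\ref{LBNAMC}). Such a query reveals information about $S_h$ only through the event $Q\cap S_h\neq\emptyset$, i.e.\ it behaves like one group-testing query on the $d$ planted ``loops'' among the leaves. If the learner uses $t_h$ such queries for hub $h$, then by Lemma~\ref{RNLL} (applied with $m\to d$, $n\to\Theta(n)$) the residual uncertainty about $S_h$ after the first round is, with constant probability over the planting, at least $2^{\Omega(d\log n - t_h)}$ possibilities. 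Summing, if $q_1\le \tfrac{1}{2}\,d\,(\#\text{hubs})\log n=\Theta(m^{4/3}\log n)$ is violated we are already done; otherwise a constant fraction of hubs are ``under-queried'' and the product of residual ambiguities is $2^{\Omega(m\log n - q_1)}$.

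In the second round the algorithm must, with probability $1$ (Las Vegas), output the exact graph, hence it must resolve all remaining ambiguity; but the second-round queries are chosen based only on first-round answers, so conditioned on a first-round transcript the algorithm is a \emph{non-adaptive} learner on the residual class. Here I would invoke the Las Vegas accounting: the algorithm is allowed to use few queries only on a small-probability event, so on a constant-probability event its second round must non-adaptively identify an element of a class of size $2^{\Omega(m\log n-q_1)}$, forcing $q_2=\Omega(m\log n - q_1)$ on average --- but this would only give $q_1+q_2=\Omega(m\log n)$, which is too weak. To get the $m^{4/3}$ lower bound I need the sharper non-adaptive bound for the \emph{bounded-degree} residual structure: each unresolved hub still looks like an unknown $d$-subset but the second round must pin it down \emph{non-adaptively}, and the relevant non-adaptive complexity of learning a hidden $d$-set among $N$ elements when you may only query subsets is $\Omega(d^2\log N / \log d)$ (the group-testing / cover-free bound, cf.\ \cite{DR82,ABM15}), not $d\log N$. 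Applying this per under-queried hub and summing over the $\Theta(m^{2/3})$ hubs yields $q_2=\Omega\!\big(m^{2/3}\cdot d^2\log n/\log d\big)=\Omega\!\big(m^{4/3}\log n/\log m\big)$ when $d=\Theta(m^{1/3})$ --- wait, that overshoots; the correct calibration is $d=\Theta(m^{1/3})$ giving $d^2=\Theta(m^{2/3})$ and total $\Theta(m^{4/3}\log^{1/3}n/\log^{1/3}m)$ once one tracks that the per-hub second-round budget is itself constrained, so the $\log^{1/3}$ powers come from balancing $q_1\approx q_2$ in the two regimes. The main obstacle --- and the step I would spend the most care on --- is exactly this balancing: making precise how ``cheap first round'' trades against ``expensive non-adaptive second round,'' i.e.\ choosing the hub count $a$ and degree $d$ with $a d = \Theta(m)$ so that $\min(q_1,q_2)$ is maximized, which is where the unusual $m^{4/3}\log^{1/3}n/\log^{1/3}m$ shape (rather than a clean $m^{4/3}\log n$) is forced, together with carefully handling the ``mixed'' first-round queries (those hitting several hubs or none) so they genuinely cannot be used to shortcut the per-hub group-testing lower bounds.
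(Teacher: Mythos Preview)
Your proposal has the right high-level shape --- first-round queries are pigeonholed so that some relevant subfamily is underqueried, then the second (necessarily deterministic) round must non-adaptively learn a hidden $d$-set, costing $\Omega(d^2\log N/\log d)$ by the cover-free bound (Lemma~\ref{D2R1}) --- but the specific construction and parameters are off in a way that is not just cosmetic.

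The paper does \emph{not} use many hubs of small degree. It uses a \emph{single} star: fix $r=m/2$ vertices $V'=\{v_1,\dots,v_r\}$, pick a random center $v_t\in V'$, connect $v_t$ to every other $v_j\in V'$, to a random set $U$ of size $m/2-d$, and to a further random set $W$ of size $d$, with
\[
d \;=\; \Theta\!\left(\frac{m^{2/3}\,\log^{1/3} m}{\log^{1/3} n}\right).
\]
Because every $v_j$ with $j\neq t$ is adjacent to $v_t$, any first-round query $Q$ with $Q\cap V'\neq\{v_t\}$ is answered either ``NO'' ($v_t\notin Q$) or ``YES'' ($v_t\in Q$ and some other $v_j\in Q$) independently of $U,W$; so only the queries in $S_t=\{Q:Q\cap V'=\{v_t\}\}$ carry information. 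An averaging argument gives $|S_t|\le 8q/r$ with probability $\ge 7/8$, and a counting argument (over the answer patterns of these $\le 8q/r$ queries) shows that, even after revealing $U$ to the learner, with constant probability $W$ lies inside a set $Z_{a'}$ of size $\Omega(n)$ about which the first round learned nothing. The second round then needs $\Omega(d^2\log|Z_{a'}|/\log d)$ queries, and plugging in this $d$ gives exactly $\Omega(m^{4/3}\log^{1/3}n/\log^{1/3}m)$. The unusual $1/3$ exponents come from choosing $d$ so that the constraint $|S_t|\cdot d/s=O(1)$ (needed for the ``$W\subseteq Z_{a'}$'' step) is satisfied when $q$ equals the target bound.

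Your multi-hub construction breaks at two places. First, your claim that a query touching two or more hubs ``is almost always answered YES'' is not true as stated: with $\Theta(m^{2/3})$ hubs of degree $\Theta(m^{1/3})$ and no hub--hub edges, a query containing two hubs is answered YES only if some leaf of one of them is also in the query, so such queries \emph{do} carry information about the leaf-sets; and you cannot fix this by making the hubs pairwise adjacent, since that alone would cost $\binom{\Theta(m^{2/3})}{2}\gg m$ edges. The paper's single-star trick (all the decoy centers $v_j$ are themselves neighbors of the true center $v_t$) is precisely what makes the ``non-isolating queries are useless'' step airtight. Second, you cannot simply sum the per-hub non-adaptive bound $d^2\log n/\log d$ over $\Theta(m^{2/3})$ hubs: a single second-round query can simultaneously test leaves of many hubs, so the lower bounds are not additive. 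This is why your calculation overshoots to $m^{4/3}\log n/\log m$, and your subsequent appeal to ``balancing'' does not actually repair it --- the $\log^{1/3}$ shape does not emerge from balancing a hub count against a per-hub degree, but from the interaction between the first-round budget $|S_t|\le 8q/r$ and the entropy argument that controls $|Z_{a'}|$ in the single-hub picture.
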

\begin{proof} Let $A$ be a two-round Las Vegas learning algorithm for $m$-Graph. Notice that since the algorithm succeeds with probability 1, the second round must be deterministic.

We define a distribution $D$ over the targets as follows: we first choose $r=m/2$ (fixed) distinct vertices $V'=\{v_1,\ldots, v_r\}$. Then randomly and uniformly choose $1\le t\le r$. Then randomly uniformly
choose $s=m/2-d$ distinct vertices $U=\{u_1,\cdots,u_s\}$ where $U\cap V'=\emptyset$ and $d=(m^{2/3}\log^{1/3}m)/(2^{10}\log^{1/3}n)$. Then randomly uniformly
choose $d$ (not necessarily distinct) vertices $W=\{w_1,\ldots,w_d\}\subseteq V\backslash (V'\cup U)$. Then define the target $$T=\{\{v_t,v_j\}\ |\ j\not=t; j=1,\ldots,r\} \cup \{\{v_t,u_j\},\{v_t,w_k\} \ |\ j=1,\ldots,s; \ k=1,\ldots,d\}.$$

Now let $X_A(y,{\cal O}_I)$ be a random variable that is the number of queries asked by the algorithm $A$
with a seed $y$ and target $I$. If for any deterministic two-round algorithm $B$ we have $\E_{I\in D}[X_B({\cal O}_I)]\ge q$ where $X_B({\cal O}_I)$ is the number of queries asked by $B$ then $\E_{I\in D}[X_A(y,{\cal O}_I)|y]\ge q$ and then the query complexity of $A$ is
$$\max_I\E_y[X_A(y,{\cal O}_I)]\ge \E_{I\in D}\E_y[X_A(y,{\cal O}_I)]= \E_y\E_{I\in D}[X_A(y,{\cal O}_I)|y]\ge \E_y[q]=q.$$
Therefore, what remains to prove is that for any two round deterministic algorithm $B$ we have $\E_{I\in D}[X_B({\cal O}_I)]\ge q$.

Consider the first round of $B$ with $q=(m^{4/3}\log^{1/3}n)/\log^{1/3}m$ queries $Q_1,\ldots,Q_q$. Consider the set of queries $S_i=\{Q_i\ |\ Q_i
\cap V'=\{v_i\}\}$ for $i=1,\ldots,r$. Then $r\E_i[|S_i|]\le q$ and therefore at least $7/8$ of the $i\in [s]$ satisfies $|S_i|\le 8q/r$. Therefore, with probability at least $7/8$ we have $|S_t|\le 8q/r$. Suppose we have chosen such $t$ and after the first round we provide the algorithm~$v_t$. Therefore, the algorithm only needs to learn $U$ and $W$. We will show next that after the first round even if we provide the learner $U$ there will still be many vertices about which no information is known, with high probability, $W$.

For the ease of notation we write $Q(I)=1$ if $Q\cap I\not=\emptyset$ and $0$ otherwise.
Now every query $Q_i$ that satisfies $Q_i \cap V'\not=\{v_t\}$ will give no information about $u_i$ or $w_i$. Therefore, the queries that are relevant to learning are only the
queries in~$S_t$. Let $S_t=\{Q'_1,\ldots,Q'_\ell\}$. Since $\ell=|S_t|\le 8q/r$, the algorithm can get at most $2^\ell\le 2^{8q/r}$ possible answers.
For each vector of $\ell$ possible answers $a=(a_1,\ldots,a_\ell)\in\{0,1\}^\ell$ to the queries in $S_t$ we define ${\cal I}_a=\{I\subseteq V\backslash V'\ :\ |I|=s, (Q_1'(I),\ldots,Q_\ell'(I))=a\}$. Let $U\in {\cal I}_{a'}$, i.e., $(Q_1'(U),\ldots,Q_\ell'(U))=a'$.
Since $$T:=\sum_{a\in\{0,1\}^\ell}|{\cal I}_a|= {\binom{n-r}{s}}, \sum_{a,|{\cal I}_a|\le T/2^{\ell+3}}|{\cal I}_a|\le  \frac{{\binom{n-r}{s}}}{8},$$
with probability at least $7/8$, $|{\cal I}_{a'}|\ge T/2^{\ell+3}.$ Suppose the latter statement is true with probability $1$. Let $Z_{a'}=\cup_{I\in {\cal I}_{a'}}I$. Notice that for every $w\in Z_{a'}$ there is $I\in {\cal I}_{a'}$ such that $w\in I$ and therefore (bitwise) $(Q_1'({w}),\ldots,Q_\ell'({w}))\le (Q_1'(I),\ldots,Q_\ell'(I))=a'$ which implies that if $(Q_1'(U\cup W),\ldots,Q_\ell'(U\cup W))=a'$ no information is known about the vertices in $Z_{a'}$ after the first round. If this happen then there are $|Z_{a'}|$ vertices where no information is known about them. We next prove that with high probability $W\subseteq Z_{a'}$ and therefore $(Q_1'(U\cup W),\ldots,Q_\ell'(U\cup W))=a'$. Then in the next round we must run a deterministic algorithm that learns the $d$ vertices $W$ in $Z_{a'}$. This, by Lemma~\ref{D2R1}, requires at least
\begin{eqnarray}\label{uhiruh}
\frac{d^2\log |Z_{a'}|}{\log d}\end{eqnarray} queries.

Now we estimate $|Z_{a'}|$ and prove that with high probability we have $W\subseteq Z_{a'}$.
We have  $$ |{\cal I}_{a'}|\ge\frac{T}{2^{\ell+3}}\ge  \frac{{\binom{n-r}{ s}}}{2^{\ell+3}}\ge \frac{(n-r)^s}{2^{\ell+3}s!}\left(1-\frac{s(s-1)}{2(n-r)}\right).$$
On the other hand $$|{\cal I}_{a'}|\le {\binom{|Z_{a'}|}{ s}}\le \frac{|Z_{a'}|^s}{s!}.$$
Therefore,
$$|Z_{a'}|\ge \frac{n-r}{2^{(\ell+3)/s}}\left(1-\frac{s(s-1)}{2(n-r)}\right)^{1/s}=\frac{n}{2^{(\ell+3)/s}}(1-o(n)).$$
The probability that $|W|=d$ and $W\subseteq  Z_{a'}$ is at least
$$\left(1-\frac{d(d-1)}{n-r-s}\right)\left(\frac{|Z_{a'}|-s}{n-s-r}\right)^d\ge \frac{1}{2^{(\ell+3)d/s}}(1-o(1))\ge \frac{7}{8}.$$
Therefore, if we provide the algorithm $U$ after the first round, with probability at least
$7/8$ no information is known about $W$. All the above is true with probability at least $1/2$.
By (\ref{uhiruh}), in the second round the algorithm needs to ask at least
$$\frac{d^2\log |Z_{a'}|}{\log d}=\Omega\left(\frac{m^{4/3}\log^{1/3} n}{\log^{1/3}m}\right)$$ queries.
\end{proof}

\section{Deterministic Algorithms when \texorpdfstring{$m$}{} is Known}
In this section we study deterministic algorithms for learning $m$-graph when $m$ is known.
It is known that any non-adaptive deterministic algorithm must ask at least $\Omega((m^3\log n)/\log m)$, \cite{DR82}, and there is a polynomial time non-adaptive algorithm that asks $O(m^3\log n)$, \cite{B15}. In this section we prove (non-constructively) that there is a two-round algorithm that asks $O(m^2\log n)$ queries and then give the lower bound $\Omega((m^2\log n)/\log m)$ for the number of queries. Finally, we give a four-round deterministic algorithm that asks $O(m^{2+\epsilon}\log n)$ queries for any constant $\epsilon$.

\subsection{Nonconstructive Upper Bound for Two-Round Deterministic Algorithm}

We first prove the following three results.
\begin{lem}\label{LD01} Let $E=\{e_1,\ldots,e_t\}$ be a set of edges. Let $Q\subseteq V$ be a random query where for each vertex $i\in V$, $i$ is included in $Q$ independently with probability~$p$. Then
$$\Pr[(\exists i\in [t]) e_i\subseteq Q]\ge p\cdot (1-(1-p)^{t}).$$
\end{lem}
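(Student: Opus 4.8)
The plan is to prove the inequality by induction on $t=|E|$, conditioning on the membership of one well-chosen vertex in the random query $Q$. Throughout write $q=1-p$ (and note the cases $p=0,1$ are trivial, so assume $0<p<1$). The base cases are immediate: for $t=0$ both sides are $0$, and for $t=1$ we have $\Pr[e_1\subseteq Q]=p^2=p(1-q)$, an equality — a useful sanity check, since a star with $t$ leaves should also make the bound tight, which suggests the star is the extremal configuration and explains why equality propagates through the induction.

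For the inductive step, take $t\ge 1$ and pick any vertex $v$ of degree $d:=\deg(v)\ge 1$ in the graph $(V,E)$ (one exists since $E\ne\emptyset$). Splitting on whether $v\in Q$,
\[
\Pr[\text{no }e_i\subseteq Q]=q\cdot\Pr[\text{no }e_i\subseteq Q\mid v\notin Q]+p\cdot\Pr[\text{no }e_i\subseteq Q\mid v\in Q].
\]
If $v\notin Q$, no edge incident to $v$ can lie in $Q$, so the first conditional probability equals the probability that none of the $t-d$ edges of $E$ avoiding $v$ lies in $Q$; applying the induction hypothesis to that edge set (of size $t-d<t$) bounds it by $1-p(1-q^{t-d})=q+pq^{t-d}$. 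For the second conditional probability I will simply use the $d$ edges through $v$: if $v\in Q$, none of these lies in $Q$ only when all $d$ neighbours of $v$ lie outside $Q$, which has probability $q^d$, so the second conditional probability is at most $q^d$.

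Plugging in gives $\Pr[\text{no }e_i\subseteq Q]\le q(q+pq^{t-d})+pq^d=q^2+pq^{t-d+1}+pq^d$, so it remains to verify $q^2+pq^{t-d+1}+pq^d\le q+pq^t$. After cancelling and dividing by $p$ this reduces to $q^d+q^{t-d+1}-q^t\le q$, and a short rearrangement shows that its left side minus right side equals $(q^{d-1}-1)\,q\,(1-q^{t-d})$, which is $\le 0$ since $0\le q\le 1$, $d\ge 1$, and $d\le t$. Hence $\Pr[\exists i\in[t]:e_i\subseteq Q]\ge p(1-q^t)$, completing the induction. The only genuine computation is this last polynomial inequality; the conceptual content is the asymmetric handling of the two branches — feeding $v\notin Q$ into the induction while crudely bounding the $v\in Q$ branch using only the edges at $v$ — and I expect the (routine) verification of the final inequality to be the one place demanding care.
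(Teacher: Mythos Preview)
Your proof is correct and follows essentially the same approach as the paper: both argue by induction on $t$, condition on whether a fixed vertex of positive degree lies in $Q$, apply the induction hypothesis to the edges avoiding that vertex in one branch, and in the other branch bound using only the edges through that vertex (your $q^d$ bound is exactly the paper's $\Pr[B]=1-(1-p)^\ell$ in complementary form). The only cosmetic difference is that you work with the complementary ``no edge'' probability and give a cleaner factorization $(q^{d-1}-1)\,q\,(1-q^{t-d})$ for the final inequality, whereas the paper manipulates the ``exists'' probability directly.
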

\begin{proof} Define the event $A_{k,j}=[(\exists i\in [k,j]) e_i\subseteq Q]$. We prove the result by induction on $t$. For $t=1$ we have one edge $e$ and $\Pr[e\subseteq Q]=p^2\ge p\cdot (1-(1-p)^{1})$. Let $u\in e_1$.
Assume w.l.o.g that $u\in e_i$ for $i\in [\ell]$ and $u\not\in e_i$ for $i\in[\ell+1,t]$. Define the event $B=[(\exists i\in [\ell]) (e_i\backslash \{u\})\subseteq Q$]. Then, by the induction hypothesis,
\begin{eqnarray*}
\Pr[A_{1,t}]&=&\Pr[u\in Q]\Pr [A_{1,t}|u\in Q]+\Pr[u\not\in Q]\Pr[A_{1,t}|u\not\in Q]\\
&=& p\Pr[B\vee A_{\ell+1,t}]+(1-p)\Pr[A_{\ell+1,t}]\\
&\ge & p\Pr[B]+(1-p)p(1-(1-p)^{t-\ell})\\
&=& p(1-(1-p)^\ell)+(1-p)p(1-(1-p)^{t-\ell})\\
&=& p(1+(1-p)(1-(1-p)^{\ell-1})-(1-p)^{t-\ell+1})\\
&\ge& p(1+(1-p)^{t-\ell+1}(1-(1-p)^{\ell-1})-(1-p)^{t-\ell+1})\\
&=&p\cdot (1-(1-p)^{t}).
\end{eqnarray*}
\end{proof}

\begin{lem}\label{LD02} Let $E=\{e_1,\ldots,e_r\}$ and $E'=\{e_1',\ldots,e_t'\}$ be two disjoint sets of edges. Let $Q\subseteq V$ be a random query where for each vertex $i\in V$, $i$ is included in $Q$ independently with probability~$p$. Then
$$\Pr[(\forall i\in [r])\ e_i\not\subseteq Q \ |\ (\exists j\in [t]) e_j'\subseteq Q]\ge (1-p)^{r}.$$
\end{lem}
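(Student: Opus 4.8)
The goal is to show that, conditioned on the event that $Q$ contains at least one edge from $E'$, the probability that $Q$ avoids containing every edge of $E$ is still at least $(1-p)^{r}$. Intuitively the conditioning event is ``positive'' (it forces more vertices into $Q$), so it can only hurt the event that $Q$ avoids the $2r$ endpoints of $E$; we must quantify that the loss is no worse than pretending all $2r$ of those endpoints are distinct and independently kept out, which costs exactly $(1-p)^{2r}$ --- wait, that gives $(1-p)^{2r}$, not $(1-p)^{r}$, so the bound in the statement is actually weaker than the naive estimate and there must be slack. Let me restate the plan accordingly.

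\textbf{Step 1: reduce to a union bound over single edges of $E'$.} Write $F = [(\exists j\in[t])\, e_j'\subseteq Q]$ and $A = [(\forall i\in[r])\, e_i\not\subseteq Q]$. We want $\Pr[A \mid F] \ge (1-p)^r$, i.e.\ $\Pr[A\wedge F] \ge (1-p)^r \Pr[F]$. The plan is to partition $F$ according to which edges of $E'$ are the ones contained in $Q$, or more cheaply, to use that $\Pr[A\wedge F] = \Pr[F] - \Pr[A^c\wedge F]$ is awkward; instead I would condition on the (random) set $Q\cap (\bigcup_j e_j')$ of endpoints of $E'$ that land in $Q$. Given that $F$ holds, some specific pair $e_j' = \{x,y\}$ is inside $Q$, so $x,y\in Q$ are \emph{forced}; every other vertex of $V$ (in particular every endpoint of an edge of $E$, using disjointness of $E$ and $E'$ so no endpoint of $E$ is forced by this conditioning — \emph{careful: ``disjoint sets of edges'' may still share vertices}) is... this is the crux.

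\textbf{Step 2: handle the shared-vertex issue via the correct conditioning.} The hypothesis says $E$ and $E'$ are disjoint \emph{as sets of edges}, not vertex-disjoint, so an endpoint of some $e_i\in E$ could coincide with an endpoint of some $e_j'\in E'$. The right move is: condition on the entire restriction $Q|_{S}$ where $S=\bigcup_j e_j'$ is the vertex set of $E'$. The event $F$ is $S$-measurable. For any fixed outcome $\sigma$ of $Q|_S$ with $F$ holding, I claim $\Pr[A \mid Q|_S = \sigma] \ge (1-p)^r$: each edge $e_i$ has at most two endpoints; those endpoints already in $S$ are fixed by $\sigma$ (and if such a fixed endpoint is \emph{out} of $Q$ then $e_i\not\subseteq Q$ automatically, which only helps), and each endpoint of $e_i$ \emph{not} in $S$ is still a fresh Bernoulli$(p)$; so $\Pr[e_i\not\subseteq Q \mid Q|_S=\sigma] \ge$ (if both endpoints outside $S$) $1-p^2 \ge 1-p$, or (if one/both endpoints in $S$ and $Q$) $\ge 1-p \ge 1-p$. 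Then a union bound / FKG-type argument over $i=1,\ldots,r$ gives $\Pr[A\mid Q|_S=\sigma]\ge 1 - r p$? No --- I need a \emph{product} lower bound $(1-p)^r$, not $1-rp$. Since the events $\{e_i\not\subseteq Q\}$ conditioned on $Q|_S=\sigma$ are each monotone decreasing in the independent fresh coordinates $Q|_{V\setminus S}$, they are positively correlated (FKG on the Boolean cube), hence $\Pr[\bigwedge_i \{e_i\not\subseteq Q\}\mid\sigma] \ge \prod_i \Pr[e_i\not\subseteq Q\mid\sigma] \ge \prod_i (1-p) = (1-p)^r$. Averaging over $\sigma$ conditioned on $F$ finishes it.

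\textbf{Main obstacle.} The delicate point is precisely the interaction between $E$ and $E'$ sharing vertices, which is why one should condition on all of $Q|_S$ rather than on a single witnessing edge; and the step that converts the per-edge bounds $1-p$ into the product $(1-p)^r$ needs a correlation inequality (Harris/FKG for decreasing events on independent coordinates), which is the only non-routine ingredient. I would present Step 2 as the heart of the argument and invoke FKG by name; everything else is bookkeeping. An alternative to FKG, if the authors prefer elementary tools, is the induction-on-$r$ scheme of Lemma~\ref{LD01}: fix a vertex $u\in e_1$, split on $u\in Q$ vs.\ $u\notin Q$, and push the conditioning through --- but the presence of $F$ on the ``wrong side'' of the conditioning bar makes that induction messier than the FKG route, so FKG is the cleaner plan.
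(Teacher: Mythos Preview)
Your Step~2 contains a genuine gap. The per-edge lower bound
\[
\Pr[e_i\not\subseteq Q \mid Q|_S=\sigma]\ \ge\ 1-p
\]
is \emph{false} when both endpoints of $e_i$ lie in $S=\bigcup_j e_j'$ and $\sigma$ places both of them in $Q$: in that case the conditional probability is $0$, not $1-p$. ``Disjoint sets of edges'' does not rule this out --- e.g.\ take $E=\{\{1,2\}\}$ and $E'=\{\{1,3\},\{2,3\}\}$, so $S=\{1,2,3\}$; for the outcome $\sigma=(1,2,3\in Q)$ the event $F$ holds but $\{1,2\}\subseteq Q$ deterministically. Hence your pointwise claim ``$\Pr[A\mid Q|_S=\sigma]\ge (1-p)^r$ for every $\sigma$ with $F$'' is false already for $r=1$, and the FKG step never gets off the ground. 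Conditioning on all of $Q|_S$ is simply too much conditioning: it can decide the event $B$ outright.

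The paper takes exactly the route you dismissed as ``messier'': it inducts on $r$, picks a vertex $u\in e_1$, and works only on the branch $u\notin Q$. On that branch every $e_i$ with $u\in e_i$ is automatically excluded from $Q$ (killing $\ell\ge 1$ edges of $E$) and every $e_j'$ with $u\in e_j'$ cannot witness $A$, so both $B$ and $A$ reduce to events $B',A'$ involving only edges that avoid $u$; independence of $\{u\notin Q\}$ from $B',A'$ then lets one pull out a factor $(1-p)$ and apply the induction hypothesis to $r-\ell$ edges. The point is that removing \emph{one vertex at a time} sidesteps precisely the obstruction that breaks your argument: you never freeze both endpoints of an $e_i$ simultaneously, so you never force $e_i\subseteq Q$ by the conditioning. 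If you want to rescue a correlation-inequality proof, you would at minimum have to separate out the edges of $E$ with both endpoints in $S$ and handle them by a different (inductive) mechanism; the clean FKG-after-conditioning plan as written does not go through.
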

\begin{proof} The proof is by induction on the number of edges in $E$. Define the events $B=[(\forall i\in [r])\ e_i\not\subseteq Q]$ and $A=[(\exists j\in [t]) e_i'\subseteq Q]$.
Assume w.l.o.g $u\in e_i$ for $i\in [\ell]$, $u\not\in e_i$ for $i\in[\ell+1,r]$, $u\in e_i'$ for $i\in [\ell']$ and $u\not\in e_i'$ for $i\in[\ell'+1,t]$. Here, $\ell\ge 1$ and $\ell'\ge 0$. Then
$\Pr[B|A]\ge \Pr[u\not\in Q]\cdot \Pr[B|A\mbox{\ and\ }u\not\in Q]=(1-p) \cdot \Pr[B|A\mbox{\ and\ }u\not\in Q].$ If $u\not\in Q$ then $e_i\not\subseteq Q$ for all $i\in [\ell]$ and then $B=B':=[(\forall j\in [\ell+1,r])e_j\not\subseteq Q]$. Also, $e_i'\not\subseteq Q$ for all $i\in [\ell']$ and the event $[A\mbox{\ and\ }u\not\in Q]$ is equivalent to $[(A'\mbox{\ and\ }u\not\in Q)]$ where $A':= [\exists i\in [\ell'+1,t])\ e_i'\subseteq Q]$. Therefore
$\Pr[B|A]=\Pr[B'|A']$. By the induction hypothesis $\Pr[B'|A']\ge (1-p)^{r-\ell}$ and therefore $\Pr[B|A]\ge (1-p)^{r-\ell+1}\ge (1-p)^r$.
\end{proof}

\begin{lem}\label{Donequery} Let $G=(V,E)$ be the target graph with $n$ vertices and $m$ edges.
Let $Q\subseteq V$ be a random query where for each vertex $i\in V$, $i$ is included in $Q$ independently with probability~$p$. Let $e_1',\ldots,e_t'\not\in E$. Suppose the probability that there is $i\in [t]$ such that $e_i'\subseteq Q$ is at least $q$. Then
$$\Pr[(\exists i\in [t])\ e_i'\subseteq Q \mbox{\ and \ } {\cal O}_G(Q)=\NO]\ge q(1-p)^{m}.$$
In particular,
$$\Pr[(\exists i\in [t])\ e_i'\subseteq Q \mbox{\ and \ } {\cal O}_G(Q)=\NO]\ge p\cdot (1-(1-p)^t)(1-p)^{m}.$$
\end{lem}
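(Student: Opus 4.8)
The plan is to split the target event into ``some $e_i'$ lands inside $Q$'' and ``${\cal O}_G(Q)=\NO$'', and to control the second event conditioned on the first via Lemma~\ref{LD02}. Write $A$ for the event $[(\exists i\in[t])\ e_i'\subseteq Q]$ and $B$ for the event $[{\cal O}_G(Q)=\NO]$. By the definition of the oracle, $B$ is precisely the event that no edge of $G$ is contained in $Q$, that is, $B=[(\forall e\in E)\ e\not\subseteq Q]$. If $q=0$ the claimed bound is trivially true, so I may assume $q>0$, which forces $\Pr[A]\ge q>0$ and makes the conditional probability $\Pr[B\mid A]$ well defined.

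Then $\Pr[A\wedge B]=\Pr[A]\cdot\Pr[B\mid A]\ge q\cdot\Pr[B\mid A]$, so it suffices to show $\Pr[B\mid A]\ge(1-p)^m$. This is exactly what Lemma~\ref{LD02} provides: since every $e_i'\notin E$, the edge set $E$ of $G$ and the set $E'=\{e_1',\ldots,e_t'\}$ are disjoint, so the lemma applied with $r=|E|$ gives $\Pr[(\forall e\in E)\ e\not\subseteq Q\mid(\exists i\in[t])\ e_i'\subseteq Q]\ge(1-p)^{|E|}$, and since $1-p\in[0,1]$ and $|E|\le m$ this is at least $(1-p)^m$; hence $\Pr[B\mid A]\ge(1-p)^m$. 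Multiplying the two bounds yields the first displayed inequality $\Pr[A\wedge B]\ge q(1-p)^m$. For the ``in particular'' statement, apply Lemma~\ref{LD01} to the family $e_1',\ldots,e_t'$ to obtain $\Pr[A]\ge p(1-(1-p)^t)$, so one may take $q=p(1-(1-p)^t)$ and substitute into the first inequality.

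I do not expect a genuine obstacle here: the only points that require any care are checking the disjointness hypothesis of Lemma~\ref{LD02} (which is granted precisely by $e_i'\notin E$) and the trivial handling of the degenerate case $q=0$ so that conditioning on $A$ is legitimate. Everything else is a one-line combination of Lemmas~\ref{LD01} and~\ref{LD02}.
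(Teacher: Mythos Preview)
Your proof is correct and follows essentially the same approach as the paper: define the events $A$ and $B$, factor $\Pr[A\wedge B]=\Pr[A]\Pr[B\mid A]$, apply Lemma~\ref{LD02} for the conditional factor and Lemma~\ref{LD01} for the ``in particular'' bound on $\Pr[A]$. Your version is in fact slightly more careful than the paper's (you make the disjointness hypothesis explicit, handle the degenerate $q=0$ case, and note $|E|\le m$), but the argument is the same.
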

\begin{proof} Let $A:= [(\exists i\in [t])\ e_i'\subseteq Q]$. Let $E=\{e_1,\ldots,e_m\}$.
Then the event ${\cal O}_G(Q)=\NO$ is equivalent to the even $B:=[(\forall j\in [m])e_j\not\subseteq Q]$.
Now by Lemma~\ref{LD01} and \ref{LD02} we have
\begin{eqnarray*}
\Pr[A\mbox{\ and\ }B]&=&\Pr[A]\cdot\Pr[B|A]\ge q(1-p)^m\ge p\cdot (1-(1-p)^t)(1-p)^{m}.
\end{eqnarray*}
\end{proof}

We are now ready to prove our main result

\begin{restatable}{theorem}{ethm}\label{UBDTR}

There is a two-round deterministic learning algorithm for $m$-Graph that asks $t=O(m^2\log n)$ queries.
\end{restatable}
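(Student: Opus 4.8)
The plan is to derandomize the two-round strategy used for the randomized algorithm (Figure~\ref{Alg1} followed by the clean-up round), exploiting the fact that $m$ is known. First I would fix $p = \Theta(1/m)$ and consider a first round consisting of $t_1 = O(m^2 \log n)$ queries $Q_1,\dots,Q_{t_1}$ drawn i.i.d.\ with inclusion probability $p$. By Lemma~\ref{onequery}, for each non-edge $\{u,v\}$ with $|\Gamma_G(\{u,v\})| \le m$ a single random query eliminates it from $E(H)$ with probability at least $p^2(1-mp-mp^2) = \Omega(1/m^2)$; since $G$ has at most $m$ edges, \emph{every} non-edge has $|\Gamma_G(\{u,v\})| \le m$, so the Lemma~\ref{main} bound $n^2(1-p^2(1-mp-mp^2))^{t_1} < 1$ holds once $t_1 = O(m^2\log n)$. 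This is exactly the regime of Theorem~\ref{UBNAMC}, which already gives the existence of a \emph{single} $m$-graph $G$'s worth of good randomness; the point here is that one fixed set of queries must work simultaneously for \emph{all} $m$-graphs, and that the second round must be deterministic and bounded.

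The key step is a union-bound existence argument over all $m$-graphs. There are at most $\binom{\binom n2}{m} \le n^{2m}$ target $m$-graphs. For a fixed set of first-round queries $\mathcal{Q}=(Q_1,\dots,Q_{t_1})$ and a fixed target $G$, the first round fails ``badly'' only if some non-edge survives in $E(H)$, and by the above each non-edge survives with probability $\le (1 - \Omega(1/m^2))^{t_1}$ over the choice of $\mathcal{Q}$; there are $< n^2$ non-edges, so the per-target failure probability is $< n^2(1-\Omega(1/m^2))^{t_1}$. Taking $t_1 = c\,m^2(2m\log n + 2\log n)$ with $c$ large enough makes this $< n^{-2m-3}$, so a union bound over the $\le n^{2m}$ targets shows that \emph{there exists} a fixed $\mathcal{Q}$ of size $O(m^3\log n)$ \dots\ wait --- this naive count gives $m^3$, not $m^2$. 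So the argument has to be sharpened: rather than demanding $E(H)=E$ after round one, I would only demand that $E(H)$ has size $O(m)$ (more precisely that $E(H)\setminus E$ is small, e.g.\ $O(m)$), which is a much weaker event; by Lemma~\ref{main}, the \emph{expected} number of surviving non-edges is $\le \delta$, and one can take $\delta$ a small constant with only $t_1 = O(m^2\log n)$ queries, then bound $\Pr[|E(H)\setminus E| > cm]$ by a Markov-type / concentration argument. The deterministic second round then simply queries each pair in $E(H)$ individually, which costs $|E(H)| \le m + O(m) = O(m)$ queries, well within budget. For the existence of a fixed $\mathcal{Q}$ that guarantees $|E(H)\setminus E| = O(m)$ for \emph{every} $m$-graph simultaneously, I would again union-bound: the bad event ``$|E(H)\setminus E| > cm$'' for a fixed target has probability (over $\mathcal{Q}$) that decays exponentially in $t_1 \cdot p^2 \cdot (\text{number of surviving non-edges})$, and choosing constants so this beats $n^{-2m}$ requires only $t_1 = O(m^2\log n)$ because each of the $>cm$ surviving non-edges contributes an independent-ish $\Omega(1/m^2)$-per-query elimination chance, giving total failure exponent $\sim cm \cdot t_1/m^2 = c\log n \cdot (\text{stuff})$; balancing against $2m\log n$ from the union bound fixes $c$.

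The main obstacle, and the part I expect to require the most care, is precisely this union-bound balance: making the first-round query count $O(m^2\log n)$ rather than $O(m^3\log n)$ while still covering all $\approx n^{2m}$ targets. The resolution is that we must \emph{not} ask the first round to determine $E$ exactly (that genuinely needs $\Omega(m^3\log n/\log m)$ queries non-adaptively, by \cite{DR82}), but only to reduce the candidate edge set to $O(m)$ pairs, and crucially the failure probability for a fixed target must be driven below $n^{-2m}$ using the fact that \emph{many} non-edges surviving is exponentially less likely than \emph{one} surviving --- i.e.\ a Chernoff bound on the number of surviving non-edges rather than a union bound over them. Concretely, I would show $\Pr_{\mathcal Q}[\,|E(H)\setminus E| \ge k\,] \le \binom{n^2}{k}(1-\Omega(1/m^2))^{k t_1}$, which for $k = cm$ and $t_1 = C m^2 \log n$ is at most $(n^2)^{cm} \cdot e^{-\Omega(cm \cdot C\log n)} < n^{-2m-1}$ once $C$ is a large enough constant; union-bounding over $\le n^{2m}$ targets then yields a single good $\mathcal{Q}$, and the deterministic second round (one query per surviving pair) finishes in $O(m)$ queries, for a total of $t = O(m^2\log n)$. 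The remaining routine work is just verifying the independence/negative-correlation needed for that Chernoff step — the queries are independent across $i$, and for a \emph{fixed} target the events ``$Q_i$ eliminates non-edge $e$'' are independent over $i$, so the counting of surviving non-edges after $t_1$ rounds is a product over rounds, which suffices.
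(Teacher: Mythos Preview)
Your overall strategy matches the paper's: use the probabilistic method with $p=1/m$, take a union bound over all targets together with all size-$m$ sets of candidate non-edges, conclude that some fixed first-round query set $\mathcal Q$ of size $O(m^2\log n)$ leaves $|E(H)|\le 2m$ for every $m$-graph $G$, and finish by querying each pair in $E(H)$ individually in round two. You also correctly diagnose why the naive version costs $m^3\log n$ (demanding $E(H)=E$ exactly) and correctly weaken the requirement to $|E(H)\setminus E|=O(m)$.

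The gap is in your justification of the key inequality
\[
\Pr_{\mathcal Q}\bigl[\text{a fixed set of $k$ non-edges all survive}\bigr]\ \le\ \bigl(1-\Omega(1/m^2)\bigr)^{k\,t_1}.
\]
You write this as though the survival events for distinct non-edges were independent or negatively associated, and your last paragraph explicitly leans on ``independence/negative-correlation.'' But these events are \emph{not} obviously negatively correlated: if $e_1$ survives, a likely reason is that many queries happened to contain an edge of $G$ (answer YES), and that very reason makes $e_2$ more likely to survive as well. So the per-query factor $(1-\Omega(1/m^2))^k$ does not follow from what you wrote, and ``product over rounds'' alone (which is correct) does not supply it.

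The paper fills exactly this gap with Lemmas~\ref{LD01}, \ref{LD02}, and \ref{Donequery}: for any set $\{e_1',\dots,e_k'\}$ of non-edges and a single $p$-random query $Q$,
\[
\Pr\bigl[(\exists i)\ e_i'\subseteq Q \ \text{and}\ {\cal O}_G(Q)=\text{NO}\bigr]\ \ge\ p\,(1-(1-p)^k)\,(1-p)^m,
\]
which for $p=1/m$ and $k=m$ is $\Omega(1/m)$. Independence across queries then gives $\Pr[\text{all $k$ survive}]\le (1-\Omega(1/m))^{t}$, and a union bound over the $\binom{\binom{n}{2}}{m}^2$ pairs $(G,E')$ yields existence at $t=O(m^2\log n)$. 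The resulting property (``every $m$-set of non-edges is hit by some NO-query'') immediately forces $|E(H)|\le 2m$. This is precisely the estimate you need, obtained by lower-bounding the chance that one query eliminates \emph{some} member of the set, rather than by trying to decouple the members.
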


\begin{proof} We will first show that there is a set of $t$ queries $Q_1,\ldots,Q_t$ that satisfies: For every graph $G=([n],E=\{e_1,\ldots,e_m\})$ with $m$ edges and for every set of $m$ edges $E'=\{e_1',\ldots,e_m'\}$ not in $G$, there is a query $Q_j$ such that ${\cal O}_G(Q_j)=\NO$ and $(\exists i\in [m])\ e_i'\subseteq Q_j$.

Choose $Q_1,\ldots,Q_t$ where each $Q_j\subseteq V$ is a random query where for each vertex $i\in [n]$, $i$ is included in $Q$ independently with probability~$1/m$. By Lemma~\ref{Donequery}, probability that above event is not true for some graph $G$ is at most
$${\binom{{\binom{n}{ 2}}}{m}}^2 \left(1-\frac{1}{m}\left(1-\left(1-\frac{1}{m}\right)^m\right)\left(1-\frac{1}{m}\right)^m\right)^t.$$
This is less than $1$ for $t=O(m^2\log n)$.  Therefore there are such queries.

In the first round, the algorithm defines $E(H)=\{\{u,v\}|u,v\in V,u\not=v\}$ and asks all the queries $Q_1,\ldots,Q_t$. For each query $Q_j$ with answer \NO\ it eliminates from $E(H)$ all the pairs $\{u',v'\}$ where $u',v'\in Q_j$.

We now show that $E(H)$ contains at most $2m$ edges. Assume for the contrary that $E(H)$ contains $2m+1$ edges. Take $E'\subseteq E(H)\backslash E$ of size $m$. There is a query $Q_j$ such that ${\cal O}_G(Q_j)=\NO$ and $(\exists e\in E')\ e\subseteq Q_j$. This gives a contradiction. Now in the second round the algorithm asks a query for each edge in $E(H)$.
\end{proof}

\subsection{Lower Bound for Two-Round Deterministic Algorithm}
In this subsection we give a lower bound.
We first give two known facts from~\cite{DR82,F96}.
\begin{lem}\label{D2R1} Any deterministic nonadaptive learning algorithm for $m$-Loop must ask at least $\Omega((m^2/\log m)\log n)$ queries.
\end{lem}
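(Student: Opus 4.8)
The statement is the classical lower bound for non-adaptive group testing, phrased here in the language of $m$-Loop: learning $m$-Loop non-adaptively is exactly the problem of identifying an unknown set of at most $m$ defectives among $n$ items from pooled tests, and a test set $Q$ answers ``YES'' iff $Q$ contains a defective loop. The plan is to reduce to the well-known fact that a non-adaptive algorithm for $m$-Loop corresponds to an $m$-\emph{separating} (more precisely, $m$-\emph{disjunct} / cover-free) family of subsets of $[n]$, and then invoke the counting lower bound of Dyachkov--Rykov~\cite{DR82} (see also~\cite{F96}) on the size of such families, which gives $\Omega((m^2/\log m)\log n)$.

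Concretely I would proceed as follows. First, recast the test matrix: a set of $t$ queries $Q_1,\dots,Q_t\subseteq[n]$ is a $t\times n$ binary matrix $M$ whose $j$-th column is the incidence vector of the items in test... wait, the $i$-th column is the incidence vector $(\,[i\in Q_1],\dots,[i\in Q_t]\,)$. The answer vector for a target loop set $L$ is the bitwise OR of the columns indexed by $L$. Second, observe that for the algorithm to distinguish every $m$-Loop from every other $m'$-Loop with $m'\le m$, it is necessary that no column be contained (as a $0/1$ vector, coordinatewise) in the union of any $m-1$ other columns; equivalently, no union of $\le m$ columns equals a union of a different set of $\le m$ columns. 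This is precisely the $(m-1)$-disjunct (or $m$-cover-free) condition. Third, apply the Dyachkov--Rykov bound: any such family over a ground set of size $n$ must have $t=\Omega\!\big(\tfrac{m^2}{\log m}\log n\big)$ rows. Since a deterministic non-adaptive learner for $m$-Loop must specify such a family in advance, it asks at least this many queries.

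The only delicate point is making the reduction tight enough that the \emph{exact} $m^2/\log m$ exponent and logarithmic denominator survive — one must be careful that ``learning $m$-Loop'' (at most $m$ defectives, decide the exact set) forces the full $m$-disjunctness rather than some weaker separation property, and that the cited bound~\cite{DR82,F96} is stated for exactly this parameter regime (constant $m$ relative to $n$, or more generally $m=o(n)$). I expect this bookkeeping — lining up the combinatorial object in the reduction with the precise hypothesis of the Dyachkov--Rykov theorem — to be the main (and essentially only) obstacle; everything else is a direct citation. Given that the paper treats this as a ``known fact from~\cite{DR82,F96},'' the intended proof is almost certainly just this one-line reduction plus the citation, so I would keep the write-up correspondingly short.
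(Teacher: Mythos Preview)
Your proposal is correct and matches the paper's treatment: the paper does not prove Lemma~\ref{D2R1} at all but simply states it as a known fact from~\cite{DR82,F96}, and your plan is precisely the standard reduction (non-adaptive $m$-Loop learning $\Leftrightarrow$ $m$-cover-free/disjunct family) followed by the Dyachkov--Rykov citation. Your caveat about aligning the separability-versus-disjunctness definitions is well placed but routine; the paper clearly intends nothing beyond the citation, so a one-line write-up is appropriate.
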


\begin{lem}\label{D2R2} Let ${\cal Q}$ be a set of queries that satisfies: for every set $S\subset [n]$ of size $m$ and every $i\in S$ there is a query $Q\in {\cal Q}$ such that $S\cap Q=\{i\}$. Then $|{\cal Q}|=\Omega((m^2/\log m)\log n)$.
\end{lem}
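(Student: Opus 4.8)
The plan is to reduce this to the known lower bound for non-adaptive $m$-Loop learning (Lemma~\ref{D2R1}), by showing that any family $\mathcal{Q}$ with the stated ``isolation'' property is itself essentially a valid non-adaptive test set for $m$-Loop, up to constant factors. Recall that a set of queries learns $m$-Loop non-adaptively if and only if the induced membership vectors $\chi(i) = ([i\in Q])_{Q\in\mathcal{Q}}$ form an \emph{$m$-separating} (or $m$-disjunct-type) family: no loop-set of size $\le m$ produces the same answer vector as any other. So the first step is to recall (or prove in one line) the standard combinatorial characterization: $\mathcal{Q}$ learns $m$-Loop iff for every two distinct vertices $i,j$ and every set $S$ of $\le m-1$ other vertices, there is a query $Q\in\mathcal{Q}$ with $i\in Q$ but $Q\cap(S\cup\{j\})=\emptyset$ — equivalently, the ``$(1,m)$-cover-free'' condition.

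The second step is the actual reduction: I claim the hypothesis of Lemma~\ref{D2R2} — for every $m$-set $S$ and every $i\in S$ there is $Q$ with $S\cap Q=\{i\}$ — is exactly this cover-free condition with the roles matched ($S$ plays the role of $\{i\}\cup(\text{the forbidden others})$, of total size $m$). Hence $\mathcal{Q}$ is a legitimate non-adaptive strategy for learning $m$-Loop (or $(m-1)$-Loop; the off-by-one is absorbed by constants), and therefore by Lemma~\ref{D2R1} we immediately get $|\mathcal{Q}| = \Omega((m^2/\log m)\log n)$. Essentially the content is that Lemma~\ref{D2R2} is a restatement of Lemma~\ref{D2R1} phrased purely as a property of the query set rather than of an algorithm, so no new hard work is needed — this is why both are listed as ``known facts from~\cite{DR82,F96}.''

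Alternatively, if one wants a self-contained counting argument rather than invoking Lemma~\ref{D2R1} as a black box, the approach is: fix a random $m$-set $S$ and a random $i\in S$; for each query $Q$, the probability (over this choice) that $S\cap Q=\{i\}$ for the chosen $i$ is roughly $(|Q|/n)(1-|Q|/n)^{m-1}$, which is maximized at $|Q|\approx n/m$ giving value $O(1/m)$ per vertex $i$, hence $O(1/m)$ summed appropriately; a union/probabilistic-method or entropy argument over all $\binom{n}{m}\cdot m$ pairs $(S,i)$ then forces $|\mathcal{Q}|\cdot O(1/m) \gtrsim$ (something like $\log\binom{n}{m} = \Omega(m\log n)$), and sharpening the dependence with the Bassalygo–Dyachkov–Rykov superimposed-code bound yields the extra $m/\log m$ factor. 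The main obstacle in this second route is recovering the precise $m^2/\log m$ (as opposed to a naive $m^2/\log(something)$ or merely $m$) — this requires the non-trivial cover-free-family lower bound of~\cite{DR82,F96} rather than elementary counting, so in practice I would simply cite it, which is what the paper does.
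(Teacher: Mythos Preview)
Your proposal is correct, and in fact the paper gives no proof at all for Lemma~\ref{D2R2}: it is stated, together with Lemma~\ref{D2R1}, as a ``known fact from~\cite{DR82,F96}'' and used as a black box. Your observation that the hypothesis of Lemma~\ref{D2R2} is precisely the $(m-1)$-disjunct (cover-free) condition---and hence that $\mathcal{Q}$ is a valid non-adaptive $m$-Loop test set, so Lemma~\ref{D2R1} (equivalently the Dyachkov--Rykov bound) applies---is exactly the right way to see why the two lemmas are equivalent up to the harmless off-by-one, and is consistent with the paper simply citing both to the same sources.
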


We now prove
\begin{theorem}\label{UBDATR} Any two-round deterministic learning algorithm for $m$-Graph must ask at least $\Omega((m^2/\log m)\log n)$ queries.
\end{theorem}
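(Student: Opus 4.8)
The plan is to reduce the lower bound for two-round deterministic learning of $m$-Graph to the non-adaptive lower bound for $m$-Loop stated in Lemma~\ref{D2R2}. The key observation is that after the first round of any deterministic algorithm, the adversary has complete freedom to choose the target consistent with the answers given, and we want to force the first-round query set to already "look like" a set-system that distinguishes loops — otherwise the second round (which must finish the job deterministically, with no further adaptivity available beyond one batch) cannot possibly complete the identification.

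Concretely, I would fix a block of $m$ vertices, say $[m]\subseteq V$, and restrict attention to target graphs that are stars centered at one of these vertices: for $i\in[m]$ and a set $S\subseteq [m+1,n]$ with $|S|=m-1$, let $G_{i,S}$ be the star with center $i$ and leaves $S$. For such a graph, a first-round query $Q$ returns ``YES'' iff $i\in Q$ and $Q\cap S\neq\emptyset$; in particular, if $Q\cap[m]\ne\{i\}$ (either $i\notin Q$, or $i\in Q$ but some other vertex of $[m]$ is in $Q$) then the answer is forced and gives no information about $S$. So the only informative first-round queries for $G_{i,S}$ are those $Q$ with $Q\cap[m]=\{i\}$, and on such queries $G_{i,S}$ behaves exactly like the $(m-1)$-Loop target $S$ over the vertex set $[m+1,n]$ (where the query effectively asked of the loop-oracle is $Q\cap[m+1,n]$). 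The adversary first reveals the center $i$ to the learner (this only helps the learner), so after round one the learner must determine an arbitrary $S$ of size $m-1$ among $n-m$ vertices, using (a) the answers to the first-round queries restricted to center $i$, plus (b) one more non-adaptive batch. But a two-round strategy for $(m-1)$-Loop where the second batch is non-adaptive given the first round's answers is, after fixing the first round, just a collection of non-adaptive strategies; and more to the point, to succeed for \emph{every} $S$ the union of the second-round query families must, together with the first-round restricted queries, satisfy the separating condition of Lemma~\ref{D2R2}. Since the second round is a single batch, the total number of queries is at least the $m$-Loop non-adaptive bound.

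The cleanest way to make this rigorous, which I would use, is the following counting argument rather than a direct appeal to ``the second round is non-adaptive.'' Let $Q_1,\dots,Q_q$ be the first-round queries, and for each $i\in[m]$ let $S_i=\{Q_j : Q_j\cap[m]=\{i\}\}$ be the queries informative for center $i$. As in the proof of Theorem~\ref{LVLBTR}, $\sum_i |S_i|\le q$, so most centers $i$ have $|S_i|\le 2q/m$ (say). Fix such an $i$ and reveal it. The first-round answer vector the learner sees is determined by $(\mathbf 1[Q_j\cap S\ne\emptyset])_{Q_j\in S_i}$, a vector in $\{0,1\}^{|S_i|}$, so there are at most $2^{2q/m}$ distinct first-round transcripts. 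By a pigeonhole/covering argument (exactly the $\mathcal I_a$, $Z_a$ argument from the proof of Theorem~\ref{LVLBTR}), there is a transcript $a$ consistent with a large family of sets $S$ whose union $Z_a$ of "indistinguishable" vertices is large — indeed $|Z_a|=n^{1-o(1)}$ provided $q=o((m^2/\log m)\log n)$ (which is what we are assuming toward contradiction). On transcript $a$ the learner must now, using only its fixed second-round query batch, identify an arbitrary $(m-1)$-subset $S$ of $Z_a$; but that is precisely an $(m-1)$-Loop non-adaptive learning problem over $|Z_a|$ vertices, which by Lemma~\ref{D2R1} requires $\Omega((m^2/\log m)\log|Z_a|)=\Omega((m^2/\log m)\log n)$ queries, contradicting $q=o((m^2/\log m)\log n)$.

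The main obstacle is handling the subtlety that the second-round batch may depend on the first-round transcript $a$ \emph{and} on the revealed center $i$, so it is not a single fixed non-adaptive family — different transcripts get different batches. This is resolved exactly as in Theorem~\ref{LVLBTR}: by fixing one good transcript $a$ (one among finitely many, reachable for a positive fraction of the relevant targets) we pin down one specific second-round batch, and then the residual problem on $Z_a$ is genuinely non-adaptive, so Lemma~\ref{D2R1} applies on the nose. A secondary technical point is verifying the $|Z_a|=n^{1-o(1)}$ estimate under the hypothesis $q=o((m^2/\log m)\log n)$; this is the same binomial estimate $|Z_a|\ge (n-m)/2^{(|S_i|+O(1))/(m-1)}(1-o(1))$ used earlier, and since $|S_i|/(m-1)=O(q/m^2)=o(\log n/\log m)=o(\log n)$ the exponent is $o(\log n)$ and the bound follows. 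With these pieces in place the contradiction is immediate and the theorem follows.
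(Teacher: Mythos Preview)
There is a genuine gap. Your star $G_{i,S}$ has its leaves only in $[m+1,n]$, so when a query $Q$ contains $i$ together with some other $j\in[m]\setminus\{i\}$ the answer is \emph{not} forced: $\{i,j\}$ is not an edge, so the oracle still returns ``YES'' iff $Q\cap S\ne\emptyset$, and the query is fully informative about $S$. Consequently the ``informative'' queries for center $i$ are all $Q_j$ with $i\in Q_j$, not just those with $Q_j\cap[m]=\{i\}$; these families are not disjoint across $i$, and the averaging bound $\sum_i|S_i|\le q$ no longer controls them. Without that control, the transcript count $2^{|S_i|}$ blows up and the $\mathcal I_a$/$Z_a$ pigeonhole step yields nothing. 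The missing ingredient is a set of \emph{blocker} edges from the center $i$ to the other vertices of the block, exactly as in the proof of Theorem~\ref{LBNAMC}: use $[m/2]$ instead of $[m]$ and connect $i$ to every vertex of $[m/2]\setminus\{i\}$ in addition to the unknown leaves $S$. Only then does $Q\cap[m/2]\supsetneq\{i\}$ force ``YES''.

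Even with that repair, your route is considerably more elaborate than the paper's. The paper does not fix the block in advance and does not argue by transcript pigeonholing at all. Instead it applies the contrapositive of Lemma~\ref{D2R2}: if the first round has $o((m^2/\log m)\log n)$ queries, then there exist a set $S\subset[n]$ of size $\lfloor m/2\rfloor$ and an $i\in S$ such that \emph{no} first-round query $Q_j$ satisfies $Q_j\cap S=\{i\}$. The adversary now takes the star with center $i$, known leaves $S\setminus\{i\}$, and unknown leaves $S'\subseteq[n]\setminus S$ of size $\lceil m/2\rceil$. Every first-round answer is then completely forced (``NO'' if $i\notin Q_j$, ``YES'' if $i\in Q_j$ since then $Q_j\cap S$ contains some $j'\ne i$ and $\{i,j'\}$ is an edge), so round one reveals nothing about $S'$. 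The second round is therefore a non-adaptive $\lceil m/2\rceil$-Loop problem on $[n]\setminus S$, and Lemma~\ref{D2R1} finishes. The point is that Lemma~\ref{D2R2} already hands you a center $i$ and a blocker set $S$ for which \emph{zero} first-round queries are informative, so no residual $Z_a$ analysis is needed.
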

\begin{proof} Suppose there is a deterministic two-round deterministic learning algorithm for $m$-Graph that asks $t=o((m^2/\log m)\log n)$ queries. Let $Q_1,\ldots,Q_t$ be the queries in the first round. Since $t=o((m^2/\log m)\log n)$, by Lemma~\ref{D2R2}, there must be a set of $\lfloor m/2\rfloor$ elements $S\subseteq [n]$ and $i\in S$ such that no query $Q_j$ satisfies $S\cap Q_j=\{i\}$. The adversary defines a set of $\lfloor m/2\rfloor$ edges $E'=\{\{i,j\}\ |\ j\in S\backslash \{i\}\}$. The other $\lceil m/2\rceil$ edges $E''=\{\{i,j\}\ |\ j\in S', S'\subseteq [n]\backslash S\}$ will be determined in the second round. The answers of the queries in the first round are determined only by the edges $E'$. That is, if $|Q_j\cap S|>1$ and $i\in Q_j$  then the answer is ``YES'', and if $i\not\in Q_j$ then the answer is \NO. After the first round no information is known about $S'$.

In the second round we need to learn $S'$ from queries $Q$ that contain $i$. Otherwise, the answer is \NO and no information is gained about $S'$. When $i\in Q$ then the problem is equivalent to learning $\lceil m/2\rceil$-Loop, and by Lemma~\ref{D2R1} we must ask at least $\Omega((m^2/\log m)\log n)$ queries.
\end{proof}

\ignore{\subsection{Lower Bound - Conjecture and intuition}
In this subsection we give a class that we believe gives a tight lower bound for two rounds and then we give an intuition to this fact.

Consider $t=m^{1/2}$ disjoint set of vertices $V_1,\ldots,V_{t}$ each contains $t^2=m^{2/3}$ vertices. Let $V'$ be the remaining vertices. For $v_1,\in V_1,\cdots,v_t\in V_t$ and sets
$U_1,\ldots,U_t\subset V'$ where $|U_i|\ge t^2$ and $|U_1|+\cdots+|U_t|=2m$ define the graph
$G(v_1,\ldots,v_t,U_1,\ldots,U_t)$ as follows: For every $i=1,\ldots,t$ define $E_i=\{\{v_i,v\}\ |\ v\in V_i\backslash \{v_i\}\}\cup \{\{v_i,u\}\ |\ u\in U_i\}$.}

\subsection{Deterministic Five-Round Algorithm}
In this section we give a deterministic five-round algorithm that asks $O(m^{2+\epsilon}\log n)$ queries.

In the first round the algorithm finds a partition of $[n]$ to $w=O(m)$ sets $S_1,\ldots, S_w$ where no edge of the target has both endpoint vertices in the same set. In the second round it learns the pairs of sets $\{S_i,S_j\}$ for which there is an edge with one endpoint in $S_i$ and the other in $S_j$. In the third and fourth rounds it finds the vertices in each set that are endpoints of some edge. Then in the fifth round it learns the edges.

For the first round we use the following result from~\cite{B15}.
\begin{lem}\label{lkj} There is a linear time algorithm that for every $n$ and $m$ constructs
a $t\times n$-matrix, $t=O(n\log m)$, $M$ with entries in $[w]$, $w=O(m)$, with the following property:
Every two columns in $M$ are equal in at most $t/(2m)$ entries.
\end{lem}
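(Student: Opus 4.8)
The plan is to establish Lemma~\ref{lkj} by the probabilistic method together with a derandomization. Fix the alphabet size $w=4m$, so that $w=O(m)$ and $t/(2m)=2t/w$, and let $M\in[w]^{t\times n}$ be a random matrix whose $tn$ entries are mutually independent and uniform in $[w]$. For a fixed pair of columns $u\neq v$, the number $A_{uv}$ of rows in which they agree is distributed as $\mathrm{Bin}(t,1/w)$, with mean $\mu=t/w$; a multiplicative Chernoff bound gives $\Pr[A_{uv}>t/(2m)]\le\Pr[A_{uv}\ge 2\mu]\le e^{-\mu/3}=e^{-t/(12m)}$. Taking a union bound over the fewer than $n^2$ pairs of columns, the probability that some pair agrees in more than $t/(2m)$ rows is strictly below $1$ once $t\ge 24\,m\ln n$, so a matrix with the desired property exists with $t=O(m\log n)$. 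Since $m\log n\le n\log m$ for $2\le m\le n$ (and for $m>n$ one instead takes $w=\max(4m,n)=O(m)$ together with rows that are injective maps $[n]\to[w]$, which makes every pair agree in $0$ rows), this also yields the bound $t=O(n\log m)$ claimed in the statement.

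To get an explicit, efficient construction I would derandomize the argument above by the method of conditional probabilities, using the Chernoff bound as a pessimistic estimator: process the $tn$ entries in a fixed order, and maintain the potential $\Phi=\sum_{u<v}\Phi_{uv}$, where $\Phi_{uv}=e^{-\lambda t/(2m)}\prod_{i=1}^{t}\E\!\left[e^{\lambda[M_{i,u}=M_{i,v}]}\,\middle|\,\text{entries fixed so far}\right]$ and $\lambda$ is the parameter from the Chernoff bound. Each $\Phi_{uv}$ is an explicit function of the already-chosen entries, $\Phi$ dominates the number of violated pairs, and $\Phi<1$ at the outset by the previous paragraph; choosing each new entry among the $w$ possibilities so as not to increase $\Phi$ preserves $\Phi<1$, so the final deterministic matrix has no violated pair. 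This yields a polynomial-time construction directly; the linear-time bound follows from the general derandomization framework of~\cite{B15}, which applies because the defining condition is a $2$-restriction-type constraint (it restricts the columns only in pairs) and converts the probabilistic guarantee into a construction whose running time is linear in the output size.

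The only genuinely delicate point is this last one — bringing the running time down from polynomial to linear — and it is precisely the technical contribution of~\cite{B15}; the existence statement and the conditional-probabilities derandomization are standard, and the Chernoff/union-bound calculation is routine. A secondary thing to check is the boundary regimes (very small $m$, and $m$ comparable to or larger than $n$), to confirm that the choice $w=\Theta(m)$ is always compatible with $t=O(n\log m)$ and the required agreement bound $t/(2m)$, as sketched above.
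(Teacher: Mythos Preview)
The paper does not prove this lemma; it is stated as a result taken from~\cite{B15}, with no argument given. Your proposal therefore goes further than the paper itself: you supply a self-contained probabilistic existence argument (correct, with the right choice $w=4m$ and $t=\Theta(m\log n)$), a standard conditional-expectations derandomization yielding a polynomial-time construction, and then --- exactly as the paper does --- defer the linear-time bound to~\cite{B15}. So on the essential point you and the paper coincide: the real content is in the cited reference, and you correctly isolate the linear-time upgrade as the only nontrivial step.

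Two minor remarks. First, the bound $t=O(n\log m)$ in the statement is almost certainly a typo for $t=O(m\log n)$: later in the same subsection the paper computes the round-one query count as $wt=O(m^2\log n)$, which with $w=O(m)$ forces $t=O(m\log n)$. Your detour through the inequality $m\log n\le n\log m$ is therefore unnecessary (and, as written, not quite true at $m=2$, $n=3$). Second, your description of the property as a ``$2$-restriction-type constraint'' is the right hook into~\cite{B15}; equivalently, the columns of $M$ form an $n$-word code of length $t$ over alphabet $[w]$ with relative distance at least $1-1/(2m)$, which is the viewpoint under which explicit efficient constructions are standard.
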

We now prove
\begin{lem} Let $M$ be the matrix in Lemma~\ref{lkj}. There is a row vector $u\in [w]^n$ in $M$ such that in the partition $\{S_{u,1},\ldots,S_{u,w}\}$ of $[n]$ where $S_{u,i}=\{j|u_j=i\}$ no edge of the target has both endpoint vertices in the same set.
\end{lem}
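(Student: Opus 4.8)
The plan is a one-line union bound over the edges of the target, using the near-orthogonality of the columns of $M$ guaranteed by Lemma~\ref{lkj}. First I would fix an edge $e=\{a,b\}$ of the target (so $a\neq b$, since the target graph is simple, and hence $a,b$ index two distinct columns of $M$). I call a row index $i\in[t]$ \emph{bad for $e$} if the two endpoints of $e$ land in the same part of the partition induced by the $i$-th row, i.e. if $M_{i,a}=M_{i,b}$. By construction a row is bad for $e$ exactly when columns $a$ and $b$ of $M$ agree in that row, and Lemma~\ref{lkj} says this happens for at most $t/(2m)$ of the $t$ rows.

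Next I would sum over all edges of the target. Since the target has at most $m$ edges, the number of rows that are bad for at least one edge is at most $m\cdot \frac{t}{2m}=\frac{t}{2}<t$. Consequently there is a row index $i^\*\in[t]$ that is good for every edge of the target; let $u$ be that row of $M$.

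Finally I would read off the conclusion: for this $u$ and the partition $\{S_{u,1},\dots,S_{u,w}\}$ with $S_{u,j}=\{k\mid u_k=j\}$, no part can contain both endpoints of an edge $\{a,b\}$ of the target, since that would mean $u_a=u_b$, i.e. $i^\*$ is bad for $\{a,b\}$, contradicting the choice of $u$. This is precisely the asserted property.

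I do not expect any real obstacle here; the argument is a direct counting estimate once Lemma~\ref{lkj} is in hand. The only points worth stating carefully are that the endpoints of a (simple-graph) edge are distinct columns of $M$, so that the ``at most $t/(2m)$ agreements'' bound is applicable, and that $t/2<t$ leaves at least one surviving row. (If one also wanted the partition to be \emph{found} rather than merely to exist, one would note that for each of the $t$ rows the algorithm can test goodness with $w$ edge-detecting queries on the parts $S_{u,1},\dots,S_{u,w}$ and keep a row that answers ``NO'' on all of them; but for the statement as given, existence via the union bound suffices.)
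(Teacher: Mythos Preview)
Your proof is correct and is essentially the same argument as the paper's: both observe that each edge rules out at most $t/(2m)$ rows (by the agreement bound of Lemma~\ref{lkj}), and a union bound over the at most $m$ edges leaves at least $t/2$ rows that are good for every edge. Your remark about testing each row with $w$ queries also matches how the paper actually uses the lemma in the surrounding algorithm.
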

\begin{proof} Let $e_i=\{u_i,w_i\}$, $i=1,\ldots,m$ be the edges of the target. By Lemma~\ref{lkj},
there is at most $t/(2m)$ entries in columns $u_i$ and $w_i$ in the matrix that are equal. Therefore,
there is at least one entry (actually, at least $t/2$ entries) such that for all $i=1,\ldots,m$, columns $u_i$ and $w_i$ in the matrix are not equal in that entry. This implies the result.
\end{proof}
To know this row, for each row $u$ in $M$ and for each $j\in [w]$, the algorithm asks the query
${\cal O}_G(S_{u,j})$. Obviously, if no edge of the target has both endpoint vertices in the same set
then the answers are zeros for all $j\in [w]$. The number of queries in this round is $wt=O(m^2\log n)$.

After the first round we have a partition of $[n]$ to $w=O(m)$ sets $S_1,\ldots, S_w$ where no edge of the target has both endpoint vertices in the same set. In the second round we ask the query ${\cal O}_G(S_i\cup S_j)$ for all $i\not=j$.
If ${\cal O}_G(S_i\cup S_j)=1$ then the algorithm knows that there is an edge with one endpoint in $S_i$ and the other in $S_j$. Since $w=O(m)$, this takes $O(m^2)$ queries. If ${\cal O}_G(S_i\cup S_j)=1$ then we call $\{S_i,S_j\}$ a {\it set edge}. Obviously, there are at most $m$ set edges.

In the third and fourth rounds, the algorithm runs Cheraghchi's two-round algorithm, \cite{C13}, to find the vertices in each $S_i$ that are endpoints of some edge. This algorithm is a two-round algorithm for $m$-Loop. It asks $O(m^{1+\epsilon}\log n)$ queries for any constant $\epsilon$. We use Cheraghchi's algorithm as follows: if $\{S_i,S_j\}$ is a set edge then each query $S_i\cup (Q\cap S_j)$ for the graph is the query $Q\cap S_j$ for the $m$-Loop that contains the vertices in $S_j$ that are endpoints of the edges. Therefore, running Cheraghchi's two-round algorithm for each set edge $\{S_i,S_j\}$ gives all the vertices that are endpoints of some edge in the cut $(S_i;S_j)$. Since the number of edge sets is at most $m$ this takes $O(m^{2+\epsilon}\log m)$ queries.

In the fifth stage the algorithm exhaustively asks a query about each possible pair. That is, if for the edge set $\{S_i,S_j\}$ we have learned that $V_1\subseteq S_i$ and $V_2\subseteq S_j$ are the endpoints of some edge in the cut $(S_i;S_j)$, then we ask the queries ${\cal O}_G(\{v_1,v_2\})$ for each $v_1\in V_1$ and $v_2\in V_2$. This takes at most $m^2/2=O(m^2)$ queries.

The above algorithm implies
\begin{restatable}{theorem}{dalgo}\label{DDDDD} There is a five-round deterministic learning algorithm for $m$-Graph that asks $O(m^2\log n)$ queries.
\end{restatable}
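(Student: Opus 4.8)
The plan is to prove the theorem by assembling the five-round procedure described above and tallying its queries round by round; almost every ingredient is already in place, so the work that remains is to fix the parameters, check correctness, and sum.

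First I would treat \emph{Round 1}. Applying Lemma~\ref{lkj} yields the matrix $M$ over the alphabet $[w]$ with $w=O(m)$, and the lemma immediately preceding the theorem then guarantees that some row $u$ of $M$ induces a partition $\{S_{u,1},\dots,S_{u,w}\}$ of $[n]$ in which no target edge has both endpoints in one part: each of the $m$ edges forces its two columns of $M$ to agree in at most a $1/(2m)$ fraction of the rows, so the ``bad'' rows number at most $t/2$ and at least $t/2$ rows are good. To locate a good row non-adaptively I ask ${\cal O}_G(S_{u,j})$ for every row $u$ and every $j\in[w]$ and keep the first row all of whose $w$ answers are \NO; this round costs $wt$ queries.

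Next, in \emph{Round 2}, I ask ${\cal O}_G(S_i\cup S_j)$ for all $\binom{w}{2}=O(m^2)$ pairs of parts; since no edge lies inside a part, the answer is ``YES'' exactly for the at most $m$ pairs of parts joined by a target edge (the set edges). In \emph{Rounds 3--4}, for each set edge $\{S_i,S_j\}$ I run Cheraghchi's two-round $m$-Loop algorithm~\cite{C13} on the ground set $S_j$, simulating each of its queries $Q\subseteq S_j$ by ${\cal O}_G(S_i\cup Q)$; because $S_i$ and $S_j$ are internally independent this is a faithful $m$-Loop oracle for the vertices of $S_j$ adjacent into $S_i$, so it returns the endpoint sets $V_1\subseteq S_i$ and $V_2\subseteq S_j$ across that cut, at a cost of $O(m^{1+\epsilon}\log n)$ queries per set edge. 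Finally, in \emph{Round 5}, for each set edge I ask ${\cal O}_G(\{v_1,v_2\})$ for all $v_1\in V_1$ and $v_2\in V_2$; since $\sum(|V_1|+|V_2|)\le 2m$ over all set edges this contributes $O(m^2)$ queries, and the ``YES'' pairs are exactly $E$. Adding the five rounds gives the stated query bound, the dominant terms being Round~1 and the $m$-Loop subroutine of Rounds 3--4, and correctness follows from the three facts that a good partition exists and is found, that every target edge crosses the cut of exactly one set edge, and that the cut-recovery followed by the final exhaustive sweep discards no edge of $G$ and retains no non-edge.

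The step I expect to be the crux is the accounting in Rounds 3--4: one must check that running the $m$-Loop subroutine \emph{separately} on up to $m$ set edges, each instance living on a ground set that may be as large as $\Theta(n)$, still sums to the target query bound --- this is exactly where the $m^{\epsilon}$ slack enters, and where the extra round compared with the randomized $O(m\log n)$ algorithms is spent --- and that the cut-to-$m$-Loop reduction creates no spurious edge, which is precisely the point at which the internal-independence property established in Round~1 is needed.
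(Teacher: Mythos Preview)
Your proposal is correct and follows the paper's own five-round construction essentially verbatim: the same partition via Lemma~\ref{lkj} in Round~1, the same pairwise cut queries in Round~2, the same Cheraghchi $m$-Loop reduction per set edge in Rounds~3--4, and the same exhaustive pair check in Round~5, with the same accounting. Like the paper's argument, your Rounds~3--4 tally comes to $O(m^{2+\epsilon}\log n)$ (the paper writes $O(m^{2+\epsilon}\log m)$, but either way this matches the $m^{2+\epsilon}\log n$ bound quoted in the introduction and results table, even though the theorem's literal statement drops the~$\epsilon$).
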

\section{Unknown \texorpdfstring{$m$}{} - Upper Bounds}\label{six}
In this section we prove two results when $m$ is not known to the learner.

Here
$\log^{[0]}n=n$ and $\log^{[k]}n=\log\log^{[k-1]} n$. When we say w.h.p (with high probability) we mean with probability at least $1-1/\poly(n)$.
\begin{restatable}{theorem}{uthm}\label{SFR}
For every constant $k>1$ there is an $O(1)$-round Las Vegas randomized learning algorithm for $m$-Graph that asks
$O(m\log n+\sqrt{m}(\log^{[k]}n) \log n)$ queries.
\end{restatable}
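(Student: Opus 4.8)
The plan is to build, by induction on $j\ge 1$, an $O(j)$-round Las Vegas algorithm that learns an $m$-Graph on any ground set of size $N$, with $m$ unknown, using $O(m\log N+\sqrt m\,(\log^{[j]}N)\log N)$ expected queries; the theorem is then the case $j=k$. The base case $j=1$ is (a variant of) Angluin--Chen~\cite{AC08}: run the first round of Figure~\ref{Alg1} and the heavy-vertex analysis of Section~\ref{FactS}, but since $m$ is unknown, run that first round over a geometric ladder of densities $p=1/2,1/4,\dots$ (with a cheap consistency test) so as to commit to the density $p\approx 1/\sqrt m$ without knowing $m$; this is the step whose $\Theta(\log^2 N)=\Theta((\log^{[1]}N)\log N)$ cost the induction will cut down.

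First I would record what Section~\ref{FactS} gives at the committed density: a supergraph $H\supseteq G$, a set $W$ of only $O(\sqrt m)$ heavy vertices, and independent sets $I_w$ ($w\in W$) such that, once $\Gamma_G(w)\cap I_w$ is known for every $w\in W$, at most $O(m^{3/2})$ candidate edges remain and are tested one per query in a final round. By Lemma~\ref{FinfN} each $\Gamma_G(w)\cap I_w$ costs $O(\deg_G(w)\log N)$ queries once $\deg_G(w)$ is known to within a constant factor, and $\sum_{w\in W}\deg_G(w)\le 2m$; so the whole problem reduces to cheaply estimating, in few rounds, the $O(\sqrt m)$ degrees $\deg_G(w)$ (each of which lies in $[2\sqrt m,2m]$ for a heavy $w$).

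The inductive mechanism is universe reduction, as in the proof of Theorem~\ref{MCTRN}: partition a ground set of size $N$ into $M'$ super-vertices, $M'$ a fixed polynomial in the current working bound on the relevant quantity, so that with good probability each super-vertex carries at most one non-isolated vertex, after which every super-edge is expanded to its true endpoint by an $O(\log N)$-query $1$-Loop search; one such level costs $O(1)$ rounds and shrinks every later $\log N$ into $\log M'$. Iterating: a loose estimate obtained over the reduced ground set tightens the working bound, which shrinks the next ground set, and after $j-1$ levels the ground set on which the remaining sparsification/estimation is carried out has size only $\poly(\log^{[j-1]}N)$ (plus $\poly(m)$, which is harmless since we may assume $m\le\log^2 N$ --- otherwise $m\log N$ already dominates $\sqrt m\log^2 N$ and the base algorithm is used). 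Over such a ground set the estimation needs $O(\log^{[j]}N)$ density scales, each repeated $O(\log N)$ times for $1/\poly(N)$ error, i.e.\ $O((\log^{[j]}N)\log N)$ queries per heavy vertex and $O(\sqrt m\,(\log^{[j]}N)\log N)$ over all of $W$; together with the $O(m\log N)$ spent on the first round, the expansions, and the final edge tests this is the claimed bound, and the $j-1$ levels plus the $O(1)$ rounds of the base case give $O(j)$ rounds.

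The step that needs the most care --- and the reason the bound is not just $O(m\log N)$ in $O(1)$ rounds --- is the unknown $m$. Because of star-like targets (a query detects an edge only if it contains a fixed centre together with one of its leaves, independently of how many leaves there are), $m$ cannot be estimated to within any fixed factor by a cheap number of queries, so the algorithm must never rely on a global estimate of $m$. What it does estimate --- the degrees of the $O(\sqrt m)$ heavy vertices --- are individual ``number of loops'' quantities, which \emph{are} constant-factor approximable cheaply by locating a transition density; the universe reduction is precisely what drives the cost of doing this reliably down from $\Theta(\log^2 N)$ to $O((\log^{[k]}N)\log N)$. The remaining delicate point is bookkeeping: each random partition, density commitment, and estimate fails with small probability, so one extra round verifies that the reported graph is consistent with every recorded answer and, on failure, the offending step is repeated (or, in the rare event this recurs, the Angluin--Chen algorithm is invoked), which keeps the expected query count within the stated bound and the round count $O_k(1)$.
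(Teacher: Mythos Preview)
Your route is genuinely different from the paper's. In Section~\ref{six} the paper never performs universe reduction; the ground set stays $[n]$ throughout. What the paper iterates is not the universe size but an \emph{upper bound} fed to the estimator: the procedures \textbf{$k$-Estimate} and \textbf{$k$-EstimateDegree} call the basic one-round estimator with a descending sequence of candidate bounds $M_{k-1},M_{k-2},\dots,M_0$ where $M_j=(\log^{[j]}n)^2$; a call with bound $M$ costs $\Theta((\log M)\log n)$ queries and either returns the desired $p'$ (resp.\ $p_u'$) or certifies that the target quantity exceeds $M$. The telescoping comes from the identity $\log M_j=2\sqrt{M_{j+1}}$: if stage $j{+}1$ fails then $m>M_{j+1}$ (resp.\ $d_u>M_{j+1}$), so the $O((\log M_j)\log n)=O(\sqrt{M_{j+1}}\,\log n)$ cost of stage $j$ is already $O(\sqrt m\log n)$ and is absorbed into the $O(m\log n)$ budget. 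Only stage $k{-}1$ is not so absorbed, and its cost $O((\log^{[k]}n)\log n)$, summed over the $O(\sqrt m)$ heavy vertices, is precisely the surviving $\sqrt m\,(\log^{[k]}n)\log n$ term. \textbf{Split} and \textbf{FindEdges} then run once on $[n]$ for $O(m\log n)$ queries, and the round count is $O(k)$.

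Your recursion via super-vertices can probably be pushed through, but it is more delicate than you indicate. The speculative choice $M'=\poly(\log N)$ is only good when (roughly) $m\le\log^2 N$, which the algorithm cannot check in advance; when $m$ is larger you must (i) bound the cost of the wasted recursive call on a malformed super-graph (which may acquire $\Omega(M')$ super-edges once some super-vertex contains an internal edge of $G$), (ii) detect the failure reliably at every depth of the recursion (the mechanism of Theorem~\ref{LVTRN} works, but it has to be threaded through all levels), and (iii) verify that the fallback's $\sqrt m\log^2 N$ overhead is actually dominated by $m\log N$ at the threshold where the partition first fails --- which forces a specific lower bound on the exponent in $M'=(\log N)^c$. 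All three points can be handled with care, so your outline is salvageable; but the paper's iterated-bound method sidesteps the whole apparatus by never reducing the universe, only the range of the doubling search.
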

\begin{restatable}{theorem}{utthm}\label{SSR}
There is a $(\log^*n)$-round Las Vegas randomized learning algorithm for $m$-Graph that asks
$O(m\log n)$ queries.
\end{restatable}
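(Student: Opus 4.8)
The plan is to obtain Theorem~\ref{SSR} by running the algorithm behind Theorem~\ref{SFR} with its parameter $k$ no longer a fixed constant but chosen as $k = k(n) = \log^* n$. Recall that $\log^{[0]}n = n$ and $\log^{[j]}n = \log\log^{[j-1]}n$, so by the definition of $\log^* n$ we have $\log^{[\log^* n]}n = O(1)$. Substituting $k = \log^* n$ into the query bound $m\log n + \sqrt m(\log^{[k]}n)\log n$ of Theorem~\ref{SFR} therefore collapses the overhead term to $O(\sqrt m\log n) = O(m\log n)$ (using $\sqrt m \le m$), and substituting it into the round bound $O(k)$ gives $O(\log^* n)$ rounds. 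So the real content of the theorem is that the construction of Theorem~\ref{SFR} survives this substitution.

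To see why it should, I would unfold that construction level by level. The building block is the first-round subroutine of Figure~\ref{Alg1}: given a current upper estimate $N \ge m$, running it with $p = \Theta(1/\sqrt N)$ and threshold $r = \Theta(\sqrt N)$ costs $O(N\log n)$ queries by Lemma~\ref{main}, returns $E(H)\supseteq E$, and --- exactly as in Section~\ref{FactS} --- leaves only $O(\sqrt N)$ \emph{heavy} vertices $W$ (those of degree $>\Theta(\sqrt N)$) whose incident edges are not yet determined; moreover each such $w$ sits inside an independent set $I_w$ of $H$, so $\Gamma_G(w)\cap I_w$ can be recovered in $O(\deg_G(w)\log n)$ queries via Lemma~\ref{FinfN}, \emph{once a constant-factor estimate of $\deg_G(w)$ is known}. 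Producing the $O(\sqrt N)$ required degree estimates is itself a learning task, but one whose relevant size parameter (the length of the interval in which $\log\deg_G(w)$ lies) is a logarithm of the previous level's; recursing on it thus brings the residual polylog overhead down from $\log n$ to $\log^{[i]}n$ after $i$ levels. Taking the recursion to depth $k=\log^* n$ is precisely what is asked.

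The main obstacle is bookkeeping, not a new idea: one must verify that the $O(\cdot)$ constants in Theorem~\ref{SFR} are \emph{uniform in $k$}, i.e.\ that a depth-$k$ recursion inflates neither the query count by a factor such as $c^k$ nor the round count by more than an additive $O(1)$ per level. This hinges on the observation that at recursion level $i$ the sub-instance involves only the $O(\sqrt{N_i})$ heavy vertices, whose degrees still sum to at most $2m$; hence the residual parameter $N_i$, and with it the per-level query cost, shrinks geometrically, the contributions over all levels form a convergent series, and the total stays an absolute-constant multiple of $m\log n + \sqrt m(\log^{[k]}n)\log n$. Likewise each level adds only $O(1)$ rounds, so the total is $O(k) = O(\log^* n)$.

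Finally, to deliver a Las Vegas (zero-error) algorithm I would run all levels with per-level error probability $1/\poly(n)$; a union bound over the $O(\log^* n)$ levels keeps the total failure probability at $1/\poly(n)$. I then append one verification round that tests $\{u,v\}\in E$ for each of the $|E(H)| = O(m)$ candidate edges and checks the internal consistency of the heavy-vertex phase. If every check passes, the output is correct and only $O(m)$ extra queries were used; otherwise the algorithm falls back to a deterministic (worst-case $\poly(n)$-query) learner. Since the fallback is triggered with probability only $1/\poly(n)$, the expected number of queries remains $O(m\log n)$ while correctness is certain.
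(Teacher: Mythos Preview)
Your top-level plan --- take the construction behind Theorem~\ref{SFR} and set $k=\log^*n$, then check that the hidden constants are uniform in $k$ --- is exactly the paper's route, and the observation $\log^{[\log^*n]}n=O(1)$ is the correct endgame. The Las Vegas conversion you sketch is also fine.

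Where you go astray is in your reconstruction of what that construction actually does. It is \emph{not} a recursion on heavy-vertex sub-instances with a shrinking parameter $N_i$. The algorithm has a flat four-phase structure: (i) estimate $p_*\approx 1/\sqrt m$ via \textbf{$k$-Estimate}; (ii) run \textbf{Split} \emph{once} with the resulting $p'$, at cost $O(m\log n)$, to learn all edges among low-degree vertices and isolate the $O(\sqrt m)$ high-degree vertices $V_1(H)$; (iii) for each $u\in V_1(H)$ estimate $d_u$ via \textbf{$k$-EstimateDegree}; (iv) learn $\Gamma_G(u)$ for each such $u$ via \textbf{FindEdges}. The $k$ rounds arise entirely inside the two estimation procedures: each tries the upper bounds $M_j=(\log^{[j]}n)^2$ for $j=k-1,k-2,\ldots,0$ until one succeeds, and a single trial costs only $\Theta((\log M_j)\log n)$ --- a Chernoff test with $\Theta(\log n)$ random queries --- \emph{not} a full run of Figure~\ref{Alg1}. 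The uniform-in-$k$ constant then comes from the telescoping bound $\sum_{i\ge j}\log M_i \le 2\log M_j = 4\sqrt{M_{j+1}}$, which is at most $4m$ in (i) and $4\sqrt m$ in (iii).

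Your version --- ``given a current upper estimate $N\ge m$, run Figure~\ref{Alg1} at cost $O(N\log n)$ and recurse on the heavy vertices'' --- does not give the bound. With $m$ unknown, the only safe initial $N$ is $\Theta(n)$ or larger, so the very first level already costs far more than $O(m\log n)$; and running Figure~\ref{Alg1} with $N<m$ gives none of the Section~\ref{FactS} guarantees, so starting small and growing does not work either. The whole point of the \textbf{$k$-Estimate}/\textbf{$k$-EstimateDegree} procedures is that they probe candidate bounds \emph{cheaply} (with $O((\log M_j)\log n)$ queries, not $O(M_j\log n)$), so that the expensive Figure~\ref{Alg1}/\textbf{Split} step is executed exactly once, with the right $p$.
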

We recall Chernoff Bound
\begin{lem}{\bf (Chernoff Bound)}. Let $X_1,\ldots,X_t$ be independent random variables that takes values in $\{0,1\}$. Let $X=(X_1+\cdots +X_t)/t$ and $\mu=\E[X]$. Then for any $0<\delta<1$,
$$\Pr[|X-\mu|\ge \delta \mu]\le 2e^{-\delta^2\mu t/3},$$
and for any $\delta\ge 1$
$$\Pr[X\ge (1+\delta) \mu]\le 2e^{-\delta\mu t/3}.$$
\end{lem}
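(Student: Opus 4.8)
The statement is the classical multiplicative Chernoff bound, so the plan is simply to run the standard exponential-moment argument; in the paper itself one would more likely cite a standard reference (a randomized-algorithms or concentration-of-measure text) and omit the computation entirely. Set $S=X_1+\cdots+X_t=tX$, write $p_i=\E[X_i]$, and put $\lambda=\E[S]=\sum_{i=1}^t p_i=\mu t$. Both displayed bounds are statements about $S$ deviating from $\lambda$, since $\{X\ge(1+\delta)\mu\}=\{S\ge(1+\delta)\lambda\}$ and $\{|X-\mu|\ge\delta\mu\}=\{S\ge(1+\delta)\lambda\}\cup\{S\le(1-\delta)\lambda\}$.

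First I would bound the moment generating function of $S$: by independence, $\E[e^{sX_i}]=1+p_i(e^s-1)$, and using $1+x\le e^x$ (valid for all real $x$), for every real $s$,
$$\E[e^{sS}]=\prod_{i=1}^t\bigl(1+p_i(e^s-1)\bigr)\le\prod_{i=1}^t e^{p_i(e^s-1)}=e^{\lambda(e^s-1)}.$$
For the upper tail, apply Markov's inequality to $e^{sS}$ with $s>0$, so that $\Pr[S\ge(1+\delta)\lambda]\le e^{-s(1+\delta)\lambda}\E[e^{sS}]\le\exp\bigl(\lambda(e^s-1-s(1+\delta))\bigr)$; optimizing over $s$ (take $s=\ln(1+\delta)$) gives $\Pr[S\ge(1+\delta)\lambda]\le\bigl(e^{\delta}(1+\delta)^{-(1+\delta)}\bigr)^{\lambda}$. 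Running the same steps with $s=\ln(1-\delta)<0$ gives $\Pr[S\le(1-\delta)\lambda]\le\bigl(e^{-\delta}(1-\delta)^{-(1-\delta)}\bigr)^{\lambda}$.

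It then remains to convert these closed forms into the stated exponentials, which is the only part needing real work. For $0<\delta<1$ I would verify the one-variable inequalities $(1+\delta)\ln(1+\delta)-\delta\ge\delta^2/3$ and $(1-\delta)\ln(1-\delta)+\delta\ge\delta^2/2$ --- each side difference vanishes at $\delta=0$ and is nondecreasing on $(0,1)$, by a short comparison with the Taylor expansion of $\ln$ --- so each tail probability is at most $e^{-\delta^2\lambda/3}$, and a union bound over the two tails gives $\Pr[|X-\mu|\ge\delta\mu]\le 2e^{-\delta^2\lambda/3}=2e^{-\delta^2\mu t/3}$. For $\delta\ge1$ I would instead check $(1+\delta)\ln(1+\delta)-\delta\ge\delta/3$: the function $h(\delta)=(1+\delta)\ln(1+\delta)-\frac43\delta$ satisfies $h(1)=2\ln2-\frac43>0$ and $h'(\delta)=\ln(1+\delta)-\frac13>0$ for $\delta\ge1$, hence $h\ge0$ on $[1,\infty)$; this yields $\Pr[X\ge(1+\delta)\mu]=\Pr[S\ge(1+\delta)\lambda]\le e^{-\delta\lambda/3}\le 2e^{-\delta\mu t/3}$.

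The main (and essentially only) obstacle is that last elementary-calculus estimate bounding $e^{\delta}(1+\delta)^{-(1+\delta)}$ by the desired exponential; the exponential-Markov template, the MGF factorization, and the union bound over the two tails are entirely routine.
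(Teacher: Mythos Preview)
Your proposal is correct and is the standard exponential-moment derivation of the multiplicative Chernoff bound. The paper, however, does not prove this lemma at all: it is introduced with the phrase ``We recall Chernoff Bound'' and simply stated as a known tool, exactly as you anticipated in your opening sentence. So there is no proof in the paper to compare against; your write-up supplies what the paper (appropriately) omits.
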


Let $Q\subseteq V$ be a $p$-random query. Let $N_G(p)$ be the probability that ${\cal O}_G(Q)=0$. It is easy to see that (see \cite{AC08})
\begin{eqnarray}\label{in01}
1-mp^2\le N_G(p)\le 1-p^2.
\end{eqnarray}

Consider $p_i$-random queries $Q_i$ for $i=1,2$. Then $Q_1\cup Q_2$ is a $(p_1+p_2-p_1p_2)$-random query. Now
\begin{eqnarray*}
N_G(p_1+p_2-p_1p_2)&=&\Pr[{\cal O}_G(Q_1\cup Q_2)=0]\\
&\le& \Pr[{\cal O}_G(Q_1)=0 \mbox{\ and\ } {\cal O}_G(Q_2)=0]\\
&=&  N_G(p_1)\cdot N_G(p_2).
\end{eqnarray*}
It is shown in \cite{AC08} that for any $G$ the function $N_G(x)$ is continuous monotonic decreasing function.
Therefore
\begin{lem}\label{PN} For any $0\le q_1< q_2\le 1$ we have $N_G(q_1)> N_G(q_2)$ and for any $0\le p_1+p_2\le 1$ we have
$$N_G(p_1+p_2)< N_G(p_1+p_2-p_1p_2)\le  N_G(p_1)\cdot N_G(p_2).$$
In particular, for any integer $k\ge 2$
$$N_G(kp)< N_G(p)^k.$$
\end{lem}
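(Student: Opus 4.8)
The plan is to assemble all three assertions from two ingredients that are essentially already on the table: the two-step bound $N_G(p_1+p_2)\le N_G(p_1+p_2-p_1p_2)\le N_G(p_1)N_G(p_2)$, whose second inequality is exactly the computation displayed immediately before the lemma (coming from the event inclusion $\{Q_1\cup Q_2\text{ independent}\}\subseteq\{Q_1\text{ independent}\}$) and whose first inequality is the weak monotonicity of $N_G$ established in \cite{AC08}; and the two-sided estimate $(1-q)^n\le N_G(q)\le 1-q^2$, valid whenever $E\neq\emptyset$ (the upper bound is the right side of (\ref{in01}), the lower bound is just $\Pr[Q=\emptyset]$). Given these, the only genuinely new content is upgrading the weak monotonicity to a strict one.

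First I would prove strict monotonicity. Given $0\le q_1<q_2\le 1$, note $q_1<1$ and set $s:=(q_2-q_1)/(1-q_1)\in(0,1]$. If $Q_1$ is a $q_1$-random query and $Q_2$ an independent $s$-random query, then $Q_1\cup Q_2$ is a $q_2$-random query, so the displayed chain (with $p_1=q_1$, $p_2=s$) gives $N_G(q_2)\le N_G(q_1)N_G(s)$. Since the target has at least one edge, $N_G(s)\le 1-s^2<1$ while $N_G(q_1)\ge(1-q_1)^n>0$, and multiplying yields $N_G(q_1)>N_G(q_2)$.

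With strict monotonicity in hand the middle display of the lemma is immediate: for positive $p_1,p_2$ we have $p_1+p_2>p_1+p_2-p_1p_2$, hence $N_G(p_1+p_2)<N_G(p_1+p_2-p_1p_2)$, and $N_G(p_1+p_2-p_1p_2)\le N_G(p_1)N_G(p_2)$ is the pre-lemma computation. The $k$-fold bound then follows by induction on $k\ge 2$ via $kp=(k-1)p+p$: for $k=2$ take $p_1=p_2=p$ in the middle display, and for the step combine $N_G(kp)<N_G((k-1)p)\cdot N_G(p)$ (again the middle display) with the induction hypothesis and $N_G(p)>0$ to get $N_G(kp)<N_G(p)^{k-1}\cdot N_G(p)=N_G(p)^k$; throughout $kp\le 1$ keeps every quantity a genuine probability.

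There is no real obstacle here — the submultiplicativity of $N_G$ under union and its continuity/monotonicity from \cite{AC08} already do the heavy lifting. The only thing to watch is the boundary behaviour ($q_1=0$, $s=1$, or an endpoint probability $0$), and each such case is dispatched by the inequalities $(1-q)^n\le N_G(q)\le 1-q^2$; for instance when $s=1$ one has $q_2=1$ and $N_G(1)=0<N_G(q_1)$ directly.
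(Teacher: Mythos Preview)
Your argument is correct and follows the same line as the paper: the submultiplicativity $N_G(p_1+p_2-p_1p_2)\le N_G(p_1)N_G(p_2)$ is exactly the pre-lemma computation, and the strict first inequality comes from (strict) monotonicity of $N_G$, which the paper simply quotes from \cite{AC08} while you derive it self-containedly from the same submultiplicativity together with $(1-q)^n\le N_G(q)\le 1-q^2$. The induction for the $k$-fold bound is the obvious unfolding the paper leaves implicit.
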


Let $p_*$ be such that $N_G(p_*)=1/2$.  By (\ref{in01}) we have
\begin{eqnarray}\label{in02}
\frac{1}{\sqrt{2m}}\le p_*\le \frac{1}{\sqrt{2}}.
\end{eqnarray}
Our first goal is to estimate $p_*$. Consider the following procedure

  \begin{center}
   \fbox{\fbox{\begin{minipage}{28em}
  \begin{tabbing}
  xxxx\=xxxx\=xxxx\=xxxx\=xxxx\= \kill
  \> \underline{{\bf Estimate}(${\cal O}_G, M$)}\\ 

  1. \>For each $p_i=1/2^i$, $i=0,1,2,\cdots$, such that $2^i\le 2^{2.5}\sqrt{M}$\\
  2. \>\> For $t=\Theta(\log n)$ independent $p_i$-queries $Q_{i,1},\ldots,Q_{i,t}$ do:\\
  3. \>\>\> $q_i=({\cal O}_G(Q_{i,1})+\cdots+{\cal O}_G(Q_{i,t}))/t $.\\
  4. \>Choose the first $p':=p_{i_0}/2$ such that $1-q_{i_0}>1/2$.\\
  5. \>If no such $i_0$ exists then output(``$m>M$'').\\
  6. \>Otherwise output($p'$)  $\backslash\star\  p_*\ge p'\ge p_*/8\  \star\backslash$.
  \end{tabbing}\end{minipage}}}
  \end{center}

We now show
\begin{lem}\label{lmln} Let $M\in [n]$ be any integer. The procedure {\bf Estimate}(${\cal O}_G, M$), w.h.p,
asks $\Theta((\log M)(\log n))$ queries
and either outputs $p'$ such that
$p_*\ge p'\ge p_*/8$ or proclaims that $m>M$.
\end{lem}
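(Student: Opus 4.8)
\textbf{Proof proposal.} The query count is the easy half: the outer loop of \textbf{Estimate} runs over the $O(\log M)$ indices $i$ with $2^i\le 2^{2.5}\sqrt M$, and for each it makes $t=\Theta(\log n)$ queries, giving a deterministic total of $\Theta((\log M)(\log n))$. The substance is correctness. Since ${\cal O}_G(Q)\in\{0,1\}$, the value $q_i$ is exactly the empirical mean of $t$ independent Bernoulli trials with success probability $1-N_G(p_i)$. The plan is: (1) use the Chernoff bound to get, simultaneously for all $i$ in range, $|q_i-(1-N_G(p_i))|\le 1/8$ w.h.p.; (2) on that event pin down $p_{i_0}$ within a constant factor of $p_*$ using the multiplicativity of $N_G$ from Lemma~\ref{PN}; (3) show that when $m\le M$ the index $i_0$ must exist, so ``$m>M$'' is proclaimed only when $m>M$.

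For step (1), fix $i$ and set $\mu_i=1-N_G(p_i)$. If $\mu_i\le 1/8$ the deviation $q_i<\mu_i-1/8$ is impossible and $q_i>\mu_i+1/8$ is controlled by the multiplicative Chernoff bound with $\delta\mu_i=1/8$; if $\mu_i\in[1/8,7/8]$ the first Chernoff bound with $\delta=(1/8)/\mu_i\le 1$ applies; if $\mu_i\ge 7/8$ we argue symmetrically about $1-q_i$ estimating $N_G(p_i)$. In every case the failure probability is $2e^{-\Omega(t)}$, so since there are $O(\log M)=O(\log n)$ indices (as $M\le n$), a union bound with $t=\Theta(\log n)$, large constant, gives $|q_i-\mu_i|\le 1/8$ for all $i$ in range with probability $1-1/\poly(n)$. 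Assume this event below.

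For steps (2) and (3): first $i_0\ge 1$, since $m\ge 1$ and (\ref{in01}) give $N_G(1)\le 0$, hence $1-q_0\le 1/8<1/2$. Detection of $i_0$ gives $N_G(p_{i_0})\ge 1-q_{i_0}-1/8>3/8$, and minimality gives $N_G(p_{i_0-1})\le 1-q_{i_0-1}+1/8\le 5/8$. If $p_{i_0}\ge 2p_*$ then by monotonicity and Lemma~\ref{PN}, $N_G(p_{i_0})\le N_G(2p_*)<N_G(p_*)^2=1/4<3/8$, a contradiction; so $p_{i_0}<2p_*$. If $p_{i_0}\le p_*/4$ then $p_{i_0-1}=2p_{i_0}\le p_*/2$, and writing $N_G(p_*)=N_G(2\cdot(p_*/2))<N_G(p_*/2)^2$ gives $N_G(p_{i_0-1})\ge N_G(p_*/2)>(1/2)^{1/2}>5/8$, again a contradiction; so $p_{i_0}>p_*/4$. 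Hence $p_*/4<p_{i_0}<2p_*$, and whenever a value is returned it is $p'=p_{i_0}/2\in(p_*/8,p_*)$ — note this used nothing about $m\le M$. Finally, assume $m\le M$ and let $p_i=2^{-i}$ be the largest power of two with $p_i\le p_*/2$; then $p_i>p_*/4\ge \tfrac1{4\sqrt{2m}}\ge\tfrac1{4\sqrt{2M}}=\tfrac1{2^{2.5}\sqrt M}$ by (\ref{in02}), so $i$ is in range, and $N_G(p_i)\ge N_G(p_*/2)>(1/2)^{1/2}>5/8$, whence $1-q_i\ge N_G(p_i)-1/8>1/2$: the line-4 condition fires, so $i_0$ exists and the procedure returns $p'$. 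Contrapositively, if it proclaims $m>M$ then indeed $m>M$.

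The main obstacle is the uniform concentration of step (1): $\mu_i=1-N_G(p_i)$ runs from a constant down to roughly $p_i^2\approx 1/M$ over the loop, so a single additive Chernoff estimate does not suffice and one must split on the magnitude of $\mu_i$ and invoke the multiplicative tail when $\mu_i$ is small; everything else is a careful but routine constant chase through Lemma~\ref{PN} and (\ref{in01})--(\ref{in02}). (The case $m=0$, where $p_*$ is undefined, is vacuous and may be excluded.)
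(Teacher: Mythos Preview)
Your argument is correct and uses the same two ingredients as the paper's proof: Chernoff concentration of the empirical averages $q_i$ and the sub-multiplicativity $N_G(kp)<N_G(p)^k$ from Lemma~\ref{PN}. The only difference is organizational: the paper fixes the reference index $k$ with $p_*/2<p_k\le p_*$ and applies Chernoff just at the indices $i\le k-2$ (where $N_G(p_i)\le 1/4$) and at $i=k+1$ (where $N_G(p_{k+1})>0.7$) to conclude $i_0\in\{k-1,k,k+1\}$ w.h.p., whereas you first secure uniform additive $1/8$-concentration at every index and then reason deterministically about $p_{i_0}$. Your route costs the small case split on the size of $\mu_i$ in the Chernoff step but is otherwise equivalent; the resulting interval $p'\in(p_*/8,p_*)$ and the ``$m>M$'' conclusion match the paper exactly.
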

\begin{proof}
Let $k$ be such that $p_*/2<p_k\le p_*$. Then $p_{k-2}>2p_*$ and therefore, by Lemma~\ref{PN}, for all $i\le k-2$, $N_G(p_{i})\le N_G(p_{k-2})<N_G(2p_*)<N_G(p_*)^2\le 1/4$. Therefore, by Chernoff bound ($X_j=1-{\cal O}_G(Q_{i,j})$, $t=\Theta(\log n)$, $\mu=N_G(p_i)\le 1/4$ and $\delta\mu=1/4$)
$$\Pr[i_0\le k-2]\le \Pr[(\exists i\le k-2)\ 1-q_{i}>1/2]\le 1/\poly(n).$$
Therefore, w.h.p $i_0\ge k-1$.
Also $p_{k+1}=p_k/2\le p_*/2$ and therefore, by Lemma~\ref{PN}, $N_G(p_{k+1})>N_G(p_*/2)>N_G(p_*)^{1/2}>0.7$. Therefore, by Chernoff bound, ($X_j=1-{\cal O}_G(Q_{k+1,j})$, $t=\Theta(\log n)$, $\mu=N_G(p_{k+1})\ge 0.7$ and $\delta\mu=0.2$)
$$\Pr[i_0\ge k+2]\le \Pr[1-q_{k+1}<1/2]\le 1/\poly(n).$$
Therefore, w.h.p $k-1\le i_0\le k+1$ and then
$2p_*\ge p_{k-1}\ge p_{i_0}\ge p_{k+1}\ge p_*/4$. Therefore w.h.p
$$p_*\ge p'=p_{i_0}/2\ge p_*/8.$$

Let $i'$ be such that $2^{i'}\le 2^{2.5}\sqrt{M}<2^{i'+1}$.
If $i_0$ does not exist then w.h.p $i'\le k$ and therefore by (\ref{in02}),
$$2^{2.5}\sqrt{M}<2^{i'+1}\le 2^{k+1}=\frac{1}{p_{k+1}}\le \frac{4}{p_*}=4\sqrt{2m}$$
and then $m>M$.
\end{proof}

The following procedure estimates $p_*$ in $k$ rounds
  \begin{center}
   \fbox{\fbox{\begin{minipage}{28em}
  \begin{tabbing}
  xxxx\=xxxx\=xxxx\=xxxx\=xxxx\= \kill
  \> \underline{{\bf $k$-Estimate}(${\cal O}_G$)}\\ 

  1. \>For $j=k-1$ downto $0$\\
  2. \>\> $M_j\gets (\log^{[j]}n)^2$.\\
  3. \>\>{\bf Estimate}(${\cal O}_G,M_j$)\\
  4. \>\> If the output is $p'$ then Goto 6.\\
  5. \> EndFor.\\
  6. \>Output($p'$)  $\backslash\star\  p_*\ge p'\ge p_*/8\  \star\backslash$.\\
  \end{tabbing}\end{minipage}}}
  \end{center}
We now show
\begin{lem} The procedure {\bf $k$-Estimate}(${\cal O}_G$) runs in $k$ rounds and w.h.p asks $$O((\log^{[k]}n)(\log n)+m\log n)$$ queries
and outputs $p'$ such that
$p_*\ge p'\ge p_*/8$.
\end{lem}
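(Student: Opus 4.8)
The plan is to reason about the sequence of calls \textbf{Estimate}$({\cal O}_G,M_j)$ performed by \textbf{$k$-Estimate} for $j=k-1,k-2,\ldots$, applying Lemma~\ref{lmln} to each one and combining the resulting guarantees by a union bound; this is harmless because $k$ is a constant, so only $O(1)$ events are involved.

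First I would handle the number of rounds. Inside a single call \textbf{Estimate}$({\cal O}_G,M)$, all of the $p_i$-random queries (over every scale $p_i$ and every one of the $t=\Theta(\log n)$ repetitions) are drawn independently of the oracle's answers, so the call costs just one round; since \textbf{$k$-Estimate} makes at most $k$ such calls before jumping to step $6$, it runs in at most $k$ rounds.

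Next I would treat correctness and the query count together. By Lemma~\ref{lmln} and a union bound over the (at most $k$) calls, with high probability every call \textbf{Estimate}$({\cal O}_G,M_j)$ either outputs a value $p'$ with $p_*\ge p'\ge p_*/8$ or correctly reports $m>M_j$; condition on this event. Because $M_0=(\log^{[0]}n)^2=n^2>\binom{n}{2}\ge m$, the call for $j=0$ (if it is reached) cannot report $m>M_0$, so the loop does terminate and outputs some $p'$, produced by the call \textbf{Estimate}$({\cal O}_G,M_{j^*})$ at the largest index $j^*$ for which a value is returned; this $p'$ satisfies $p_*\ge p'\ge p_*/8$, which is the claimed accuracy. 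For the queries, note that \textbf{Estimate}$({\cal O}_G,M_j)$ always asks $\Theta((\log M_j)(\log n))=\Theta((\log^{[j+1]}n)(\log n))$ queries, since $\log M_j=2\log^{[j+1]}n$; hence the total number of queries is $\sum_{j=j^*}^{k-1}\Theta((\log^{[j+1]}n)(\log n))$.

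The one nonroutine point is bounding this sum, and it is where a small idea is needed. If $j^*=k-1$ the sum is the single term $\Theta((\log^{[k]}n)(\log n))$, which already lies inside the claimed bound. Otherwise $j^*<k-1$, and the key observation is that every call with index $j>j^*$ reported $m>M_j$, hence on our event $m>M_j=(\log^{[j]}n)^2$; taking $j=j^*+1$ gives $\log^{[j^*+1]}n<\sqrt{m}$. Since $\log^{[j+1]}n$ is nonincreasing in $j$, every term of the sum is $O((\log^{[j^*+1]}n)(\log n))=O(\sqrt{m}\log n)$, and there are at most $k=O(1)$ of them, so the whole sum is $O(\sqrt{m}\log n)=O(m\log n)$. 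Combining the two cases, \textbf{$k$-Estimate} asks $O((\log^{[k]}n)(\log n)+m\log n)$ queries with high probability, which completes the argument. I expect the main obstacle to be exactly this last step: recognizing that a \emph{failed} coarse estimate at level $j$ is not wasted work but a certificate that $m>(\log^{[j]}n)^2$, which is precisely what is needed to charge that comparatively expensive call to the $m\log n$ budget rather than the $(\log^{[k]}n)\log n$ one.
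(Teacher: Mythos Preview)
Your proof is correct and follows essentially the same route as the paper's: apply Lemma~\ref{lmln} to each call, note termination since $M_0=n^2>m$, and use the key observation that a failed call at level $j$ certifies $m>M_j$, so the expensive calls can be charged against $m\log n$.

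The one difference worth flagging is how you bound $\sum_{j\ge j^*}\Theta((\log M_j)\log n)$. You use that there are at most $k=O(1)$ terms, each bounded by the largest; the paper instead notes that the terms $\log M_j=2\log^{[j+1]}n$ shrink fast enough (each is at most the logarithm of the previous) that the whole sum is at most twice its leading term, $2(\log M_{j^*})\log n=4\sqrt{M_{j^*+1}}\log n\le 4m\log n$, with constants independent of $k$. Your version is simpler but buys less: the paper immediately reuses this lemma with $k=\log^*n$ to derive the subsequent corollary, and there your ``at most $k$ terms'' bound would introduce an extra $\log^*n$ factor. So the argument is right for the lemma as you read it (constant $k$), but to match the paper's downstream use you would want the geometric-type bound on the sum.
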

\begin{proof} For $j=0$ we have $M_0=(\log^{[0]}n)^2=n^2$ and since $m<M_0$ the algorithm must stops and output such $p'$. It remains to prove the query complexity.

If {\bf Estimate}(${\cal O}_G,M_{k-1})$ returns $p'$ then, by Lemma~\ref{lmln}, the algorithm w.h.p asks $$\Theta((\log M_{k-1})(\log n))=\Theta((\log^{[k]}n)(\log n))$$ queries. Otherwise, $m>M_{k-1}$.
Let $j$ be such that $M_{j+1}< m \le M_j$. Then $p'$ is returned by some {\bf Estimate}(${\cal O}_G,M_{j'})$ where $j'\ge j$. Therefore, by Lemma~\ref{lmln}, w.h.p the number of queries is $O$ of
$$\sum_{i=j}^k (\log M_i)(\log n)\le 2 (\log M_j)(\log n)=4\sqrt{M_{j+1}}\log n\le 4m\log n.$$
\end{proof}
In particular,
\begin{corollary} The procedure {\bf $\log^*n$-Estimate}(${\cal O}_G$) runs in $\log^*n$ rounds, w.h.p asks $O(m\log n)$ queries
and outputs $p'$ such that
$p_*\ge p'\ge p_*/8$.
\end{corollary}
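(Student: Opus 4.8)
The plan is to specialize the lemma immediately preceding this corollary to $k=\log^{*}n$, so that essentially no new argument is needed. First I would recall that, by definition, $\log^{*}n$ is the least integer $k$ for which $\log^{[k]}n\le 1$; in particular $\log^{[\log^{*}n]}n=O(1)$. Substituting $k=\log^{*}n$ into the query bound $O((\log^{[k]}n)(\log n)+m\log n)$ supplied by that lemma, the first summand degenerates to $(\log^{[k]}n)(\log n)=O(\log n)$. Since the target graph has $m\ge 1$ edges (the case $m=0$ is vacuous: then $N_G\equiv 1$, so $p_{*}$ is not even defined, and the empty graph is identified by the single query $Q=V$), we have $O(\log n)=O(m\log n)$, and hence the whole bound collapses to $O(m\log n)$, as claimed.

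For the number of rounds, the lemma guarantees that {\bf $k$-Estimate} runs in $k$ rounds, so with $k=\log^{*}n$ the procedure {\bf $\log^{*}n$-Estimate} runs in exactly $\log^{*}n$ rounds. The output guarantee $p_{*}\ge p'\ge p_{*}/8$ is stated verbatim in the lemma (and ultimately comes from Lemma~\ref{lmln}), and it carries over unchanged. The only point I would pause on is that $\log^{*}n$ is not a fixed constant but grows (extraordinarily slowly) with $n$; this is harmless, because the proof of the lemma only invokes {\bf Estimate} at most $k=\log^{*}n\le\poly(n)$ times, so the union bound behind each ``w.h.p.'' guarantee still costs only a $1/\poly(n)$ factor in the failure probability.

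I do not anticipate any genuine obstacle here: the statement is a direct corollary, and all of the substantive work — the doubling estimation of $p_{*}$, the Chernoff-bound analysis in Lemma~\ref{lmln}, and the telescoping estimate $\sum_{i\ge j}(\log M_i)(\log n)=O(m\log n)$ — has already been carried out in Lemma~\ref{lmln} and in the lemma immediately preceding this corollary. Thus the proof amounts to invoking that lemma with $k=\log^{*}n$ and simplifying $(\log^{[\log^{*}n]}n)(\log n)=O(m\log n)$.
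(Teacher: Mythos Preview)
Your proposal is correct and matches the paper's approach: the paper also treats this corollary as an immediate instantiation of the preceding lemma with $k=\log^{*}n$, offering no separate argument beyond the words ``In particular.'' Your added remarks about $\log^{[\log^{*}n]}n=O(1)$, the $m\ge 1$ case, and the union bound over $\log^{*}n$ calls are all sound elaborations of what the paper leaves implicit.
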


The following is Lemma~{4.1} in~\cite{AC08}
\begin{lem} \label{4.1} Suppose $I$ is an independent set in $G$ and let $\Gamma(I)$ be the set of neighbors of the vertices in $I$. For a $p$-random query $Q$ we have
$$\Pr[{\cal O}_G(Q)=0\ |\ I\subseteq Q]\ge (1-p)^{|\Gamma(I)|}\cdot N_G(p)\ge (1-p)^{|\Gamma(I)|}\cdot (1-mp^2).$$
\end{lem}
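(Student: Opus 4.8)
The plan is to condition on $I\subseteq Q$, write $Q=I\cup R$ with $R=Q\backslash I$ a $p$-random subset of $V\backslash I$, and then split the event ${\cal O}_G(Q)=0$ into the part caused by edges meeting $I$ and the part caused by edges disjoint from $I$.

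First I would use that $I$ is independent: the only edges of $G$ with an endpoint in $I$ are edges $\{u,v\}$ with $u\in I$ and $v\in\Gamma(I)$, and such an edge lies inside $Q$ exactly when $v\in R$. Since $\Gamma(I)$ is precisely the set of all such $v$, the event ${\cal O}_G(Q)=0$ is equivalent to the conjunction of (a) $R\cap\Gamma(I)=\emptyset$ and (b) $R$ contains no edge of $G$ having both endpoints in $V\backslash I$. Writing $\Pr[{\cal O}_G(Q)=0\mid I\subseteq Q]=\Pr[(a)]\cdot\Pr[(b)\mid(a)]$, the first factor is immediate: the $|\Gamma(I)|$ vertices of $\Gamma(I)$ lie in $R$ independently with probability $p$, so $\Pr[(a)]=(1-p)^{|\Gamma(I)|}$.

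The key step is the bound $\Pr[(b)\mid(a)]\ge N_G(p)$. Conditioned on (a), any edge with an endpoint in $\Gamma(I)$ is automatically not contained in $Q$, so (b) becomes the event that $R\backslash\Gamma(I)$ contains no edge of the sub-collection $F'\subseteq E$ consisting of the edges with both endpoints in $V\backslash(I\cup\Gamma(I))$. Because the inclusion of vertices outside $I\cup\Gamma(I)$ is independent of the inclusion of vertices of $\Gamma(I)$, the set $R\backslash\Gamma(I)$ is still a $p$-random subset of $V\backslash(I\cup\Gamma(I))$ after conditioning on (a); hence $\Pr[(b)\mid(a)]$ equals the unconditional probability that a $p$-random subset of $V\backslash(I\cup\Gamma(I))$ contains no edge of $F'$. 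Since the edges of $F'$ involve only vertices outside $I\cup\Gamma(I)$, this probability is unchanged if instead we draw a $p$-random query $Q'$ over all of $V$ and require it to contain no edge of $F'$; and since $F'\subseteq E$, the probability that $Q'$ contains no edge of $F'$ is at least the probability that $Q'$ contains no edge of $E$, namely $N_G(p)$. Combining, $\Pr[{\cal O}_G(Q)=0\mid I\subseteq Q]\ge(1-p)^{|\Gamma(I)|}N_G(p)$, and the last inequality of the lemma follows from $N_G(p)\ge 1-mp^2$, the left-hand bound of (\ref{in01}).

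I expect the only delicate point to be the conditioning in the third step: one must check that conditioning on (a) cannot hurt event (b) — intuitively it merely deletes vertices from the sample — so that no independence is lost and there is no need to invoke a correlation inequality such as FKG, and one must check that enlarging the ground set from $V\backslash(I\cup\Gamma(I))$ back to $V$ leaves the probability of containing no edge of $F'$ untouched. Everything else is a routine computation with independent vertex-inclusion events.
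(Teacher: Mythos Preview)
Your proof is correct. The paper itself does not give a proof of this lemma; it simply cites it as Lemma~4.1 of~\cite{AC08}. Your argument---decompose $Q=I\cup R$, split the event into ``no vertex of $\Gamma(I)$ falls in $R$'' and ``no edge among the remaining vertices falls in $R$'', and observe that the second conditional probability is at least $N_G(p)$ because $F'\subseteq E$---is exactly the natural one, and the independence/ground-set observations you flag as ``delicate'' are handled correctly.
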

The following is Lemma~{5.2} in~\cite{AC08}. We give the proof for completeness
\begin{lem}~\label{t830} Let $p_*\ge p'\ge p_*/8$. For any constant $c'$ there is a constant $c$ such that:
If $\{u,v\}\not\in E(G)$ and $\deg_G(u)+\deg_G(v)\le c'/p'$ then for $p'$-random query~$Q$
$$\Pr[{\cal O}_G(Q)=0 \ \mbox{and\ } \{u,v\}\subseteq Q]\ge {c}{p'^2}.$$
\end{lem}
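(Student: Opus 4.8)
The plan is to reduce the statement directly to Lemma~\ref{4.1}, applied to the two-element set $I=\{u,v\}$. Since $\{u,v\}\notin E(G)$, the set $I=\{u,v\}$ is independent, so Lemma~\ref{4.1} (with $p=p'$) gives
$$\Pr[{\cal O}_G(Q)=0\ |\ \{u,v\}\subseteq Q]\ \ge\ (1-p')^{|\Gamma(\{u,v\})|}\cdot N_G(p').$$
Since $\Pr[\{u,v\}\subseteq Q]=p'^2$ and the target event is the conjunction of $\{u,v\}\subseteq Q$ with this conditional event, I get
$$\Pr[{\cal O}_G(Q)=0\ \text{and}\ \{u,v\}\subseteq Q]\ \ge\ p'^2\,(1-p')^{|\Gamma(\{u,v\})|}\,N_G(p'),$$
so it suffices to bound $(1-p')^{|\Gamma(\{u,v\})|}\,N_G(p')$ from below by a positive constant depending only on $c'$.

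For the factor $N_G(p')$: by hypothesis $p'\le p_*$, and since $N_G$ is monotone decreasing (Lemma~\ref{PN}), $N_G(p')\ge N_G(p_*)=1/2$. For the factor $(1-p')^{|\Gamma(\{u,v\})|}$: because $\Gamma(\{u,v\})$ is contained in the union of the neighbourhoods of $u$ and of $v$, we have $|\Gamma(\{u,v\})|\le \deg_G(u)+\deg_G(v)\le c'/p'$; and since $0\le 1-p'\le 1$, raising to a smaller exponent only increases the value, so $(1-p')^{|\Gamma(\{u,v\})|}\ge (1-p')^{c'/p'}$. Using $\ln(1-x)\ge -x/(1-x)$ for $0\le x<1$ together with the bound $p_*\le 1/\sqrt2$ from (\ref{in02}), hence $1-p'\ge 1-1/\sqrt 2$, we get $(1-p')^{c'/p'}\ge \exp(-c'/(1-p'))\ge \exp(-c'/(1-1/\sqrt2))=:c_0>0$, a constant depending only on $c'$. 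Combining the two bounds, the probability is at least $p'^2\cdot c_0\cdot (1/2)$, and the lemma follows with $c=c_0/2$.

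The only delicate point is ensuring that $(1-p')^{c'/p'}$ does not degenerate: if $p'$ were allowed to approach $1$ this factor could be made arbitrarily small, so the argument crucially uses the upper estimate $p'\le p_*\le 1/\sqrt2$ (the companion lower estimate $p'\ge p_*/8$ is not needed for this lemma). Everything else is a direct application of the cited Angluin--Chen lemma (Lemma~\ref{4.1}) and the monotonicity of $N_G$; no computation beyond the elementary logarithm inequality is required.
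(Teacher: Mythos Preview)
Your proof is correct and follows essentially the same route as the paper's own proof: both factor the target probability as $p'^2\cdot\Pr[{\cal O}_G(Q)=0\mid\{u,v\}\subseteq Q]$, apply Lemma~\ref{4.1} to the independent set $\{u,v\}$, use $|\Gamma(\{u,v\})|\le c'/p'$, bound $N_G(p')\ge N_G(p_*)=1/2$ by monotonicity, and finally bound $(1-p')^{c'/p'}$ from below via $p'\le p_*\le 1/\sqrt{2}$. The only cosmetic difference is in this last step---the paper observes directly that $(1-p')^{1/p'}\ge (1-1/\sqrt2)^{\sqrt2}\ge 0.17$, whereas you use the logarithm inequality $\ln(1-x)\ge -x/(1-x)$; both yield a positive constant depending only on~$c'$.
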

\begin{proof} First notice that since $p'\le p_*\le 1/\sqrt{2}$ we have $(1-p')^{1/p'}\ge (1-p_*)^{1/p_*}\ge 0.17$.
By Lemma~\ref{4.1}, the  probability that ${\cal O}_G(Q)=0 \ \mbox{and\ } \{u,v\}\subseteq Q$ is equal to
\begin{eqnarray*}
\Pr[\{u,v\}\subseteq Q]\cdot \Pr[{\cal O}_G(Q)=0 \ |\ \{u,v\}\subseteq Q]&\ge& p'^2 (1-p')^{c'/p'} N_G(p')\\
&\ge& p'^2(0.17)^{c'} N_G(p_*)={c}p'^2.
\end{eqnarray*}
\end{proof}
We use Lemma~\ref{t830} for the following

  \begin{center}
   \fbox{\fbox{\begin{minipage}{28em}
  \begin{tabbing}
  xxxx\=xxxx\=xxxx\=xxxx\=xxxx\= \kill
  \> \underline{{\bf Split}(${\cal O}_G$)}\\ 

  1. \>$E(H)\gets \{\{u,v\}\ |\ u,v\in V, u\not= v\}$.\\
  2. \>Choose $t=\Theta((1/p')^2\log n)$ $p'$-random queries $Q_1,\ldots,Q_t$\\
  3. \>For each query $Q_i$.\\
  4. \>\>If ${\cal O}_G(Q_i)=0$\ then\\
  5. \>\> \>For every $u,v\in Q_i$ do $E(H)\gets E(H)\backslash \{\{u,v\}\}$\\
  6. \>EndFor.\\
  7. \>$V_1(H)\gets \{u\in V\ |\ \deg_H(u)\ge 3/p'\}$, $V_2(H)=V\backslash V_1$
  \end{tabbing}\end{minipage}}}
  \end{center}
\begin{lem}\label{split} Let $p_*\ge p'\ge p_*/8$.
The procedure {\bf Split}(${\cal O}_G$) asks at most $O(m\log n)$ queries and w.h.p the following hold:
\begin{enumerate}
\item For all $u,v\in V_2(H)$, $\{u,v\}\in E(H)$ if and only if $\{u,v\}\in E(G)$.

In particular,

\item If $\deg_G(u)\le 1/p'$ and $\{u,v\}\in E(H)\backslash E(G)$ then $\deg_G(v)>1/p'$.
\end{enumerate}
\end{lem}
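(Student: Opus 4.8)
The plan is to decouple the two assertions into one deterministic fact and one high-probability event, confining all the randomness to a single clean union bound, exactly as in the analysis of the first round (Lemma~\ref{main}).

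First I would record the deterministic observations. The query count is immediate: $t=\Theta((1/p')^2\log n)$ and, by (\ref{in02}), $p'\ge p_*/8\ge 1/(8\sqrt{2m})$, so $(1/p')^2\le 128m$ and $t=O(m\log n)$. Next, $E(G)\subseteq E(H)$ holds with probability $1$: a pair $\{u,v\}\in E(G)$ is removed in Step~5 only if some $Q_i$ with $u,v\in Q_i$ has ${\cal O}_G(Q_i)=0$, which is impossible. Consequently $\deg_G(w)\le\deg_H(w)$ for every vertex $w$, and in particular every vertex of $V_2(H)$ has $\deg_G<3/p'$.

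The core of the argument is the event $\mathcal{E}$ that \emph{every} pair $\{u,v\}\notin E(G)$ with $\deg_G(u)+\deg_G(v)\le 6/p'$ is removed from $E(H)$ by some query (i.e.\ there is $i$ with ${\cal O}_G(Q_i)=0$ and $\{u,v\}\subseteq Q_i$). I would bound $\Pr[\overline{\mathcal{E}}]$ via Lemma~\ref{t830} with $c'=6$: for a single $p'$-random query the probability of removing a fixed such pair is at least $cp'^2$ for the corresponding constant $c$, so the probability that none of the $t$ independent queries does is at most $(1-cp'^2)^t\le e^{-cp'^2t}$, which is $1/\poly(n)$ once the hidden constant in $t=\Theta((1/p')^2\log n)$ is taken large enough in terms of $c$. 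A union bound over the fewer than $n^2$ pairs gives $\Pr[\mathcal{E}]\ge 1-1/\poly(n)$.

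Finally I would deduce both items on $\mathcal{E}$. For item~1, if $u,v\in V_2(H)$ then $\deg_G(u)+\deg_G(v)\le\deg_H(u)+\deg_H(v)<6/p'$, so if $\{u,v\}\notin E(G)$ then $\mathcal{E}$ forces $\{u,v\}\notin E(H)$, while the converse is just $E(G)\subseteq E(H)$. For item~2, if $\deg_G(u)\le 1/p'$ and also $\deg_G(v)\le 1/p'$ then $\deg_G(u)+\deg_G(v)\le 2/p'<6/p'$, so any such pair outside $E(G)$ is removed; hence $\{u,v\}\in E(H)\setminus E(G)$ forces $\deg_G(v)>1/p'$. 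The only subtlety worth care is that $V_2(H)$ is itself random: there is no circularity because $\mathcal{E}$ is stated for all pairs with bounded $\deg_G$-sum independently of the outcome, and the step ``$\deg_H$ small $\Rightarrow\deg_G$ small'' rests solely on the deterministic inclusion $E(G)\subseteq E(H)$. The main (modest) obstacle is just to fix the constant $c'=6$ once and for all so that both the bound $\deg_G<3/p'$ inherited from membership in $V_2(H)$ and the bound $\deg_G\le 1/p'$ used in item~2 are covered by the same instance of Lemma~\ref{t830}.
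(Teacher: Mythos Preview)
Your proposal is correct and follows essentially the same route as the paper: bound the query count via $p'\ge p_*/8$ and (\ref{in02}), observe the deterministic inclusion $E(G)\subseteq E(H)$, then invoke Lemma~\ref{t830} with $c'=6$ together with a union bound over all non-edges of bounded $\deg_G$-sum. Your presentation is in fact a bit cleaner than the paper's, which conditions directly on the random event $u,v\in V_2(H)$ and does not spell out item~2; your device of isolating the deterministic event $\mathcal{E}$ (stated purely in terms of $\deg_G$) and then reading off both items avoids the apparent circularity you correctly flag.
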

\begin{proof} By (\ref{in02}), the number of queries is $O((1/p'^2)\log n)=O((1/p_*^2)\log n)=O(m\log n)$.

If ${\cal O}_G(Q_i)=0$ and $u,v\in Q_i$ then $\{u,v\}$ is not an edge in $G$.
Therefore, procedure {\bf Split}(${\cal O}_G$) only removes edges in $E(H)$ that are not in $E(G)$. Therefore, if $\{u,v\}\in E(G)$ then $\{u,v\}\in E(H)$.
Suppose $u,v\in V_2(H)$ and $\{u,v\}\not\in E(G)$. Since $\deg_H(u),\deg_H(v)< 3/p'$ we also have $\deg_G(u),\deg_G(v)< 3/p'$.
Therefore, by Lemma~\ref{t830},
\begin{eqnarray*}
\Pr[\{u,v\}\in E(H)]&=&\Pr[(\forall i)\ {\cal O}_G(Q_i)=1 \ \mbox{or\ } \{u,v\}\not\subseteq Q]\\
&\le& \left(1- cp'^2\right)^t=\frac{1}{\poly(n)}.
\end{eqnarray*}
\end{proof}

The following is Lemma~{5.4} in~\cite{AC08}. We give the proof for completeness
\begin{lem}\label{nlpp} Let $p_*\ge p'\ge p_*/8$. There are at most $2/p'\le 16/p_*\le 16\sqrt{2m}$ vertices in $G$ that have degree more than $1/p'$.
\end{lem}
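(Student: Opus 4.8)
The plan is to argue by contradiction: if $G$ had too many vertices of degree more than $1/p'$, a $p'$-random query would answer ``YES'' with probability strictly more than $1/2$, which is impossible. Indeed, by Lemma~\ref{PN} the function $N_G$ is strictly decreasing, and $p'\le p_*$, so $N_G(p')\ge N_G(p_*)=1/2$; hence it suffices to produce a configuration with $\Pr[{\cal O}_G(Q)\neq\NO]>1/2$ for a $p'$-random $Q$. The remaining chain $2/p'\le 16/p_*\le 16\sqrt{2m}$ is immediate from the hypothesis $p'\ge p_*/8$ together with the lower bound $p_*\ge 1/\sqrt{2m}$ of~(\ref{in02}), so the real content is the first bound.

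Write $S=\{v\in V:\deg_G(v)>1/p'\}$ and $k=|S|$, and suppose $k$ exceeds a large enough absolute multiple of $1/p'$. The underlying mechanism is that $\E[|Q\cap S|]=kp'$ is then a large constant, so with probability close to $1$ the query $Q$ meets $S$, and once a vertex $v\in S$ lies in $Q$, since $\deg_G(v)>1/p'$, with probability at least $1-(1-p')^{1/p'}\ge 1-1/e$ one of its neighbours also lies in $Q$, producing an edge inside $Q$. The point needing care is the correlation between ``$v\in Q$'' and ``a neighbour of $v$ lies in $Q$''; I would remove it by first revealing $Q\cap S$, fixing an ordering of $S$, and letting $v$ be the first element of $Q\cap S$. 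If some neighbour of $v$ is also in $Q\cap S$ we already have an edge; otherwise whether a neighbour of $v$ lies in $Q\setminus S$ depends only on the coordinates of $\Gamma_G(v)\setminus S$, which is disjoint from $S$ and hence independent of the choice of $v$. Summing the resulting geometric series over the possible first elements of $Q\cap S$ gives the desired lower bound on $\Pr[{\cal O}_G(Q)\neq\NO]$, \emph{provided} $v$ has enough neighbours outside $S$.

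The main obstacle is exactly that proviso: a high-degree vertex may have almost all of its neighbours inside $S$, in which case revealing $Q\cap S$ first ``uses up'' those neighbours. To handle it I would split $S$ into the vertices having at least half of their neighbours outside $S$ and those having at least half inside $S$; whichever part has size $\ge k/2$ is treated on its own. For the first part the argument above applies verbatim using only outside neighbours. For the second part $G[S]$ is dense on those vertices, so one argues that $Q$ restricted to them is, with probability more than $1/2$, not an independent set --- using that the expected size of that restriction is a large constant and that a graph with small independence number (which the first argument forces on $G[S]$) has many edges, hence a matching; either way $Q$ contains an edge. Balancing the two regimes, and invoking the sharp estimates $(1-p')^{1/p'}\le e^{-1}$ and $(1-x)^{k}+kx(1-x)^{k-1}<\tfrac12$ for $kx>1.68$, yields the contradiction once $k=\Omega(1/p')$; pushing the implied constant all the way down to $2$ is where these estimates must be used at their tightest.
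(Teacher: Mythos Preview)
Your plan coincides with the paper's: use $N_G(p')\ge N_G(p_*)=\tfrac12$ and argue that $h$ vertices of degree exceeding $1/p'$ would force $N_G(p')<\tfrac12$; the chain $2/p'\le 16/p_*\le 16\sqrt{2m}$ is then immediate from $p'\ge p_*/8$ and~(\ref{in02}). The paper, however, executes this in two lines with no case split: it conditions on whether some $v_i$ lies in $Q$ and writes
\[
\tfrac12\ \le\ N_G(p')\ \le\ (1-p')^{1/p'}\bigl(1-(1-p')^h\bigr)+(1-p')^h\ \le\ e^{-1}+(1-e^{-1})e^{-p'h},
\]
from which $h\le 2/p'$ follows directly. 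Your correlation worry is not idle: the implicit step $\Pr[{\cal O}_G(Q)=0\mid(\exists i)\,v_i\in Q]\le(1-p')^{1/p'}$ is asserted without justification and actually fails for some graphs (e.g.\ for $G=K_h$ with $h\approx 1/p'$ that conditional probability is $\frac{hp'(1-p')^{h-1}}{1-(1-p')^h}\approx \frac{e^{-1}}{1-e^{-1}}>e^{-1}$), so the issue you try to address is genuine.

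That said, your proposed fix has its own gap. The $S_1$/$S_2$ split with the first-vertex-of-$Q\cap S$ device only yields $k=O(1/p')$, not $k\le 2/p'$. In the $S_1$ branch each candidate $v$ is only guaranteed $>1/(2p')$ neighbours outside $S$, so your bound becomes $\Pr[{\cal O}_G(Q)\ne\NO]\ge(1-e^{-1/2})\cdot\Pr[Q\cap S_1\ne\emptyset]$, and since $1-e^{-1/2}\approx 0.39<\tfrac12$ this never exceeds $\tfrac12$ no matter how large $|S_1|$ is; the $S_2$ branch is an unquantified independence-number sketch. Your final sentence---that getting the constant down to $2$ is a matter of using the estimates ``at their tightest''---is precisely where the argument stops being a proof, and that constant is the content of the lemma.
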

\begin{proof} Suppose $h$ vertices, $v_1,\ldots,v_h$, have degree more than $1/p'$.
Let $Q$ be a $p'$-random query. Then
\begin{eqnarray*}
\frac{1}{2}&=&N_G(p_*)\le N_G(p')=\Pr[{\cal O}_G(Q)=0]\\
&=& \Pr[{\cal O}_G(Q)=0\ |\ (\exists i) v_i\in Q]\cdot \Pr[(\exists i) v_i\in Q]\\
&&+ \Pr[{\cal O}_G(Q)=0\ |\ (\forall i) v_i\not\in Q]\cdot \Pr[(\forall i) v_i\not\in Q]\\
&\le& (1-p')^{1/p'} (1-(1-p')^h)+(1-p')^h\\
&\le& e^{-1}+(1-e^{-1})(1-p')^h\le e^{-1}+(1-e^{-1})e^{-p'h}.
\end{eqnarray*}
Therefore $h\le 2/p'$.
\end{proof}

\begin{lem}\label{voneh} Let $V_3=\{v\in G\ |\ \deg_G(v)\ge 1/p'\}$. Then w.h.p $V_1(H)\subseteq V_3$.

In particular, w.h.p $|V_1(H)|\le |V_3|\le 1/p'\le 8\sqrt{2m}.$
\end{lem}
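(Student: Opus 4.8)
The plan is to show that every vertex placed in $V_1(H)$ by procedure \textbf{Split} must genuinely have high degree in $G$, so that $V_1(H) \subseteq V_3$; the bound $|V_3| \le 1/p' \le 8\sqrt{2m}$ then follows immediately from Lemma~\ref{nlpp}, since $p' \ge p_*/8$ gives $2/p' \le 16/p_* \le 16\sqrt{2m}$, and more carefully any vertex of degree at least $1/p'$ lies in the set of size at most $2/p'$ — actually I would restate the counting so that ``degree more than $1/p'$'' vs ``degree at least $1/p'$'' is handled consistently with Lemma~\ref{nlpp} (replacing $1/p'$ by a slightly smaller threshold if needed, which only changes constants).

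First I would recall the definition: $V_1(H) = \{u \in V : \deg_H(u) \ge 3/p'\}$. Take any $u \in V_1(H)$ and suppose toward a contradiction that $\deg_G(u) < 1/p'$. Since $E(G) \subseteq E(H)$ always holds (Split only deletes non-edges), the edges of $H$ at $u$ split into at most $1/p'$ genuine edges of $G$ plus the remaining ones, which are non-edges $\{u,v\} \in E(H) \setminus E(G)$. There are more than $3/p' - 1/p' = 2/p'$ of the latter. By part~2 of Lemma~\ref{split} (applied w.h.p.), each such non-edge neighbour $v$ satisfies $\deg_G(v) > 1/p'$. Hence $u$ has more than $2/p'$ neighbours in $H$ each of which has $G$-degree exceeding $1/p'$; in particular there are more than $2/p'$ distinct vertices of $G$-degree more than $1/p'$. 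But Lemma~\ref{nlpp} says there are at most $2/p'$ such vertices — contradiction. Therefore $\deg_G(u) \ge 1/p'$, i.e. $u \in V_3$.

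This establishes $V_1(H) \subseteq V_3$ w.h.p. The ``in particular'' clause then follows: $|V_1(H)| \le |V_3|$, and $|V_3| \le 2/p'$ by Lemma~\ref{nlpp}, which with $p' \ge p_*/8$ and $p_* \ge 1/\sqrt{2m}$ (equation~(\ref{in02})) gives $|V_3| \le 16/p_* \le 16\sqrt{2m}$; the stated bound $1/p' \le 8\sqrt{2m}$ I would obtain the same way from $p' \ge p_*/8 \ge 1/(8\sqrt{2m})$.

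The main obstacle — really the only subtlety — is that Lemma~\ref{split}(2) and Lemma~\ref{nlpp} each hold only w.h.p. (the former) or are deterministic counting facts about $G$ (the latter), so I need to be careful that the event I condition on has the right probability: the argument above is entirely deterministic \emph{given} that the conclusion of Lemma~\ref{split} holds, which is w.h.p., so a union bound over the (polynomially many, or rather the single) bad event suffices. A secondary annoyance is reconciling the strict/non-strict degree thresholds ($3/p'$, $1/p'$, ``more than'' vs ``at least'') across the three lemmas; I would phrase everything with a little slack, e.g.\ noting $\deg_H(u) \ge 3/p'$ forces more than $2/p'$ non-$G$-edges at $u$ after removing the at most $1/p'$ real ones, which is comfortably enough to contradict the bound of $2/p'$ high-degree vertices, so no constant is tight and the inequalities go through regardless of the exact convention.
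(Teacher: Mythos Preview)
Your proposal is correct and is essentially the paper's own argument, just phrased as a proof by contradiction rather than as the contrapositive: the paper takes $v\notin V_3$, uses Lemma~\ref{split}(2) to say every spurious $H$-neighbour of $v$ has $G$-degree exceeding $1/p'$, and then invokes Lemma~\ref{nlpp} to bound the number of such neighbours by $2/p'$, concluding $\deg_H(v)<\deg_G(v)+2/p'<3/p'$ so $v\notin V_1(H)$. Your observations about the strict/non-strict thresholds and the discrepancy between $1/p'$ and $2/p'$ in the ``in particular'' clause are well taken; the paper is loose on exactly these points.
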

\begin{proof} CHECK AGAIN If $v\not\in V_3$ then $\deg_G(v)<1/p'$. Now if $\{u,v\}$ is an edge in $H$
but not in $G$ then by Lemma~\ref{split}, $\deg(u)\ge 1/p'$. Since by Lemma~\ref{nlpp} the number
of vertices of degree more than $1/p'$ is less than $2/p'$ we have $\deg_H(v)<\deg_G(v)+2/p'\le 3/p'$.
Therefore $v\not\in V_1(H)$.
\end{proof}

Therefore, it remains to learn the edges of the vertices in $V_1(H)$. Since $V_1(H)\subseteq V_3$ we have
that
\begin{lem}
For every $v\in V_1(H)$, $d_v:=\deg_G(v)\ge 1/p'$.
\end{lem}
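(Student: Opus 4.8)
The plan is to read the claim off directly from Lemma~\ref{voneh}. Recall that in the procedure \textbf{Split}$({\cal O}_G)$ the set $V_1(H)$ is defined as $\{u\in V\ :\ \deg_H(u)\ge 3/p'\}$, and that Lemma~\ref{voneh} introduces $V_3=\{v\in G\ :\ \deg_G(v)\ge 1/p'\}$ and asserts that, w.h.p., $V_1(H)\subseteq V_3$. The present lemma is essentially a restatement of this containment in terms of vertex degrees, together with the notational shorthand $d_v:=\deg_G(v)$ that will be used in the rounds that follow.

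Concretely, I would condition on the high-probability event of Lemma~\ref{voneh} on which $V_1(H)\subseteq V_3$. Fix an arbitrary $v\in V_1(H)$. Then $v\in V_3$, so by the definition of $V_3$ we get $\deg_G(v)\ge 1/p'$, i.e. $d_v\ge 1/p'$. As $v$ was arbitrary, every vertex of $V_1(H)$ has $G$-degree at least $1/p'$, which is exactly the assertion (holding on the same high-probability event as Lemma~\ref{voneh}).

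All the genuine work here has already been carried out in establishing Lemma~\ref{voneh}: one combines Lemma~\ref{split} (a vertex with small $H$-degree has small $G$-degree unless its partner on a spurious edge has large $G$-degree) with Lemma~\ref{nlpp} (at most $2/p'$ vertices of $G$ have degree exceeding $1/p'$) to bound the excess $H$-degree of any low-$G$-degree vertex by $2/p'$, hence below $3/p'$. Consequently there is no real obstacle in the present lemma; it merely records the lower bound $d_v\ge 1/p'$ so that, in the subsequent rounds, one can afford to spend $O(d_v\log n)$ queries per vertex $v\in V_1(H)$ when learning its neighbourhood inside an independent set, in the spirit of Lemma~\ref{FinfN}.
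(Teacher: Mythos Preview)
Your proposal is correct and matches the paper's approach exactly: the paper states this lemma immediately after Lemma~\ref{voneh} with only the one-line justification ``Since $V_1(H)\subseteq V_3$ we have that\ldots'', which is precisely the containment argument you spell out. There is nothing more to add.
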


This is one of the main properties that we will use in the sequel.

Let $Q$ be a $p$-random query and $u\in V$ be a vertex of degree $d_u\ge 1/p'$. Let $N_{u,G}(p)$ be the probability that ${\cal O}_G(Q\cup \{u\})=0$. As before, $N_{u,G}(p)$ is monotonically decreasing and $N_{u,G}(kp)\le N_{u,G}(p)^k$.
Let $p_u$ be the probability such that
\begin{eqnarray}\label{eee}
N_{u,G}(p_u)= e^{-1}.
\end{eqnarray}

Next, we show that estimating $p_u$ implies estimating $d_u$.
Since
$$e^{-1}=N_{u,G}(p_u)\le (1-p_u)^{d_u} \le e^{-p_ud_u}$$ we have $p_u\le 1/d_u\le p'$.
Notice that $N_G(p_u)\ge N_G(p')\ge N_G(p_*)=1/2$. Now by Lemma~\ref{4.1},
$$e^{-1}=N_{u,G}(p_u)\ge (1-p_u)^{d_u} N_G(p_u)\ge \frac{1}{2}(1-{d_u}p_u).$$  Therefore
\begin{eqnarray}\label{kkk}
1-\frac{2}{e}\le p_ud_u\le 1.
\end{eqnarray}

\begin{lem}\label{EDM} Let $M\in [0,n]$.
The procedure {\bf EstimateDegree}(${\cal O}_{G},u, M$) asks $\Theta((\log M)(\log n))$ queries
and either output $p'_u\ge 1/(2M)$ such that
$p_u\ge p'_u\ge p_u/8$ and then $d_u\le 32M$ or proclaims that $d_u>M$.
\end{lem}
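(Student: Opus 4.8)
The plan is to mimic the structure of the procedure {\bf Estimate} and its analysis in Lemma~\ref{lmln}, but working with the conditional no-answer probability $N_{u,G}(p)$ in place of $N_G(p)$ and with the calibration point $N_{u,G}(p_u)=e^{-1}$ in place of $N_G(p_*)=1/2$. First I would describe the procedure {\bf EstimateDegree}$({\cal O}_G,u,M)$: for each $p_i=1/2^i$ with $2^i$ up to roughly $\Theta(\sqrt M)$ (more precisely up to $2M$, so that a failure to calibrate certifies $d_u>M$ via $p_u\le 1/d_u$), take $t=\Theta(\log n)$ independent $p_i$-random queries $Q_{i,1},\dots,Q_{i,t}$, query each $Q_{i,j}\cup\{u\}$, and set $q_i=(\sum_j{\cal O}_G(Q_{i,j}\cup\{u\}))/t$; choose the first $i_0$ with $1-q_{i_0}>1/e$ (up to a small Chernoff slack), output $p'_u=p_{i_0}/c_0$ for an appropriate constant $c_0$, or if no such $i_0$ exists, proclaim $d_u>M$. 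The number of queries is $O(\log M)$ scales $\times\,\Theta(\log n)$ per scale $=\Theta((\log M)(\log n))$, which gives the claimed complexity.

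The correctness argument runs exactly parallel to the proof of Lemma~\ref{lmln}. Let $k$ be such that $p_u/2<p_k\le p_u$. Using monotonicity of $N_{u,G}$ and the submultiplicativity $N_{u,G}(jp)\le N_{u,G}(p)^j$ (which holds for the same union-over-queries reason as for $N_G$, since $Q_1\cup Q_2\cup\{u\}$ is the union of the two sets $Q_i\cup\{u\}$ and a no-answer on the union forces a no-answer on each), I would show: for $i\le k-\Theta(1)$, $N_{u,G}(p_i)$ is bounded well below $e^{-1}$, so a Chernoff bound on $X_j=1-{\cal O}_G(Q_{i,j}\cup\{u\})$ with $t=\Theta(\log n)$ makes $\Pr[i_0\le k-\Theta(1)]\le 1/\poly(n)$; and for $i\ge k+\Theta(1)$, $N_{u,G}(p_i)$ is bounded well above $e^{-1}$ (here I would use Lemma~\ref{4.1} to lower-bound $N_{u,G}(p)\ge (1-p)^{d_u}N_G(p)$ together with $N_G(p)\ge 1/2$, to control how fast $N_{u,G}$ can approach $1$), so again a Chernoff bound gives $\Pr[i_0\ge k+\Theta(1)]\le1/\poly(n)$. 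Hence w.h.p.\ $i_0\in[k-\Theta(1),k+\Theta(1)]$, so $p_{i_0}=\Theta(p_u)$ and, choosing the constant $c_0$ so that $p'_u=p_{i_0}/c_0$ lands in $[p_u/8,p_u]$, we get $p_u\ge p'_u\ge p_u/8$. Then $d_u\le 32M$ follows: if the procedure returned $p'_u$ it did so at a scale with $p'_u\ge 1/(2M)$ (the smallest scale tested), and $p'_u\ge p_u/8\ge 1/(8d_u)$ by~(\ref{kkk}), giving $d_u\ge 1/(8p'_u)$; combined with the estimate range this yields the stated polynomial relation (the precise constant $32$ is absorbed into the choice of the scale bound and $c_0$). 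Conversely, if no $i_0$ is found within the tested range, then w.h.p.\ $p_u$ is smaller than the smallest tested scale $\approx 1/(2M)$, and since $p_u\ge (1-2/e)/d_u$ by~(\ref{kkk}) this forces $d_u>M$.

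The main obstacle I anticipate is pinning down the window on $N_{u,G}$ around the calibration value $e^{-1}$: unlike the $N_G$ case, where $N_G(2p_*)\le N_G(p_*)^2=1/4$ gives a clean constant gap below and $N_G(p_*/2)\ge N_G(p_*)^{1/2}>0.7$ gives one above, here I need that $N_{u,G}$ does not hug $1$ too tightly just below $p_u$ — and that requires invoking Lemma~\ref{4.1} (which needs $I=\{u\}$ to be an independent set, i.e.\ no loop at $u$, which is automatic for simple graphs) to say $N_{u,G}(p)\ge (1-p)^{d_u}N_G(p_*)\ge \tfrac12(1-p)^{d_u}$, and then tracking how $(1-p)^{d_u}$ moves as $p$ halves relative to $p_u\asymp 1/d_u$. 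Once the two one-sided constant gaps around $e^{-1}$ are established, the Chernoff bookkeeping and the query count are routine and identical in form to Lemma~\ref{lmln}.
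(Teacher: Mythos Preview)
Your proposal is essentially the paper's own argument: define $k$ by $p_u/2<p_{u,k}\le p_u$, use submultiplicativity of $N_{u,G}$ and Chernoff to trap $i_{u,0}$ in $\{k-1,k,k+1\}$, read off $p_u\ge p_u'\ge p_u/8$, and deduce the $d_u$ bounds from~(\ref{kkk}).

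One point worth flagging: the step you call ``the main obstacle'' is not one. You already have the submultiplicativity $N_{u,G}(2q)\le N_{u,G}(q)^2$; applied with $q=p_u/2$ this gives directly
\[
N_{u,G}(p_u/2)\ \ge\ N_{u,G}(p_u)^{1/2}\ =\ e^{-1/2}\ >\ 0.6,
\]
which is the clean constant gap above $1/e$ that the paper uses. There is no need to go through Lemma~\ref{4.1} and track $(1-p)^{d_u}$; in fact that route gives only $N_{u,G}(p_u/2)\ge \tfrac12(1-d_up_u/2)\ge 1/4$, which is \emph{below} $1/e$, so you would be forced to widen the window and fight the constants. Also, your tentative scale cutoff ``up to $2M$'' is too small to certify $d_u>M$ from~(\ref{kkk}); the paper runs the scales up to $2^i\le 16M$ precisely so that $16M<4/p_u\le 4d_u/(1-2/e)$ yields $d_u>M$.
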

  \begin{center}
   \fbox{\fbox{\begin{minipage}{28em}
  \begin{tabbing}
  xxxx\=xxxx\=xxxx\=xxxx\=xxxx\= \kill
  \>\underline{{\bf EstimateDegree(${\cal O}_{G},u, M$)}}\\ 

  1. \>For each $p_{u,i}=1/2^i$ such that $2^i\le 16M$\\
  2. \>\> for $t=\Theta(\log n)$ independent $p_{u,i}$-queries $Q_{i,1},\ldots,Q_{i,t}$ do:\\
  3. \>\>\> $q_{u,i}=({\cal O}_G(Q_{i,1}\cup\{u\})+\cdots+{\cal O}_G(Q_{i,t}\cup\{u\}))/t $.\\
  4. \>Choose the first $p'_u:=p_{i_{u,0}}/2$ such that $1-q_{u,i_{u,0}}>1/e$.\\
  5. \>If no such $i_{u,0}$ exists then output(``$d_u>M$'').\\
  6. \>Otherwise output($p_u'$)  $\backslash\star\  p_u\ge p_u'\ge p_u/8\  \star\backslash$.
  \end{tabbing}\end{minipage}}}
  \end{center}

\begin{proof}
Let $k$ be such that $p_u/2<p_{u,k}\le p_u$. Then $p_{u,k-2}>2p_u$ and therefore for all $i\le k-2$, $N_{u,G}(p_{u,i})\le N_{u,G}(p_{u,k-2})<N_{u,G}(2p_u)<N_{u,G}(p_u)^2\le 1/e^2$. Therefore, ($\E[1-q_{u,i}]=N_{u,G}(p_{u,i})\le  1/e^2$)
$$\Pr[i_{u,0}\le k-2]\le \Pr[(\exists i\le k-2)1-q_{u,i}>1/e]\le 1/\poly(n).$$
Therefore, w.h.p, $i_{u,0}\ge k-1$.
Now, $p_{u,k+1}=p_{u,k}/2\le p_u/2$ and therefore $N_{u,G}(p_{u,k+1})>N_{u,G}(p_u/2)>N_{u,G}(p_u)^{1/2}>0.6$. Therefore
$$\Pr[i_{u,0}\ge k+2]\le \Pr[1-q_{u,k+1}<1/e]\le 1/\poly(n).$$
Therefore, w.h.p $k-1\le i_{u,0}\le k+1$ and then
$2p_u\ge p_{u,k-1}\ge p_{u,i_{u,0}}\ge p_{u,k+1}\ge p_u/4$. Therefore w.h.p
$$p_u\ge p'_u=p_{u,i_{u,0}}/2\ge p_u/8.$$
Now, by (\ref{kkk}) and step 1 in the procedure,
$$d_u\le \frac{1}{p_u}\le \frac{1}{p_u'}=\frac{2}{p_{u,i_{u,0}}}\le 32M.$$

Let $i'$ be such that $2^{i'}\le 16M$ and $2^{i'+1}>{16M}$.
If no such $i_{u,0}$ exists then $i'\le k$ and therefore by (\ref{kkk}),
$$16M<2^{i'+1}\le 2^{k+1}=\frac{1}{p_{u,k+1}}\le \frac{4}{p_u}\le\frac{4 d_u}{(1-2/e)}$$
and then $m\ge d_u> M$.
\end{proof}

  \begin{center}
   \fbox{\fbox{\begin{minipage}{28em}
  \begin{tabbing}
  xxxx\=xxxx\=xxxx\=xxxx\=xxxx\= \kill
  \> \underline{{\bf $k$-EstimateDegree}(${\cal O}_G,u$)}\\ 

  1. \>For $j=k-1$ downto $0$\\
  2. \>\> $M_j\gets (\log^{[j]}n)^2$.\\
  3. \>\>{\bf EstimateDegree}(${\cal O}_G,u,M_j$)\\
  4. \>\> If the output is $p_u'$ then Goto 6.\\
  5. \> EndFor.\\
  6. \>Output($p_u'$)  $\backslash\star\  p_u\ge p_u'\ge p_u/8\  \star\backslash$.\\
  \end{tabbing}\end{minipage}}}
  \end{center}

We now show
\begin{lem}\label{ED02} The procedure {\bf $k$-EstimateDegree}(${\cal O}_G,u$) runs in $k$ rounds, with probability $1-1/\poly(n)$, asks $$O((\log^{[k]}n)(\log n)+\sqrt{m}\log n)$$ queries
and outputs $p_u'$ such that
$p_u\ge p_u'\ge p_u/8$.
\end{lem}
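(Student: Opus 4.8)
The plan is to re-run, essentially verbatim, the analysis given above for {\bf $k$-Estimate}$({\cal O}_G)$, now with {\bf Estimate} replaced by {\bf EstimateDegree} and Lemma~\ref{lmln} replaced by Lemma~\ref{EDM}; the role formerly played by the bound $p_*\ge 1/\sqrt{2m}$ is now played by the trivial bound $d_u=\deg_G(u)\le m$. First I would note that each invocation {\bf EstimateDegree}$({\cal O}_G,u,M)$ is non-adaptive — all of its $\Theta((\log M)(\log n))$ queries can be issued in a single round, since step~4 only reads off answers already collected — whereas {\bf $k$-EstimateDegree}$({\cal O}_G,u)$ makes the calls with parameters $M_{k-1},M_{k-2},\dots$ one at a time, launching the next only after the current one has proclaimed ``$d_u>M$''. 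Hence the whole procedure runs in at most $k$ rounds.

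For correctness, observe that $M_0=(\log^{[0]}n)^2=n^2>m\ge d_u$, so by Lemma~\ref{EDM} the call with parameter $M_0$ — which is reached at the latest — returns some value $p_u'$; therefore the loop halts and outputs a $p_u'$, and whenever {\bf EstimateDegree} returns a value it satisfies $p_u\ge p_u'\ge p_u/8$. Taking a union bound over the at most $k$ calls to {\bf EstimateDegree} — each of which behaves as Lemma~\ref{EDM} asserts with probability $1-1/\poly(n)$ — we get that, with probability $1-1/\poly(n)$: if $j$ denotes the index at which the loop halts, then every earlier call correctly proclaimed ``$d_u>M_i$'', so $d_u>M_i$ for all $i\in\{j{+}1,\dots,k{-}1\}$, and the returned $p_u'$ satisfies $p_u\ge p_u'\ge p_u/8$.

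It remains to bound the number of queries, which (once $j$ is fixed) is the deterministic quantity $\sum_{i=j}^{k-1}\Theta((\log M_i)(\log n))$. Exactly as in the {\bf $k$-Estimate} argument, $\log M_i=2\log^{[i+1]}n$ and these quantities grow at least geometrically as $i$ decreases, so the sum is $O((\log M_j)(\log n))$. If $j=k-1$ this is $O((\log^{[k]}n)(\log n))$. If $j\le k-2$, then the call with $M_{j+1}$ proclaimed ``$d_u>M_{j+1}$'', so $d_u>M_{j+1}$; and since $M_j=(\log^{[j]}n)^2$ while $M_{j+1}=(\log^{[j+1]}n)^2$, we get $\log M_j=2\log^{[j+1]}n=2\sqrt{M_{j+1}}<2\sqrt{d_u}\le 2\sqrt m$, so the count is $O(\sqrt m\,\log n)$. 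Combining the two cases yields the claimed $O((\log^{[k]}n)(\log n)+\sqrt m\,\log n)$.

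The one point that really has to be gotten right — and the reason the second term is $\sqrt m\,\log n$ rather than $m\log n$ — is that each $M_i$ is a perfect square, so that the dominating call costs $\Theta((\log M_j)(\log n))$ with $\log M_j=2\sqrt{M_{j+1}}$, while the preceding ``$d_u>M_{j+1}$'' proclamation forces $M_{j+1}<d_u\le m$; this square-root saving is precisely what makes the bound useful when it is later summed over the $O(\sqrt m)$ vertices of $V_1(H)$. Everything else is routine bookkeeping of the indices and of the identities $M_j=(\log^{[j]}n)^2$ and $\log M_j=2\log^{[j+1]}n$, so I do not anticipate any genuine obstacle.
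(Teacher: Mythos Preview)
Your proposal is correct and follows essentially the same approach as the paper's proof: both argue termination via $M_0=n^2>d_u$, then split the query count according to whether the loop halts at the first step or later, and in the latter case use the key identity $\log M_j=2\sqrt{M_{j+1}}$ together with $M_{j+1}<d_u\le m$ (the paper bounds $M_{j+1}<m$ directly by choosing $j$ via $m$, but this is the same thing). Your added remarks on non-adaptivity of each call and on the geometric decay of $\log M_i$ make the argument slightly more explicit than the paper's, but there is no substantive difference.
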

\begin{proof} For $j=0$ we have $M_0=(\log^{[0]}n)^2=n^2$ and since $m<M_0$, by Lemma~\ref{EDM}, the algorithm must stop and output such $p_u'$. It remains to prove the query complexity.

If {\bf EstimateDegree}(${\cal O}_G,u,M_{k-1})$ returns $p_u'$ then, by Lemma~\ref{EDM}, the algorithm asks $$\Theta((\log M_{k-1})(\log n))=\Theta((\log^{[k]}n)(\log n))$$ queries. Otherwise, $m\ge d_u>M_{k-1}$.
Let $j$ be such that $M_{j+1}< m \le M_j$. Then $p_u'$ is returned by some {\bf EstimateDegree}(${\cal O}_G,u,M_{j'})$ where $j'\ge j$. Therefore the number of queries is $O$ of
$$\sum_{i=j'}^k (\log M_i)(\log n)\le 2 (\log M_j)(\log n)=4\sqrt{M_{j+1}}\log n\le 4\sqrt{m}\log n.$$
\end{proof}
In particular,
\begin{corollary} The procedure {\bf $\log^*n$-EstimateDegree}(${\cal O}_G$) runs in $\log^*n$ rounds, with probability $1-1/\poly(n)$, asks $O(\sqrt{m}\log n)$ queries
and outputs $p_u'$ such that
$p_u\ge p_u'\ge p_u/8$.
\end{corollary}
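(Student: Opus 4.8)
The plan is to obtain this as the special case $k=\log^*n$ of Lemma~\ref{ED02}. First I would recall the defining property of the iterated logarithm: $\log^*n$ is (under the usual convention) the least number of applications of $\log$ needed to bring $n$ down to a constant, so that $\log^{[\log^*n]}n=O(1)$. Substituting $k=\log^*n$ into the query bound $O((\log^{[k]}n)(\log n)+\sqrt m\log n)$ of Lemma~\ref{ED02} then turns the first term $(\log^{[k]}n)(\log n)$ into $O(\log n)$, leaving a total of $O(\log n+\sqrt m\log n)$.

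Next I would absorb the $\log n$ term into the $\sqrt m\log n$ term: the procedure {\bf EstimateDegree} is only ever invoked on vertices $u$ with $d_u\ge 1/p'\ge 1$, so $m\ge 1$ and hence $\log n=O(\sqrt m\log n)$, giving the claimed bound $O(\sqrt m\log n)$. The remaining assertions require no further work: {\bf $\log^*n$-EstimateDegree} is by definition {\bf $k$-EstimateDegree} with $k=\log^*n$, so it runs in $\log^*n$ rounds, and Lemma~\ref{ED02} already guarantees that with probability $1-1/\poly(n)$ it outputs a value $p_u'$ with $p_u\ge p_u'\ge p_u/8$.

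There is essentially no obstacle here: the corollary is a pure instantiation of Lemma~\ref{ED02}, and the only line needing a moment's thought is the standard estimate $\log^{[\log^*n]}n=O(1)$. All the substantive work — the doubling search over scales $M_j=(\log^{[j]}n)^2$, the Chernoff concentration at each scale (via Lemma~\ref{EDM}), and the round-by-round accounting that makes the query cost telescope to $O(\sqrt m\log n)$ — has already been carried out in the proofs of Lemma~\ref{EDM} and Lemma~\ref{ED02}.
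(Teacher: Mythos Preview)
Your proposal is correct and matches the paper's approach: the corollary is stated immediately after Lemma~\ref{ED02} with the words ``In particular,'' and no separate proof, so it is indeed meant as the direct instantiation $k=\log^*n$, using $\log^{[\log^*n]}n=O(1)$ to collapse the first term of the query bound.
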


In particular
\begin{lem} The degrees of all the vertices $u$ in $V_1(H)$ can be estimated with probability $1-1/\poly(n)$
by $1/p_u'$ that falls in the range $[d_u,31d_u]$ with
\begin{enumerate}
\item $k$-round algorithm that asks at most $O((\log^{[k]}n)\sqrt{m}\log n+m\log n)$ queries.
\item $\log^*n$-round algorithm that asks at most $O(m\log n)$ queries.
\end{enumerate}
\end{lem}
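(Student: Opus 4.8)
The plan is to run the procedure \textbf{$k$-EstimateDegree}$({\cal O}_G,u)$ (resp.\ \textbf{$\log^*n$-EstimateDegree}$({\cal O}_G,u)$) simultaneously for every vertex $u\in V_1(H)$, executing the corresponding rounds of all these runs in parallel so that the total number of rounds stays $k$ (resp.\ $\log^*n$) rather than being multiplied by $|V_1(H)|$. First I would invoke Lemma~\ref{voneh}, which gives that w.h.p.\ $|V_1(H)|\le 8\sqrt{2m}$, together with the lemma immediately preceding this one, which gives that every $u\in V_1(H)$ has $d_u=\deg_G(u)\ge 1/p'$; this last fact is exactly the hypothesis under which $p_u$ and the whole \textbf{EstimateDegree} analysis (Lemma~\ref{EDM}, Lemma~\ref{ED02}) were set up, so all those results apply to each such $u$.

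Next I would apply Lemma~\ref{ED02} (resp.\ its corollary) to each fixed $u\in V_1(H)$: with probability $1-1/\poly(n)$ the run for $u$ asks $O((\log^{[k]}n)\log n+\sqrt m\log n)$ queries (resp.\ $O(\sqrt m\log n)$ queries) and returns $p_u'$ with $p_u\ge p_u'\ge p_u/8$. A union bound over the $O(\sqrt m)=O(\poly(n))$ vertices of $V_1(H)$, absorbing the extra polynomial factor into the $\poly(n)$ in the denominator, shows that w.h.p.\ all these runs succeed at once. Summing the per-vertex query counts then gives, for the $k$-round version, $|V_1(H)|\cdot O((\log^{[k]}n)\log n+\sqrt m\log n)=O(\sqrt m)\cdot O((\log^{[k]}n)\log n)+O(\sqrt m)\cdot O(\sqrt m\log n)=O((\log^{[k]}n)\sqrt m\log n+m\log n)$, and for the $\log^*n$-round version $O(\sqrt m)\cdot O(\sqrt m\log n)=O(m\log n)$, as claimed.

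It then remains to turn the estimate $p_u'$ of $p_u$ into an estimate of the degree. From~(\ref{kkk}) we have $(1-2/e)/d_u\le p_u\le 1/d_u$, and since $p_u/8\le p_u'\le p_u$ this yields $1/p_u'\ge 1/p_u\ge d_u$ on the one side and $1/p_u'\le 8/p_u\le 8d_u/(1-2/e)<31\,d_u$ on the other, using $8/(1-2/e)<31$. Hence w.h.p.\ $1/p_u'\in[d_u,31d_u]$ for every $u\in V_1(H)$, which is precisely the statement.

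The argument is essentially routine once Lemma~\ref{ED02} is in hand, so the only delicate points are bookkeeping rather than mathematics: one must verify that the $O(\sqrt m)$ per-vertex guarantees combine under a single union bound without inflating the round count (which works because the runs are carried out in parallel and the $1/\poly(n)$ failure probability of Lemma~\ref{ED02} comfortably survives multiplication by $|V_1(H)|\le 8\sqrt{2m}$), and that the explicit constant $8/(1-2/e)$ is indeed below $31$ so that the degree estimate lands in the stated window $[d_u,31d_u]$.
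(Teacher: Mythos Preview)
Your proposal is correct and follows essentially the same approach as the paper: invoke Lemma~\ref{voneh} to bound $|V_1(H)|=O(\sqrt m)$, apply Lemma~\ref{ED02} (and its corollary) per vertex, and use~(\ref{kkk}) together with $p_u/8\le p_u'\le p_u$ to get $d_u\le 1/p_u'\le 8d_u/(1-2/e)\le 31d_u$. Your write-up is in fact more explicit than the paper's (which omits the union bound and the parallel-round bookkeeping), but the mathematical content is identical.
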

\begin{proof}
By Lemma~\ref{voneh}, w.h.p $|V_1(H)|=O(\sqrt{m})$ and by (\ref{kkk}) and Lemma~\ref{ED02}, for every $u\in V_1(H)$, the value of $1/p_u'$ is bounded by
$$d_u\le \frac{1}{p_u}\le \frac{1}{p_u'}\le \frac{8}{p_u}\le \frac{8d_u}{1-2/e}\le 31d_u.$$
\end{proof}

After estimating the degree the algorithm finds the edges of each $v\in V_1(H)$ in $G$.

We first have (the proof is the same as of Lemma~\ref{4.1})
\begin{lem} \label{4.11} Suppose $I$ is an independent set in $G$ and let $\Gamma(I)$ be the set of neighbors of the vertices in $I$.
Let $u$ be a vertex in $G$ such that $u\not\in\Gamma(I)$. For a $p$-random query $Q$ we have
$$\Pr[{\cal O}_G(Q\cup \{u\})=0\ |\ I\subseteq Q]\ge (1-p)^{|\Gamma(I)|}\cdot N_{u,G}(p).$$
\end{lem}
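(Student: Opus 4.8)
The plan is to mirror the proof of Lemma~\ref{4.1}, carrying the fixed extra vertex $u$ through the argument. First I would condition on the event $I\subseteq Q$ and write $Q=I\cup R$, where $R$ is a $p$-random subset of $V\setminus I$ (each vertex of $V\setminus I$ is placed in $R$ independently with probability $p$). One may assume $u\notin I$: otherwise ${\cal O}_G(Q\cup\{u\})={\cal O}_G(Q)$, and since adjoining $u$ to a query can never turn a YES answer into a NO we have $N_{u,G}(p)\le N_G(p)$, so the claim already follows from Lemma~\ref{4.1}.

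The key structural step is the equivalence, valid under the conditioning $I\subseteq Q$,
\[
{\cal O}_G(I\cup R\cup\{u\})=0\iff \big(R\cap\Gamma(I)=\emptyset\big)\ \text{and}\ \big({\cal O}_G(R\cup\{u\})=0\big).
\]
To prove it I would split the potential edges inside $I\cup R\cup\{u\}$ into four types: edges inside $I$ (none, since $I$ is independent); edges between $I$ and $R$ (since all of $I$ lies in the query, such an edge occurs iff $R$ contains a neighbour of $I$, i.e.\ iff $R\cap\Gamma(I)\ne\emptyset$); edges between $I$ and $u$ (none, because $u\notin\Gamma(I)$); and edges inside $R\cup\{u\}$ (captured exactly by ${\cal O}_G(R\cup\{u\})=0$).

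Given the equivalence, I would factor the conditional probability through the event $R\cap\Gamma(I)=\emptyset$. That event has probability $(1-p)^{|\Gamma(I)|}$, and once it is imposed the remaining randomness in $R$ is exactly that of a $p$-random subset $R_W$ of $W:=V\setminus(I\cup\Gamma(I))$ (the vertices of $\Gamma(I)$ are forced out; the other choices are untouched). Hence $\Pr[{\cal O}_G(Q\cup\{u\})=0\mid I\subseteq Q]=(1-p)^{|\Gamma(I)|}\cdot\Pr[{\cal O}_G(R_W\cup\{u\})=0]$. A monotonicity observation then finishes the proof: for a $p$-random $Q'\subseteq V$, the set $Q'\cap W$ has the same distribution as $R_W$, and ${\cal O}_G(Q'\cup\{u\})=0$ implies ${\cal O}_G((Q'\cap W)\cup\{u\})=0$ because a subset of an edge-free set is edge-free; therefore $\Pr[{\cal O}_G(R_W\cup\{u\})=0]\ge N_{u,G}(p)$, and combining gives the stated bound.

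The main obstacle is simply getting the case analysis in the equivalence exactly right — in particular the assertion that an $I$--$R$ edge is present precisely when $R$ meets $\Gamma(I)$ (which uses that all of $I$ is in the query) and that $u$ contributes no edge to $I$ because $u\notin\Gamma(I)$. Once that is pinned down, the factorization through $R\cap\Gamma(I)=\emptyset$ and the subset-monotonicity step are routine, exactly as in the proof of Lemma~\ref{4.1}.
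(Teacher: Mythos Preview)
Your proposal is correct and follows exactly the route the paper intends: the paper gives no separate argument for Lemma~\ref{4.11} beyond the parenthetical remark that ``the proof is the same as of Lemma~\ref{4.1}'', and what you have written is precisely that proof with the fixed vertex $u$ carried along. The equivalence you set up, the factorization through $R\cap\Gamma(I)=\emptyset$, and the monotonicity comparison with a full $p$-random query are the standard steps behind Lemma~\ref{4.1}; your handling of the side case $u\in I$ via $N_{u,G}(p)\le N_G(p)$ is a clean way to dispose of it.
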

We now prove

\begin{lem}
The edges of each $v\in V_1(H)$ can be learned with probability $1-1/\poly(n)$ in $O(m\log n)$ queries
\end{lem}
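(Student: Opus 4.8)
The plan is, for each $v\in V_1(H)$, to reduce the task to a group-testing-style elimination carried out over the low-degree side $V_2(H)$, where we can afford a sampling probability proportional to $1/d_v$ (writing $d_v:=\deg_G(v)$), and to dispose of the few remaining high-degree interactions by brute force. Recall that w.h.p.\ $|V_1(H)|=O(\sqrt m)$ (Lemma~\ref{voneh}); that every $u\in V_2(H)$ has $\deg_G(u)\le\deg_H(u)<3/p'$; that for each $v\in V_1(H)$ we have already computed $p_v'$ with $p_v\ge p_v'\ge p_v/8$, so that $d_vp_v'\le d_vp_v\le 1$ by~(\ref{kkk}) and $p_v'\le p_v\le 1/d_v\le p'\le p_*$; and consequently $N_G(p_v')\ge N_G(p_*)=1/2$.

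Fix $v\in V_1(H)$ and put $p:=p_v'/16$. I would run the following one-round elimination: set $N_v\gets V_2(H)$; repeat $t_v=\Theta(d_v\log n)$ times, drawing each time a fresh $p$-random $Q\subseteq V_2(H)$, asking ${\cal O}_G(Q\cup\{v\})$, and deleting $Q$ from $N_v$ whenever the answer is $\NO$. A $\NO$-answer certifies that $Q\cup\{v\}$ is independent, so no neighbour of $v$ is ever removed and $\Gamma_G(v)\cap V_2(H)\subseteq N_v$ always. Conversely, let $u\in V_2(H)$ with $\{u,v\}\notin E(G)$; conditioning on $u\in Q$, the remainder $Q''$ of $Q$ is a $p$-random subset of $V_2(H)\setminus\{u\}$, and $Q''\cup\{u,v\}$ is independent iff $Q''$ is independent and $\Gamma_G(\{u,v\})\cap Q''=\emptyset$, so
\[
\Pr[\,u\in Q\ \wedge\ {\cal O}_G(Q\cup\{v\})=\NO\,]=p\cdot\Pr[\,Q''\cup\{u,v\}\text{ independent}\,]\ \ge\ p\bigl(N_G(p)-(d_u+d_v)p\bigr),
\]
where the first term holds because deleting vertices cannot destroy independence (hence $\Pr[Q''\text{ indep}]\ge N_G(p)$) and the second by a union bound over $\Gamma_G(\{u,v\})$, of size $\le d_u+d_v$. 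Now $N_G(p)\ge N_G(p_v')\ge 1/2$, while $d_vp=d_vp_v'/16\le 1/16$ and $d_up<(3/p')(p_v'/16)\le 3/16$ since $\deg_G(u)<3/p'$ and $p_v'\le p'$; therefore the probability is at least $p(1/2-1/4)=p/4=\Theta(1/d_v)$. Over the $t_v=\Theta(d_v\log n)$ independent iterations, $u$ survives with probability at most $(1-\Theta(1/d_v))^{t_v}\le 1/\poly(n)$, and a union bound over the at most $n$ non-neighbours $u\in V_2(H)$ gives w.h.p.\ $N_v=\Gamma_G(v)\cap V_2(H)$ exactly.

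Finally, in the same round, I would ask the query $\{v,w\}$ for every pair $v,w\in V_1(H)$; together with the sets $N_v$ this recovers $\Gamma_G(v)$ for every $v\in V_1(H)$, i.e.\ all edges of $G$ incident to $V_1(H)$ (edges with both endpoints in $V_2(H)$ are already known, by Lemma~\ref{split}). The elimination costs $\sum_{v\in V_1(H)}t_v=O\!\bigl(\log n\sum_{v\in V_1(H)}d_v\bigr)=O(m\log n)$ since $\sum_v d_v\le 2m$, and the pairwise queries cost $\binom{|V_1(H)|}{2}=O(|V_1(H)|^2)=O(m)$ since $|V_1(H)|=O(\sqrt m)$; the total is $O(m\log n)$, in a single round beyond the degree-estimation rounds. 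The main point to get right is the lower bound $\Pr[u\in Q\wedge{\cal O}_G(Q\cup\{v\})=\NO]=\Omega(1/d_v)$ for non-neighbours $u$: it needs $N_G(p)$ bounded away from $0$ (guaranteed by $p\le p_v'\le p_*$, \emph{not} by the weaker $1-mp^2$ estimate, which can be negative when $v$ has small degree) together with $(d_u+d_v)p\le 1/4$, and the latter is exactly why the elimination must be confined to $V_2(H)$—where $\deg_G(u)<3/p'$—with $p=\Theta(p_v')=\Theta(1/d_v)$ chosen small, the high-degree vertices being peeled off and handled by the $O(m)$ brute-force pairwise queries instead.
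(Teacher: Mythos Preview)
Your argument is correct. It differs from the paper's in two linked ways. The paper (writing $u\in V_1(H)$ for the high-degree vertex) samples $p_u'$-random queries over all of $V$ and invokes Lemma~\ref{4.11} together with $N_{u,G}$: for any non-neighbour $w$ it bounds $\Pr[w\in Q,\ {\cal O}_G(Q\cup\{u\})=0]\ge p_u'(1-p_u')^{|\Gamma(w)|}N_{u,G}(p_u')$ and asserts this is $\Omega(1/d_u)$, without separating off $V_1(H)$--to--$V_1(H)$ pairs. You instead confine $Q$ to $V_2(H)$, use only $N_G$ (not $N_{u,G}$), and replace Lemma~\ref{4.11} by the elementary inequality $\Pr[Q''\cup\{u,v\}\text{ indep}]\ge N_G(p)-(d_u+d_v)p$; the residual pairs inside $V_1(H)$ are then handled by an $O(m)$ brute-force sweep. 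What your detour buys is a transparent control of the neighbour term: restricting the tested vertex to $V_2(H)$ forces its $G$-degree below $3/p'$, so $(d_u+d_v)p\le 1/4$ is immediate, whereas the paper's all-of-$V$ argument has to control $(1-p_u')^{d_w}$ for arbitrary $w$ (and the exponent $1/p_u'$ it records is not obviously an upper bound on $d_w$ when the non-neighbour $w$ itself has high degree). The price you pay is the extra $O(m)$ pairwise queries, negligible against $O(m\log n)$.
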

\begin{proof}
Let $u\in V_1(H)$. Let $Q$ be a $p_u'$-random query and $\{u,v\}\not\in E$. By Lemma~\ref{4.11}, Lemma~\ref{ED02}, (\ref{eee})
\begin{eqnarray*}
\Pr[{\cal O}_G(Q\cup \{u\})=0\mbox{\ and \ } v\in Q]&=&\Pr[v\in Q]\cdot\Pr[{\cal O}_G(Q\cup \{u\})=0\ |\ v\in Q]\\
&=& p_u'\cdot (1-p_u')^{1/p_u'}N_{u,G}(p_u')\\
&\ge & \frac{p_u}{8} (1-p_u')^{1/p_u'}N_{u,G}(p_u)\\
&\ge& \frac{1-2/e}{8d_u} \frac{1}{4}e^{-1}\\
&\ge& \frac{0.003}{d_u}.
\end{eqnarray*}
Therefore each $p_u'$-random query discover that $\{u,v\}$ is not an edge with probability at least $0.003/d_u$.
Therefore $O(d_u\log n)$ queries are enough to find all the neighbours of $u$ with probability $1-1/\poly(n)$.
The total number of queries is $O$ of
$$\sum_{u\in V_1(H)} d_u\log n\le 2m\log n.$$
\end{proof}
The following is the procedure
  \begin{center}
   \fbox{\fbox{\begin{minipage}{28em}
  \begin{tabbing}
  xxxx\=xxxx\=xxxx\=xxxx\=xxxx\= \kill
  \> \underline{{\bf FindEdges}(${\cal O}_G$,$V_1(H)$)}\\ 

  1. \>For each $u\in V_1(H)$ do.\\
  2. \>\>Choose $t=\Theta((1/p_u')\log n)$ $p_u'$-random queries $Q_1,\ldots,Q_t$\\
  3. \>\>For each query $Q_i$.\\
  4. \>\>\>If ${\cal O}_G(Q_i\cup \{u\})=0$ then\\
  5. \>\>\> For every $v\in Q_i$ do $E(H)\backslash \{\{u,v\}\}$\\
  6. \>EndFor\\
  7. \> Return $(V,E(H))$.
  \end{tabbing}\end{minipage}}}
  \end{center}

\section{Open Problems}
In this section we give some open problems
\begin{enumerate}
\item The problem of whether there is a $O(1)$-round learning algorithm for $m$-Graph with $O(m\log n)$ queries when $m$ is unknown to the learner is still open.
\item Find a tight lower bound for two-round Monte-Carlo algorithm.
\end{enumerate}

\bibliography{bibfile.bib}

\appendix

\begin{figure}
\centering
\includegraphics[trim = 2cm 0cm 0cm 0cm,width=1.5\textwidth]{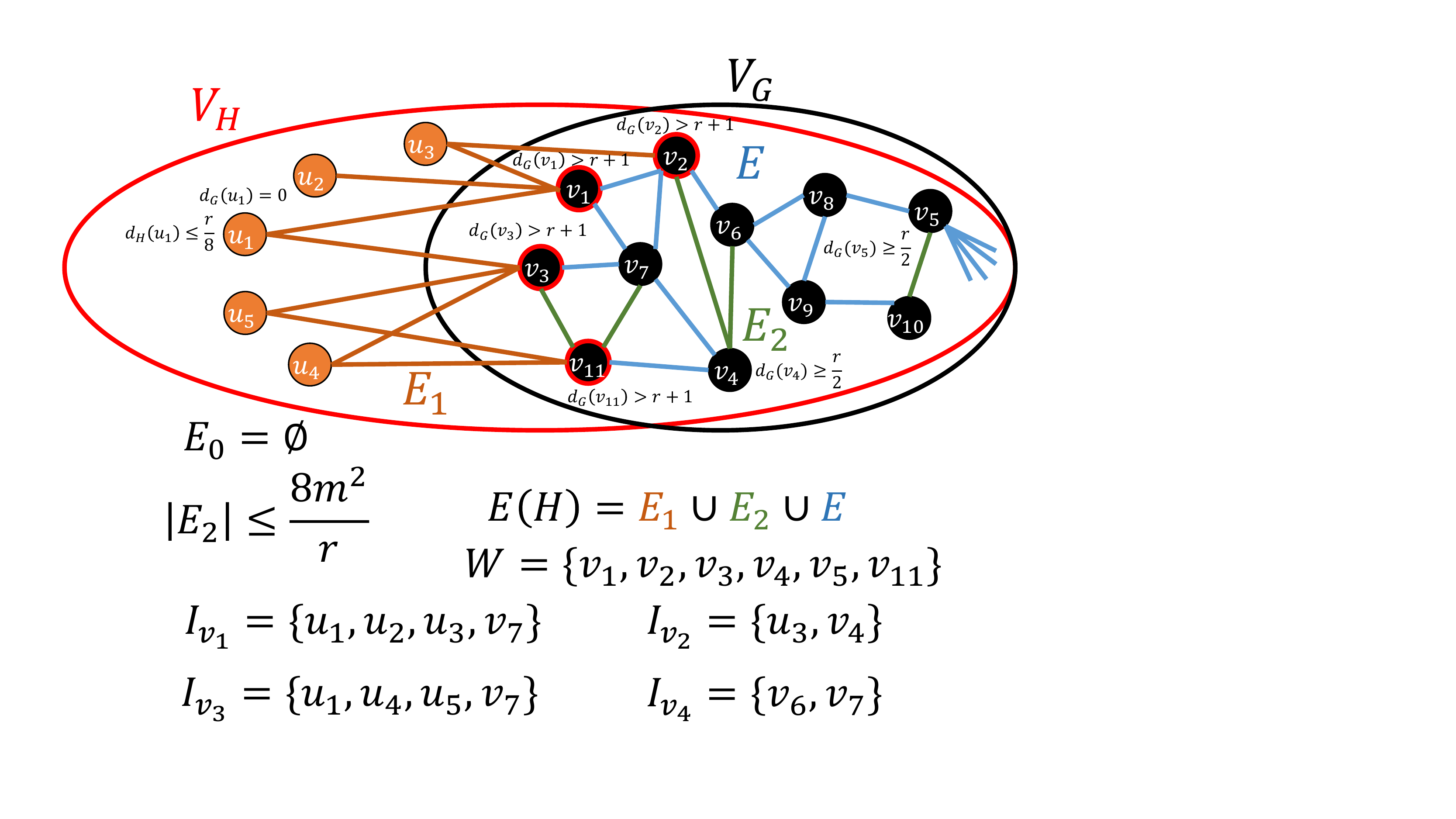}
\caption{An Example}
\label{Proof1}
\end{figure}






\end{document}